\newcommand{\atlasfrechet}{\texttt{ATLAS\_Frechet}}
\newcommand{\tv}{\ensuremath{\vec{\xi}}}
\newcommand{\nv}{\ensuremath{\vec{\nu}}}
\DeclareMathOperator*{\arccot}{arccot}
\NewDocumentCommand{\LeftComment}{s m}{%
  \Statex \IfBooleanF{#1}{\hspace*{\ALG@thistlm}}\(\triangleright\) #2}
\newcommand*{\addFileDependency}[1]{
\typeout{(#1)}
%
%
\@addtofilelist{#1}
%
\IfFileExists{#1}{}{\typeout{No file #1.}}
}\makeatother
\setlist[enumerate]{leftmargin=.5in}
\setlist[itemize]{leftmargin=.5in}
\crefname{hypothesis}{Hypothesis}{Hypotheses}
\title{{Manifold learning and optimization using tangent space proxies}
\thanks{Submitted to the editors Sept.~10, 2024. The last two authors contributed equally.
\funding{L.O. is partially supported by NSF CAREER award 1943510. S.J.R. and R.A.R. are supported in part by the NSF-Simons National Institute for Theory and Mathematics in Biology, jointly funded by NSF award 2235451 and Simons Foundation award MP-TMPS-00005320. S.J.R. is a CZ Biohub Investigator. R.A.R. was supported in part by an NSF Graduate Research Fellowship (214001).}}
}
\author{Ryan A. Robinett\thanks{Department of Computer Science, University of Chicago, Chicago, IL 
  (\email{robinett@uchicago.edu}).}
\and Lorenzo Orecchia\thanks{Department of Computer Science, University of Chicago, Chicago, IL (\email{orecchia@uchicago.edu})}
\and Samantha J. Riesenfeld\thanks{Pritzker School of Molecular Engineering, University of Chicago, Chicago, IL; Department of Medicine, University of Chicago, Chicago, IL; Committee on Immunology, University of Chicago, Chicago, IL; Institute for Biophysical Dynamics, University of Chicago, Chicago, IL; CZ Biohub Chicago, LLC, Chicago, Illinois 60642; NSF-Simons National Institute for Theory and Mathematics in Biology, Chicago, IL 60611(\email{sriesenfeld@uchicago.edu}).}}
\begin{document}

\maketitle

\begin{abstract}
    In machine learning, Riemannian manifolds offer a useful abstraction for approximating commonly encountered, non-Euclidean empirical data distributions and optimization state spaces. While Euclidean machine learning algorithms have been adapted to Riemannian manifolds, these adaptations rely on computationally intensive differential-geometric primitives, such as exponential maps and parallel transports. Here, we present a framework for efficiently approximating differential-geometric primitives on arbitrary manifolds via construction of an \emph{atlas graph} representation, which leverages the canonical characterization of a manifold as a finite collection, or \emph{atlas}, of overlapping \emph{coordinate charts}. We first show the utility of this framework in a setting where the manifold is expressed in closed form, specifically, a runtime advantage, compared with state-of-the-art approaches, for first-order optimization over the Grassmann manifold. Moreover, using point cloud data for which a complex manifold structure was previously established, i.e., high-contrast image patches, we show that an atlas graph with the correct geometry can be directly learned from the point cloud. Finally, we demonstrate that learning an atlas graph enables downstream key machine learning tasks. In particular, we implement a Riemannian generalization of support vector machines that uses the learned atlas graph to approximate complex differential-geometric primitives, including Riemannian logarithms and vector transports. These settings suggest the potential of this framework for even more complex settings, where ambient dimension and noise levels may be much higher. Taken together, our results speed up and render more broadly applicable Riemannian optimization routines at the forefront of modern data science and machine learning.
\end{abstract}

\begin{keywords}
dimensionality reduction, manifold learning, Riemannian optimization
\end{keywords}

\begin{MSCcodes}
68Q25, 68R10, 68U05
\end{MSCcodes}

\section{Introduction}\label{sec:introduction}
Scientists increasingly encounter empirical data that are well characterized by non-Euclidean Riemannian manifolds that lack closed-form algebraic structure. For example, single-cell RNA-sequencing (scRNA-seq) datasets from peripheral blood mononuclear cells, embryonic mouse cells during gastrulation, 
and embryonic mouse brain cells 
all have nontrivial Riemannian geometry with statistically significant scalar curvature \citep{sritharan}. 
Similarly, computational tasks in computer vision, 
such as edge detection, are effectively performed on low-dimensional feature spaces with Riemannian geometry, such as Carlsson's high-contrast natural-image patches, which can be approximated using a parametrization of the Klein bottle \citep{carlsson,wang_huang_wang_2014}. Further, approximating real-world networks as Riemannian manifolds and learning metric and Ricci curvature information induced by end-to-end round trip time between nodes has been shown to elucidate the topology of the private networks of hyperscalers \citep{salamatian_2023}.

At the same time, solution state spaces underlying several problems in machine learning and engineering possess the Riemannian geometry of a manifold with algebraic structure. For example, in online PCA, maintaining a low-dimensional representation of the data depends on efficiently making updates on the Grassmann manifold  $\mathbf{Gr}_{n,k}$ of $k$-dimensional linear subspaces of $\mathbb R^n$ \citep{naik_2018,cardot_2018,hauberg_2016}. Graph partitioning and compression algorithms typically operate on a low-rank approximation of the graph adjacency matrix or Laplacian, which belong to the manifold $\mathbf{S}^+_{n,k}$ of $n\times n$ symmetric, positive-semidefinite (PSD) matrices of rank $k$ \cite[e.g.,][]{arv,cifuentes_2021}. In the realm of quantum computing: compiling programs on quantum hardware for $N$ qubits is tantamount to an optimization problem on a product manifold of subgroups of the special unitary group $SU(2^N)$, and parametrized quantum circuits---expected to enable quantum supremacy in the age of noisy intermediate-scale quantum (NISQ) hardware---similarly belong to this product manifold \citep{maronese_2022,benedetti_2019}.

Alongside increased awareness of Riemannian structure in both empirical data and machine learning tasks comes the realization that \textit{state-of-the-art methods for learning manifold representations from point clouds leave much to be desired}.
Conversion of empirically observed manifolds into low-dimensional representations is mostly relegated to dimensionality reduction techniques, which lose much of the topological and geometric information intrinsic to the underlying manifold, especially in the case of scRNA-seq data \citep{cooley}. Yet this loss may be unnecessary, as several fundamental invariants of Riemannian manifolds can be robustly computed from point cloud data. Homology groups, intrinsic dimension, pairwise geodesic distances, and the spectrum of the manifold's Laplace-Beltrami operator were being effectively computed 
almost two decades ago, as was pairwise Gromov-Hausdorff distance\footnote{Used to create a notion of distance between metric spaces that is invariant under isometry.} between two manifolds observed as point clouds \citep{niyogi_2008,levina_2004,belkin_2006,memoli_2004}. \textit{A faithful manifold representation scheme should be able to reproduce invariants which are robustly computable}.

While Riemannian structure in empirical data and machine learning make efficient Riemannian optimization routines increasingly relevant, the computational challenges are still largely prohibitive.
Many Riemannian optimization routines are intuitive adaptations of Euclidean routines based on first- and second-order methods. However, it is far more complex to compute analogous first- and second-order primitives on Riemannian manifolds. For example, the action of a gradient on a point in Euclidean space is simply vector addition, whereas in a Riemannian manifold, this action requires the computation of geodesics. Geodesics are solutions to a specific second-order ordinary differential equation; while closed-form solutions are known for some manifolds\footnote{Absil \textit{et al.} give closed forms for the Grassmannian and symmetric positive-definite matrices \citep{absil}.}, closed-form solutions are generally unavailable, even for cases as simple as oblate ellipsoids in $\mathbb{R}^3$ \citep[e.g.,][]{ganshin_1969}. The computational difficulty only increases as one considers other differential-geometric primitives integral to Riemannian optimization, such as parallel transports and Riemannian Hessian operators \citep[e.g.,][]{hosseini_and_sra}. \textit{Riemannian optimization lacks a scheme for approximating differential-geometric primitives for general manifolds}, including both those that have closed forms as the locus of some equation (e.g., $\mathbf{Gr}_{n,k}$ and $\mathbf{S}^+_{n,k}$) and those learned from point clouds without definitive algebraic structure. 

There is now a greater incentive than ever before for applying Riemannian optimization to this latter case. For example: scRNA-seq is an application space where it is both the case data can be represented using low-dimensional manifold approximations \citep[e.g.][]{sritharan}, and analysis techniques are easily formalized as vector fields on a Riemannian manifold. RNA velocity, for instance, approximates
instantaneous rates of change in transcriptomic states of single cells \citep{frank_rec},
 which naturally lives in the tangent space to the manifold of RNA expression. Another example is the application of stochastic models to low-dimensional transcriptomic spaces to study, among other things, cell fate decisions in embryogenesis and innate lymphoid cell specialization \citep{moon,riesenfeld}. In practice, these models are computed in terms of dimensionality-reduced representations in order to learn proximity and directionality; the dimensionality reduction methods most often used have been shown to lose important topological and geometric information for general manifolds~\citep{cooley}.

\subsection{Our contributions}

We present an ``atlas graph'' representation scheme that is, by construction, effectively universal in the Riemannian geometries it can encode. 
We demonstrate that the atlas graph scheme can both speed up first-order Riemannian optimization over manifolds with closed-form algebraic structure and enable it in the first place over manifolds learned from empirical point cloud data. For the former case, our experiments indicate a runtime advantage, compared to state-of-the-art manifold representation schemes, for an online subspace learning problem on the Grassmann manifold,
a classical target of Riemannian optimization routines~\cite{absil}. 
In the case of point clouds, we present an approach for constructing an atlas graph representation 
via ``approximate coordinate charts'', 
as well as  algorithms for quickly, closely approximating several differential-geometry primitives. 
Using the space of Carlsson's high-contrast, natural image patches, we show that the atlas graph scheme can correctly infer manifold structure from points clouds while preserving persistent homology\footnote{By reconstructing persistent homology, our method enables approximation of Gromov-Hausdorff distance \citep[Section 7.3]{oudot}.}, intrinsic dimensionality, and approximate geodesic path lengths. 
In fact, the scheme reproduces the spectrum of the Laplace-Beltrami operator\footnote{Sritharan, et al.~\citep{sritharan} show the local quadratic approximation scheme that we use in an atlas graph representation of Carlsson's image patches reproduces the spectrum of the Laplace-Beltrami operator.}, a highly nontrivial reconstruction that still does not require prior knowledge of the data manifold.
Finally, we give the first 
implementation of a Riemannian optimization algorithm -- the Riemannian principal boundary (RPB) algorithm \citep{yao_2020} -- on a learned manifold with nontrivial topology that lacks closed-form differential-geometric primitives. Its application demonstrates the potential of the atlas graph scheme to enable modern methods of optimization on faithful manifold representations of empirical data. 

Our results provide a foundation for future innovations on two major fronts: (1) learning manifold geometries from high-dimensional point cloud data with nontrivial sampling noise, and (2) implementing efficient Riemannian optimization routines for general manifolds.

\subsubsection{Outline} To present our results in context, we first describe related work 
(Sec.~\ref{sec:related_work}). We give necessary mathematical background and define the atlas graph (Sec.~\ref{sec:atlas_graph}). Our main contributions are then presented in three parts, focused on enabling quasi-Euclidean first-order updates for faster Grassmanian optimization (Sec.~\ref{sec:gras_results}), learning manifold representations of point cloud data that preserve topological and geometric features (Sec.~\ref{sec:manifold_learning}), and performing Riemannian optimization over manifolds learned from point clouds (Sec.~\ref{sec:learned_manifold_optimization}). Finally, we discuss limitations and potential impacts more broadly (Sec.~\ref{sec:discussion}).


\newcommand{\etal}{\textit{et al}.}
\section{Related work}\label{sec:related_work}
There exists a large body of literature on manifold optimization and manifold learning. We only discuss closely related works.
For a more comprehensive review of manifold optimization and manifold learning, we refer the reader to the excellent monographs by Absil \etal~\citep{absil} and Ghojogh \etal~\cite{ghojoghElementsDimensionalityReduction2023}.

\subsection{Related work on manifold optimization}
For general manifolds, the differential-geometric primitives necessary to deploy optimization algorithms need considerable effort to implement. 
For instance, applying a first-order update at a point in Euclidean space requires simple vector addition, but the analogous first-order update on a Riemannian manifold requires computing the exponential map of a tangent vector at the point. The exponential map is characterized by a second-order ordinary differential equation (ODE), whose closed-form solution, if known, can be expensive to compute \citep{kobayashi_nomizu}. 
Circumnavigating this difficulty is often discussed in terms of retractions, i.e., smooth maps that assign to each point a map from the tangent space to the manifold which agrees with the exponential map in the zero-th and first derivatives. 
Euclidean ``step-and-project'' methods, which make updates on a manifold using Euclidean gradients in an ambient space before mapping the update back into the manifold, are prototypical examples of retractions, but can suffer from slow running time due to numerical condition issues \citep[e.g.,][]{absil}.
Because finding sufficiently fast, accurate retractions is a challenge, they are often discussed in theory  rather than practice \citep{hosseini_and_sra}, with the notable exception of the B\"urer-Monteiro scheme for convex optimization on the  PSD elliptope \citep{cifuentes_2021}. 

Manifolds defined as the locii of certain equations (e.g., spheres, Grassmann manifolds, Stiefel manifolds, and fixed-rank PSD matrices) often have a well defined structure as submanifolds or quotients of $\mathbb{R}^{n \times k}$\citep{absil}. Consequently, their Riemannian primitives, such as geodesics, can be expressed entirely in terms of matrix operations \citep[e.g.,][]{weber,chakraborty_gifme}. For general quotient manifolds, certain constructions, such as vertical and horizontal bundles \citep[e.g.,][]{absil}, or problem-specific reparameterizations \citep[e.g.,][]{boumal_2016,em_alt}, may also speed up computation of first-order updates.
In these scenarios, the current state-of-the-art software is the MATLAB package Manopt 
\citep{boumal_2016,em_alt,luchnikov_2021}, which has a Python implementation, Pymanopt \citep{pymanopt}. This software numerically computes differential-geometric primitives, such as exponential maps, Riemannian logarithms, and parallel transport of tangent vectors. The computation of differential-geometric primitives by Manopt has been used to efficiently implement more complicated Riemannian optimization routines, such as fitting Gaussian mixture models or, for quantum computing, optimal unitary matrices \citep{em_alt,luchnikov_2021}. While Manopt is very effective in specific contexts, it cannot handle arbitrary Riemannian manifolds. Furthermore, in several cases, it does not use representations of minimal dimensionality. For example, while $\mathbf{Gr}_{n,k}$ is of dimension $k(n-k)$, Manopt represents an element $\overline{M}\in\mathbf{Gr}_{n,k}$
using an $n\times k$ matrix $M$ such that $\overline{M}$ is the columnspace $\mathbf{col}M$ of $M$. Beyond increasing space and runtime requirements, this representation requires Manopt routines to account for redundant operations by orthonormalizing columnspaces,
as the choice of $M$ is not unique.


\subsection{Related work on manifold learning}
For general manifolds observed empirically as point clouds, 
existing Riemannian optimization techniques have seen limited application, as they are written in terms of differential-geometric primitives whose closed forms are not known. In these cases,
low-dimensional representations are typically learned via dimensionality reduction techniques, which 
may forfeit or distort key topological and geometric information intrinsic to the underlying manifold \citep{cooley}.  Standard downstream machine learning approaches, such as clustering and inference of pseudotime, are also sensitive to the low-dimensional representation and to data fluctuations \citep{patruno_2020}.

However, differential-geometric and topological invariants of Riemannian manifolds may still be computed from point cloud data in a statistically meaningful way. Little \textit{et al.} prove that if a manifold $\mathcal{M}\subset\mathbb{R}^D$ is observed as a finite sample perturbed by sub-Gaussian noise, then their multiscale singular value decomposition (mSVD) algorithm can be used to compute both local tangent plane approximations and the intrinsic dimensionality of $\mathcal{M}$ \citep{little}. Sritharan \textit{et al.} describe a method to compute the scalar curvature from local quadratic approximations paired with the Gauss-Codazzi equations---an approach they also use to estimate the Riemannian metric and curvature tensor\footnote{Unlike scalar curvature, these estimates are not analyzed for accuracy or convergence rates.}---and show that they have greater statistical power over longstanding harmonic methods \citep{kac,sritharan}. Carlsson \textit{et al.} achieved a landmark success in recovering topological features from point cloud data in an analysis of the empirical distribution of $3\times3$ high-contrast, natural image patches \citep{carlsson}. Their study presented an elegant polynomial parameterization of the underlying manifold, whose validity was confirmed through \textit{ad hoc} persistent homology computations. However, these computations were only feasible due to the low ambient and intrinsic dimensionalities (nine and two, respectively) of the data. For empirically observed manifolds with higher ambient and intrinsic dimensionalities, such as in scRNA-seq datasets, such methods cannot be used
\citep{sritharan,oudot}.

\section{The atlas graph framework}\label{sec:atlas_graph}
Here, we define the atlas graph in terms of topological manifolds (Sec.~\ref{sec:top_mfld}). We then discuss the addition of differentiable, Riemannian, and vector field structures to topological manifolds (Sec.~\ref{sec:differentiable_mfld}--\ref{sec:descent}), and describe in each section how to compute relevant manifold invariants using an underlying atlas graph.

\subsection{Topological manifolds}\label{sec:top_mfld}
A topological manifold is a topological space $\mathcal{M}$ such that, for all $p\in\mathcal{M}$, there exists an open subset $\mathcal{U}\ni p$ of $\mathcal{M}$ such that $\mathcal{U}$ is homeomorphic (i.e., topologically equivalent) to $\mathbb{R}^n$ for some $n$ \citep{munkres}. We present two formalisms that have been used to represent arbitrary manifolds. The first formalism is more conventional, and we use it sparingly in our computation. The second formalism, which we refer to as the \textit{atlas graph framework}, is used for almost all of the computations in our experiments.

\subsubsection{Conventional representation of manifolds using atlases}
A \textit{coordinate chart} on a manifold $\mathcal{M}$ is a triple of the form $\left(\mathcal{U},\mathcal{V},\varphi\right)$ where 
    $\mathcal{U}$ is an open set in $\mathcal{M}$,
    $\mathcal{V}$ is an open set in $\mathbb{R}^d$, and
    $\varphi:\mathcal{U}\to\mathcal{V}$ is a homeomorphism, which we refer to as a \textit{coordinate chart map}.

An \textit{atlas} of $\mathcal{M}$ is a collection of coordinate charts $\left\{\left(\mathcal{U}_\alpha,\mathcal{V}_\alpha,\varphi_\alpha\right)\right\}_{\alpha\in A}$, indexed by a set $A$, such that $\bigcup_{\alpha\in A}\mathcal{U}_\alpha=\mathcal{M}$, i.e., the coordinate charts ``cover'' $\mathcal{M}$. For any two coordinate charts $\mathcal{U}_\alpha,\mathcal{U}_\beta$ that overlap in $\mathcal{M}$ (i.e., $\mathcal{U}_\alpha\cap\mathcal{U}_\beta\neq\varnothing$), we define the \textit{transition map}:
\begin{equation}\label{eqn:transition_map}    \psi_{\alpha\beta}:=\left.\left(\varphi_\beta\circ\varphi_\alpha^{-1}\right)\right\rvert_{\varphi_\alpha\left(\mathcal{U}_\alpha\cap\mathcal{U}_\beta\right)}.
\end{equation}
Because $\psi_{\alpha\beta}$ is the restriction of a function $f:\mathbb{R}^d\to\mathbb{R}^d$, properties like $k$-times differentiability and smoothness are well-defined for $\psi_{\alpha\beta}$. If all transition maps of an atlas $\mathcal{A}$ are $C^k$ functions, we refer to $\mathcal{A}$ as a \textit{$C^k$-atlas}. If there exist $p\in\mathcal{M}$ and $\vec{\xi}\in\mathcal{V}_\alpha$ for some $\alpha$ such that $\vec{\xi}=\varphi_\alpha(p)$, we say that $\vec{\xi}$ is the \textit{representative} of $p$ in $\mathcal{V}_\alpha$.

We later leverage the conventional atlas framework to formalize a means of learning approximate coordinate charts from point cloud data and create a data structure from approximate coordinate charts that supports differential-geometric primitives~(Sections~\ref{sec:smoothlyembedded} and \ref{sec:primitives}).

\subsubsection{Representation of manifolds using atlas graphs}\label{sec:atlas_graph_spec}

We describe the atlas graph construction solely in terms of coordinate charts and transition maps, which gives it broad, flexible applicability to all Riemannian manifolds. Unlike the atlas construction, which is formalized in terms of abstract open sets, the atlas graph construction is immediately amenable to fast approximation of first-order primitives on Riemannian manifolds. This is because the atlas graph is encoded solely by open Euclidean sets and local diffeomorphisms between them.

From any smooth manifold $\mathcal{M}$ with finite atlas $\mathcal{A}=\left\{\left(\mathcal{U}_\alpha,\mathcal{V}_\alpha,\varphi_\alpha\right)\right\}_{\alpha\in A}$, we can construct the sets
    $V=\left\{\mathcal{V_\alpha}\right\}_{\alpha\in A}$ and $E=\left\{\psi_{\alpha\beta}\right\}_{(\alpha,\beta)\in A^*},$
where $A^*$ is defined as
\begin{equation*}A^*=\left\{\left.(\alpha,\beta)\in A\times A\right\rvert\alpha\neq\beta\text{ and }\mathcal{U}_\alpha\cap\mathcal{U}_\beta\neq \varnothing \right\}.
\end{equation*}
We refer to each $\mathcal{V}_\alpha$ as a \textit{compressed coordinate chart}. We can think of the compressed coordinate charts in $V$ and transition maps in $E$ as the vertices and edges, respectively, of a graph $G=(V,E)$, which we refer to 
as an \textit{atlas graph} of $\mathcal{M}$. An atlas graph $G$ does not strictly contain enough information to reconstruct $\mathcal{M}$ (as the sets $\mathcal{U}_\alpha$ are irrecoverable), but it is sufficient for reconstructing a manifold $\overline{\mathcal{M}}$ that is diffeomorphic to $\mathcal{M}$ \citep[][Lemma 1.35]{lee_2012}. Similar to the case of atlases, we refer to an atlas graph as a \textit{$C^k$-atlas graph} if all transition maps $\psi_{\alpha\beta}$ are $k$-times differentiable. Unlike the atlas construction, which is formalized in terms of abstract open sets, the atlas graph construction is immediately amenable to fast approximation of first-order primitives on Riemannian manifolds. 

While atlas graphs have arisen previously as intermediate structures in proofs of the sufficiency of an atlas to encode a manifold's topological and differentiable structure \citep[e.g.][Lemma 1.35]{lee_2012}, we use them here in novel ways as the basis for efficient differential-geometric primitives (Secs.~\ref{sec:grass_experiment} and \ref{sec:rpb}). We also adapt the construction to make use of approximate coordinate chart maps $\varphi_\alpha$ to learn manifold representations from point clouds (Sec.~\ref{sec:smoothlyembedded}).  

\subsection{Differentiable manifolds}\label{sec:differentiable_mfld}
A $C^k$-atlas graph for $k>0$ is an encoding of the \textit{tangent bundle}, which assigns a vector space $T_p\mathcal{M}$ to each $p\in\mathcal{M}$ called \textit{the tangent space at }$p$. 
The atlas graph construction explicitly encodes $T_p\mathcal{M}$ by the isomorphic representation 
$T_{\vec{\xi}}\mathcal{V}_\alpha \approxeq T_{\vec{\xi}}\mathbb{R}^d$, the affine space $\mathbb{R}^d$ centered at $\vec{\xi} = \varphi(p)$.
For the sake of clarity, we always refer to elements of compressed coordinate charts $\mathcal{V}_\alpha$ as $\vec{\xi}$, elements of the tangent plane $T_p\mathcal{M}$ as $\vec{v}$, and elements of $T_{\vec{\xi}}\mathcal{V}_\alpha$,
as $\vec{\tau}$. For further clarity, we refer to $\vec{v}\in T_p\mathcal{M}$ as a \textit{tangent vector} and $\vec{\tau}\in T_{\vec{\xi}}\mathcal{V}_\alpha$ as a \textit{representative tangent vector}.

Tangent spaces formalize ideas of covariant differentiation and infinitesimal movement on the manifold \citep{absil,dasilva}. A manifold paired with a tangent bundle is a \textit{differentiable manifold}.
Endowing a differentiable manifold with notions of distance and curvature requires a \textit{Riemannian metric} $\mathfrak{g}$, which assigns to each $p\in\mathcal{M}$ an inner product $\left\langle\cdot,\cdot\right\rangle_{\mathfrak{g}_p}$ on $T_p\mathcal{M}$ that is smooth as a function of $p$. This metric induces a notion of path length on the manifold. A shortest path with respect to the metric, i.e., a \textit{geodesic path}, is specified
by a starting point $p$ and initial tangent vector $\vec{v}\in T_p\mathcal{M}$, as there exists a unique geodesic path starting at $p$ with initial direction $\vec{v}/\left\lVert\vec{v}\right\rVert_{\mathfrak{g}}$ of length $\left\lVert\vec{v}\right\rVert_{\mathfrak{g}}$. For fixed $p$, the map taking initial tangent vector $\vec{v}$ to the unique endpoint $q$ of this path is referred to as the \textit{exponential map} at $p$, written with the shorthand $\mathbf{Exp}_p\left(\vec{v}\right)=q$. This map is injective for tangent vectors $\vec{v}$ sufficiently close to the origin in $T_p\mathcal{M}$, and therefore locally invertible. The inverse, known as the \emph{Riemannian logarithm} at $p$ and denoted by $\mathbf{Log}_p$, is defined as the map taking $q\in\mathcal{M}$ to the $\mathfrak{g}_p$-smallest tangent vector $\vec{v}$ such that $\mathbf{Exp}_p\left(\vec{v}\right)=q$.

The exponential map is a special case of a \textit{retraction}, which smoothly assigns to each point $p\in\mathcal{M}$ a smooth map $\mathbf{Ret}_p:T_p\mathcal{M}\to\mathcal{M}$ such that $\mathbf{Ret}_p\left(\vec{0}\right)=p$ and the differential of $\mathbf{Ret}_p$ at $\vec{0}$ is the identity. In this way, a retraction can be interpreted as a local approximation of the exponential map.

By construction, retractions are also locally invertible; therefore, for each retraction $\mathbf{Ret}$, there exists a \emph{retraction logarithm} which, for fixed $p\in\mathcal{M}$, takes nearby $q\in\mathcal{M}$ to the $\mathfrak{g}_p$-smallest tangent vector $\vec{v}$ such that $\mathbf{Ret}\left(\vec{v}\right)=q$. We denote this retraction logarithm as $\mathbf{Log}^{\mathbf{Ret}}_p(q)=\vec{v}$, and we refer to $p$ as the \textit{basepoint} of the logarithm. 

\subsection{Representation of the Riemannian metric on atlas graphs}\label{sec:atlas_graph_met}
Let $\left(\mathcal{U},\mathcal{V},\varphi\right)$ be a coordinate chart of Riemannian manifold $\mathcal{M}$. If $\mathfrak{g}$ is the Riemannian metric on $\mathcal{M}$, then $\mathfrak{g}_p$ can be thought of as a linear isomorphism from $T_p\mathcal{M}$ to $T_p^*\mathcal{M}$ (i.e., the vector space of linear functionals on $T_p\mathcal{M}$). For $\vec{\xi}=\varphi(p)$, we can define a distinct Riemannian metric $\overline{\mathfrak{g}}_{\vec{\xi}}:T_{\varphi(p)}\mathcal{V}\to T_{\varphi(p)}^*\mathcal{V}$ in terms of local coordinate charts such that
\begin{equation}\label{eqn:g_bar}
\overline{\mathfrak{g}}_{\vec{\xi}}(\vec{\tau}):\vec{\tau}^\prime\mapsto\mathfrak{g}_p\left(\left[D\varphi^{-1}\right]_{\varphi(p)}\left(\vec{\tau}\right)\right)\left(\left[D\varphi^{-1}\right]_{\varphi(p)}\left(\vec{\tau}^\prime\right)\right),
\end{equation}
where $D\varphi^{-1}$ denotes the differential of $\varphi^{-1}$.


\subsection{Descent along a vector field on an atlas graph}\label{sec:descent}
Let $\mathcal{M}$ be a Riemannian manifold of dimension $d$, and let $V$ be a smooth vector field on $\mathcal{M}$. For all $p\in\mathcal{M}$, the vector field $V$ assigns a tangent vector $\vec{v}\in T_p\mathcal{M}$. The soul of first-order methods on manifolds is to interpret $V$ as \textit{acting} on $\mathcal{M}$. Conventionally, this action takes $p$ to the exponential map $\mathbf{Exp}_p\left(\vec{v}\right)$, but some applications instead take $p$ to the retraction $\mathbf{Ret}_p\left(\vec{v}\right)$ for some suitable choice of retraction \citep{hosseini_and_sra}.

Beyond universal representation of Riemannian manifolds, the atlas graph approach brings an intuitive action by smooth vector fields on the represented manifold, which we refer to as a \textit{quasi-Euclidean update}: for $\vec{\xi}\in\mathcal{V}\subset\mathbb{R}^d$, a representative tangent vector $\vec{\tau}\in\mathbb{R}^d$ takes $\vec{\xi}$ to $\vec{\xi}+\vec{\tau}$.
This action is easy to compute; more importantly, it approximates the exponential map well for small $\vec{\tau}$ by virtue of being a retraction when restrained to a single coordinate chart, as formalized in the following claim, which is proved in Appendix \ref{app:quasi_euclidean}.
\begin{claim}\label{clm:quasi_euclidean_approx}
    Quasi-Euclidean updates approximate the exponential map up to $O\left(\left\lVert\vec{\tau}\right\rVert^2_{\overline{\mathfrak{g}}}\right)$. Further, restriction of quasi-Euclidean updates to a single coordinate chart $\left(\mathcal{U},\mathcal{V},\varphi\right)$ comprises a retraction.
\end{claim}
Beyond the previous claim, the analysis of the trade-off between the accuracy of first-order updates on atlas graphs and the complexity of adding more charts requires singificantly more structure and assumptions that go beyond the scope of the current work. A brief discussion of these challenges is included in Appendix~\ref{app:quasi_instability}.

We leverage quasi-Euclidean updates to achieve faster first-order convergence in online Fr\'echet mean estimation on the Grassmann manifold (Sec.~\ref{sec:grass_experiment}), and to approximate first-order updates in the Riemannian principal boundary algorithm (Sec.~\ref{sec:rpb}).

\subsubsection{Approximation of parallel transport along quasi-Euclidean updates}\label{sec:vector_transport_def}
When a vector field $V$ acts on a differentiable manifold $\mathcal{M}$ to take a point $p$ to $q$, vectors in $T_p\mathcal{M}$ should be transformed into vectors in $T_q\mathcal{M}$. There are many linear isomorphisms between $T_p\mathcal{M}$ and $T_q\mathcal{M}$, and the ``correct'' choice of isomorphism should be uniquely determined by the path $\gamma:[0,1]\to\mathcal{M}$ along which $V$ takes $p$ to $q$. This isomorphism $T_p\mathcal{M}\to T_q\mathcal{M}$ determined by $\gamma$ is called the \textit{parallel transport}. Its fundamental role in first-order methods over manifolds is described in~\citep{hosseini_and_sra}.

On an atlas graph, we approximate parallel transport $\mathcal{P}^{\mathbf{Ret}_p}_{p\to\mathbf{Ret}_p\left(\vec{v}\right)}\vec{w}$ of tangent vector $\vec{w}\in T_p\mathcal{M}$ to $T_{\mathbf{Ret}_p\left(\vec{v}\right)}\mathcal{M}$ along $\mathbf{Ret}_p$ with the identity vector transport $\mathcal{T}_{\vec{\xi}\to\vec{\xi}+\vec{\tau}}:\vec{\sigma}\mapsto\vec{\sigma}$.
This vector transport approximates parallel transport along quasi-Euclidean updates up to order $O\left(\left\lVert\vec{\sigma}\right\rVert_{\overline{\mathfrak{g}}}\left\lVert\vec{\tau}\right\rVert_{\overline{\mathfrak{g}}}^2\right)$
(Appendix~\ref{app:vec_trans}). It is used to approximate Riemannian logarithms (Secs.~\ref{sec:grass_experiment} and~\ref{sec:rpb}), to approximate geodesic path lengths (Sec.~\ref{sec:carlsson_example}) and implement the Riemannian principal boundary algorithm (Sec.~\ref{sec:rpb}).

\section{Enabling quasi-Euclidean first-order updates for faster Grassmannian optimization}\label{sec:gras_results}
Here, we construct an atlas graph representation of the $(n,k)$-Grassmann manifold that permits quasi-Euclidean first-order updates (Sec.~\ref{sec:grass_example}), and show that it enables faster online estimation of the empirical Fr\'echet mean, compared to state-of-the-art first-order update schemes 
(Sec.~\ref{sec:grass_experiment}).

\subsection{An atlas graph representation of the (n,k)-Grassmannian}\label{sec:grass_example}
The atlas graph is constructed from a conventional atlas of $\mathbf{Gr}_{n,k}$ derived from a cell complex presented by Ehresmann~\citep{ehresmann_1934}, which we adapt to our notation for convenience. Our atlas graph has charts from the Ehresmann atlas and also permits \textit{ad hoc} creation of new coordinate charts centered at any point in the manifold. We use \textit{ad hoc} chart creation to maintain proximity to the origin in compressed charts and, hence, accuracy of quasi-Euclidean updates, which results in fast online learning (Sec.~\ref{sec:grass_experiment}).

In addition to the Ehresmann atlas, there exists another canonical representation of the Grassmann manifold as a quotient of the Stiefel matrices~\cite{bendokat_2011}. We find the Ehresmann atlas simpler to present, but also use the Stiefel construction  (e.g., Sec.~\ref{sec:grass_experiment}, Algorithm~\ref{alg:grass_quasi_euclidean} and Fig.~\ref{fig:first_order_subroutines}) to enable direct comparisons with existing Riemannian optimization approaches on the Grassmann manifold that use it, such as Pymanopt \citep{pymanopt} and GiFEE \citep{chakraborty_gifme}. 

\subsubsection{Coordinate charts of the Ehresmann atlas}\label{sec:ehresmann}
We begin with some intuition for the Ehresmann atlas construction. The $(n,k)$-Grassmannian can be understood as the manifold of $n\times n$ orthogonal projection matrices of rank $k$. There are $\binom{n}{k}$ such matrices whose entries are all zero, save for exactly $k$ diagonal entries that are equal to one. These matrices are in one-to-one correspondence with sets of $k$ fixed integers $i_1,\ldots,i_k$ satisfying $1\leq i_1<\ldots<i_k\leq n$, and therefore with the permutations\footnote{For finite $S\subset\mathbb{R}$, the notation $\min_{(m)}S$ denotes the $m$th smallest element of $S$.}
\begin{align}\label{eqn:permutation_indices}
    \pi_{i_1,\ldots,i_k}:\{1,\ldots,n\}&\to\{1,\ldots,n\} \\
    j&\mapsto\left\{\begin{array}{lr}
        i_j, & j\leq k \\
        \min_{(j-k)}\left(\left\{1,\ldots,n\right\}\setminus\left\{i_1,\ldots,i_k\right\}\right), & j>k \nonumber
    \end{array}\right..
\end{align}
Let $P_{i_1,\ldots,i_k}$ be the $n\times n$ permutation matrix corresponding to $\pi_{i_1,\ldots,i_k}$. The aforementioned $\binom{n}{k}$ matrices take the form
\begin{equation*}
    P_{i_1,\ldots,i_k}\left(\frac{I_k}{\mathbf{0}_{n-k,k}}\right)\left(\frac{I_k}{\mathbf{0}_{n-k,k}}\right)^\top P_{i_1,\ldots,i_k}^\top.
\end{equation*} 
These $\binom{n}{k}$ points are the centerpoints of each coordinate chart in the Ehresmann atlas. Further, for every $P\in\mathbf{Gr}_{n,k}$, there exist a permutation $\pi_{i_1,\ldots,i_k}$ and a matrix $A\in\mathbb{R}^{(n-k)\times k}$ such that $P=P_{i_1,\ldots,i_k}\left(\frac{I_k}{A}\right)\left(P_{i_1,\ldots,i_k}\left(\frac{I_k}{A}\right)\right)^\dag$, where $\dag$ denotes the Moore-Penrose pseudoinverse.

For the coordinate charts in the Ehresmann atlas 
let  $\mathbf{colproj}:\mathbb{R}^{n\times k}\to\mathbb{R}^{n\times n}$ be the map that takes a matrix $A$ to the orthogonal projector onto the column space of $A$. 
The coordinate chart $\left(\mathcal{U}_{i_1,\ldots,i_k},\mathcal{V}_{i_1,\ldots,i_k},\varphi_{i_1,\ldots,i_k}\right)$ is then defined as:
\begin{itemize}
    \setlength\itemsep{0em}
    \item $\varphi_{i_1,\ldots,i_k}^{-1}:A\mapsto\mathbf{colproj}\left(P_{i_1,\ldots,i_k}\left(\frac{I_k}{A}\right)\right)$
    \item $\mathcal{V}_{i_1,\ldots,i_k}=\mathbb{R}^{(n-k)\times k},$ \quad and \quad $\mathcal{U}_{i_1,\ldots,i_k}=\text{im}\left(\varphi_{i_1,\ldots,i_k}^{-1}\right).$
\end{itemize}
For convenience, we denote the coordinate chart corresponding to the identity permutation as $\left(\mathcal{U}_0,\mathcal{V}_0,\varphi_0\right)$. In Claims~\ref{clm:grass_dist_0}~and~\ref{clm:grass_dist_other}~(Appendix \ref{app:misc_grass_comp}), we demonstrate that a point $\vec{\xi}$ in a compressed chart $\mathcal{V}$ of the Ehresmann atlas is closer to the center of another chart than to the center of $\mathcal{V}$ if any coordinate of $\vec{\xi}$ exceeds one (Section~\ref{app:grass_atlas_graph}). 
We use this result to determine when to generate new \emph{ad hoc} charts (Sec.~\ref{sec:grass_experiment}) The creation of \textit{ad hoc} coordinate charts allows for points to always be close to the origin within a coordinate chart, thereby reducing the error of quasi-Euclidean updates (Sec.~\ref{sec:descent}). In the case that $P\in\mathbf{Gr}_{n,k}$ is closer to the center of $\mathcal{V}$ than to the center of any Ehresmann chart, we say that $\mathcal{V}$ is the ``best'' Ehresmann chart for $P$.

\subsubsection{Ad hoc formation of coordinate charts}\label{sec:ad_hoc}
To perform \textit{ad hoc} chart formation, we begin with the case of generating charts centered at projection matrices belonging to $\mathcal{U}_0$. This case generalizes to projection matrices belonging to $\mathcal{U}_{i_1,\ldots,i_k}$ by simple conjugation by $P_{i_1,\ldots,i_k}$. The group $\mathbf{O}(n)$ of $n\times n$ orthogonal matrices acts transitively on $\mathbf{Gr}_{n,k}$ according to the action $Q:P\mapsto QPQ^\top$. Therefore, for all $A\in\mathcal{V}_0$, there exists an orthogonal matrix $Q_A$ such that
\begin{equation}\label{eqn:qa_action}
    \varphi^{-1}_0(A)=Q_A\left(\begin{array}{c|c}
        I_k & \mathbf{0}_{k, n-k} \\
        \hline
        \mathbf{0}_{n-k,k} & \mathbf{0}_{n-k,n-k}
    \end{array}\right)Q_A^\top.
\end{equation}

\begin{claim}\label{clm:Q_A}
    An orthogonal matrix $Q_A$ which satisfies Equation \ref{eqn:qa_action} is given by
    \begin{equation*}
        Q_A=\left(\begin{array}{c|c}
            \sqrt{\left(I_k+A^\top A\right)^{-1}} & -A^\top\sqrt{\left(I_{n-k}+AA^\top\right)^{-1}} \\
            \hline
            A\sqrt{\left(I_k+A^\top A\right)^{-1}} & \sqrt{\left(I_{n-k}+AA^\top\right)^{-1}}
        \end{array}\right).
    \end{equation*}
    \begin{proof}
        See Appendix \ref{app:misc_grass_comp}.
    \end{proof}
\end{claim}

Claim \ref{clm:Q_A} allows us to 
change coordinate charts so that quasi-Euclidean updates approximate the exponential map more accurately, as done in Algorithm \ref{alg:grass_quasi_euclidean}. While this specific method 
only works for points in the Ehresmann chart $\left(\mathcal{U}_0,\mathcal{V}_0,\varphi_0\right)$, it generalizes to charts $\left(\mathcal{U}_{i_1,\ldots,i_k},\mathcal{V}_{i_1,\ldots,i_k},\varphi_{i_1,\ldots,i_k}\right)$ by replacing the action in Equation \ref{eqn:qa_action} with the action
\begin{equation}\label{eqn:qa_action_general}
    \varphi^{-1}_{i_1,\ldots,i_k}(A)=Q_AP_{i_1,\ldots,i_k}\left(\begin{array}{c|c}
        I_k & \mathbf{0}_{k,n-k} \\
        \hline
        \mathbf{0}_{n-k,k} & \mathbf{0}_{n-k,n-k}
    \end{array}\right)P_{i_1,\ldots,i_k}^\top Q_A^\top.
\end{equation}
This creates coordinate charts $\left(\mathcal{U}_{A,i_1,\ldots,i_k},\mathcal{V}_{A,i_1,\ldots,i_k},\varphi_{A,i_1,\ldots,i_k}\right)$ for all $A\in\mathbb{R}^{(n-k)\times k}$ and all $P_{i_1,\ldots,i_k}$, with the transition map $\psi_{A,i_1,\ldots,i_k\to A^\prime,j_1,\ldots,j_k}$ taking $B\in\mathcal{V}_{i_1,\ldots,i_k}$ to
\begin{equation}\label{eqn:grass_transition_map}
    \left(\mathbf{0}_{n-k,k}\mid I_{n-k}\right)R\left(\left(I_k\mid\mathbf{0}_{k,n-k}\right)R\right)^{-1},
\end{equation}
where $R=P_{j_1,\ldots,j_k}^\top Q_{A^\prime}^\top Q_AP_{i_1,\ldots,i_k}\left(\frac{I_k}{B}\right)$.
Within any compressed coordinate chart $\mathcal{V}_{A,i_1,\ldots,i_k}$, transition maps are invoked whenever any element of the coordinates $B$ exceeds one~(Appendix~\ref{app:grass_atlas_graph}), transitioning into a chart centered at $\varphi_{i_1,\ldots,i_k}^{-1}\left(B\right)$.



\subsection{Fast online subspace learning on the Grassmann manifold}\label{sec:grass_experiment}

We compare the runtime of atlas-graph quasi-Euclidean updates on an atlas graph representation of $\mathbf{Gr}_{n,k}$,
against the runtimes of state-of-the-art first-order update techniques 
for the online subspace-learning task of 
computing the Grassmann inductive Fr\'echet expectation estimator (GiFEE)~\citep{chakraborty_gifme}. 

Given a stream of samples $\mathcal{X}_1,\ldots,\mathcal{X}_i$ from a probability distribution on $\mathbf{Gr}_{n,k}$, the GiFEE estimator is computed by inductively applying the update rule
\begin{align}\label{eqn:log_update}
    \vec{v}_{i+1} & \gets \mathbf{Log}_{M_i}\left(\mathcal{X}_i\right)\\
    M_{i+1} & \gets \textbf{Exp}_{M_i}\left(\frac{1}{i+1}\vec{v}_{i+1}\right),
\end{align}
where $M_1=\mathcal{X}_1$ and the tangent vector $\vec{v}_i\in T_{M_i}\mathbf{Gr}_{n,k}$ is initialized to $\vec{v}_1=\vec{0}$. These update rules are exactly those for online-computing a mean of samples from a real vector space, but adapted to differential-geometric primitives. 
To compute first-order updates on the atlas graph representation, our algorithm (\atlasfrechet , Algorithm~\ref{alg:grass_quasi_euclidean}) 
replaces invocations of the exponential map with invocations of quasi-Euclidean updates on the atlas graph. Due to the inductive nature of the estimator, the space requirements are constant with respect to number of charts used. runtime dependence on the number of chart transitions is discussed in Appendix~\ref{app:grass_atlas_graph}. 
To ensure that the matrix inversions used in \atlasfrechet\ are well defined, we assume a probabilistic relaxation of an assumption from Theorem 1 of Chakraborty and Vemuri~\citep{chakraborty_gifme}; that is, we assume that with high probability, all points sampled on $\mathbf{Gr}_{n,k}$ are within geodesic radius $\frac{\pi}{4}$. 

We compare this algorithm against the following Grassmann representation frameworks: (1) the original GiFEE algorithm, using closed forms for logarithms and exponential maps from \citep{chakraborty_gifme}; (2) Pymanopt (MANOPT); and 3)  Pymanopt, using retractions instead of the exponential map for first-order updates (MANOPT-RET). The specifics of how these primitives are implemented for these update schemes are given in Fig.~\ref{fig:first_order_subroutines}. 
As an idealized baseline, we also include an implementation of online PCA (oPCA) based on Oja's rule \citep{liang_2022}, which solves the related task of online subspace learning by very simple Euclidean updates.
\begin{algorithm}[t]
    \caption{\atlasfrechet \\
    Online Fr\'echet mean estimation on $\mathbf{Gr}_{n,k}$ using quasi-Euclidean updates on an atlas graph
    } \label{alg:grass_quasi_euclidean}
    \begin{algorithmic}
        \Require Probability distribution $\mathcal{D}$ on $\mathbf{Gr}_{n,k}$
        \Require Fr\'echet stream of samples $X_1,X_2,\ldots\sim\mathcal{D}$, sampled as Stiefel matrices
        \State $i_1,\ldots,i_k\gets\texttt{ATLAS\_identify\_chart}\left(X_1\right)$ \Comment{Best Ehresmann chart (Sec. \ref{sec:ehresmann}); Alg. \ref{alg:grass_identify}}
        \State $A\gets\texttt{ATLAS\_ingest\_matrix}\left(X_1,i_1,\ldots,i_k\right)$ \Comment{$A=\varphi_{i_1,\ldots,i_k}\left(\mathbf{colproj}\left(X_1\right)\right)$; Alg. \ref{alg:grass_ingest}}
        \State $Q_A\gets I_n$ \Comment{$Q_A$ is used to define the map (\ref{eqn:qa_action})}
        \State $Q_{A,U}\gets \left(I_k\mid\mathbf{0}_{k,n-k}\right)^\top$ \Comment{restriction of $Q_A$ to columns in $\{i_1,\ldots,i_k\}$}
        \State $Q_{A,L}\gets \left(I_{n-k}\mid\mathbf{0}_{n-k,k}\right)^\top$ \Comment{restriction of $Q_A$ to columns not in $\{i_1,\ldots,i_k\}$}
        \State $n\gets 1$
        \While{streaming}
            
            \State $n\gets n+1$
            \State $\tilde{A}\gets Q_{A,L}^\top X_n\left(Q_{A,U}^\top X_n\right)^{-1}$ \Comment{$\tilde{A}=\varphi_{i_1,\ldots,i_k}\left(Q_A^\top\mathbf{colproj}\left(X_n\right)Q_A\right)$; Eq. (\ref{eqn:qa_action_general})}
            \State $A\gets A+\left(\tilde{A}-A\right)/n$\Comment{Update online Fr\'echet mean estimator}
            \If{any entry $a$ of $A$ violates $\lvert a\rvert<1$}\Comment{change chart if necessary; Claim \ref{clm:grass_dist_0}}
                \State $A,Q_A\gets\texttt{ATLAS\_transition\_map}(A,i_1,\ldots,i_k,Q_A)$ \Comment{Alg. \ref{alg:grass_trans_map}}
                \State $Q_{A,U}\gets$ restriction of $Q_A$ to columns in $\{i_1,\ldots,i_k\}$
                \State $Q_{A,L}\gets$ restriction of $Q_A$ to columns not in $\{i_1,\ldots,i_k\}$
            \EndIf
        \EndWhile
        \State \Return $A,i_1,\ldots,i_k,Q_A$
    \end{algorithmic}
\end{algorithm}

Because of the lack of existing benchmark datasets, we introduce the  \emph{geodesic power distribution}
$\mathbf{GPD}(\mathcal \mathcal{X} , p)$ with Fr\'echet mean  $\mathcal X \in\mathbf{Gr}_{n,k}$ and scaling exponent $p > 1$, which inversely controls the entropy of the distribution (formally defined in Appendix~\ref{sec:geodesic_power_distribution}). The $\mathbf{GPD}$ is a natural, efficiently samplable distribution, which guarantees existence and uniqueness of the Fr\'echet mean $\mathcal{X}$.
Chakraborty and Vemuri show that the GiFEE estimator converges in probability to the Fr\'echet mean of any distribution on $\mathbf{Gr}_{n,k}$ under certain limitations on support and Riemannian $L^2$-moment (i.e., a differential-geometric analog of the first moment about the mean) \citep{chakraborty_gifme}. 
A $\mathbf{GPD}$ satisfies the $L^2$-moment constraint for $p>1$; while it does not satisfy the support constraint, the probability density function $\mathbf{GPD}(\mathcal{X},p)$ is close to zero for most points $\mathcal{X}\in\mathbf{Gr}_{n,k}$ for sufficiently high $p$. For the sake of fair comparison, points are sampled from $\mathbf{GPD}\left(\mathcal{X},p\right)$ not as orthogonal projection matrices $P$, but as Stiefel matrices $X$ satisfying $P=XX^\top$. This is done by implementing the procedure of sampling from $\mathbf{GPD}\left(\mathcal{X},p\right)$ (Appendix~\ref{sec:geodesic_power_distribution}) in Pymanopt, which represents elements of $\mathbf{Gr}_{n,k}$ in terms of Stiefel matrices.

%



\begin{figure}[t]
    \centering
    \begin{overpic}[width=0.54\columnwidth,keepaspectratio]{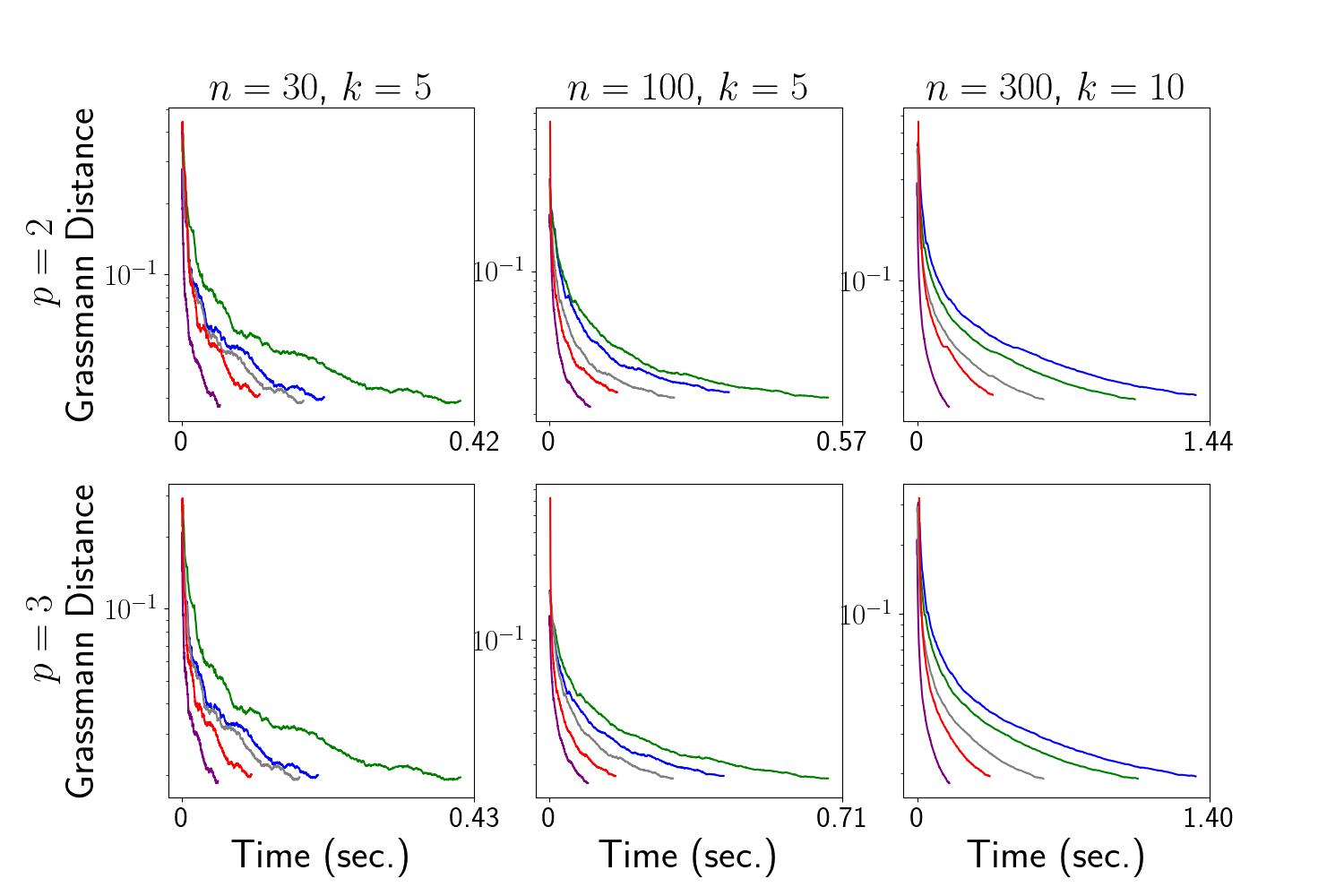}\put(5,70){\textbf{A}}\end{overpic}\begin{overpic}[width=0.54\columnwidth,keepaspectratio]{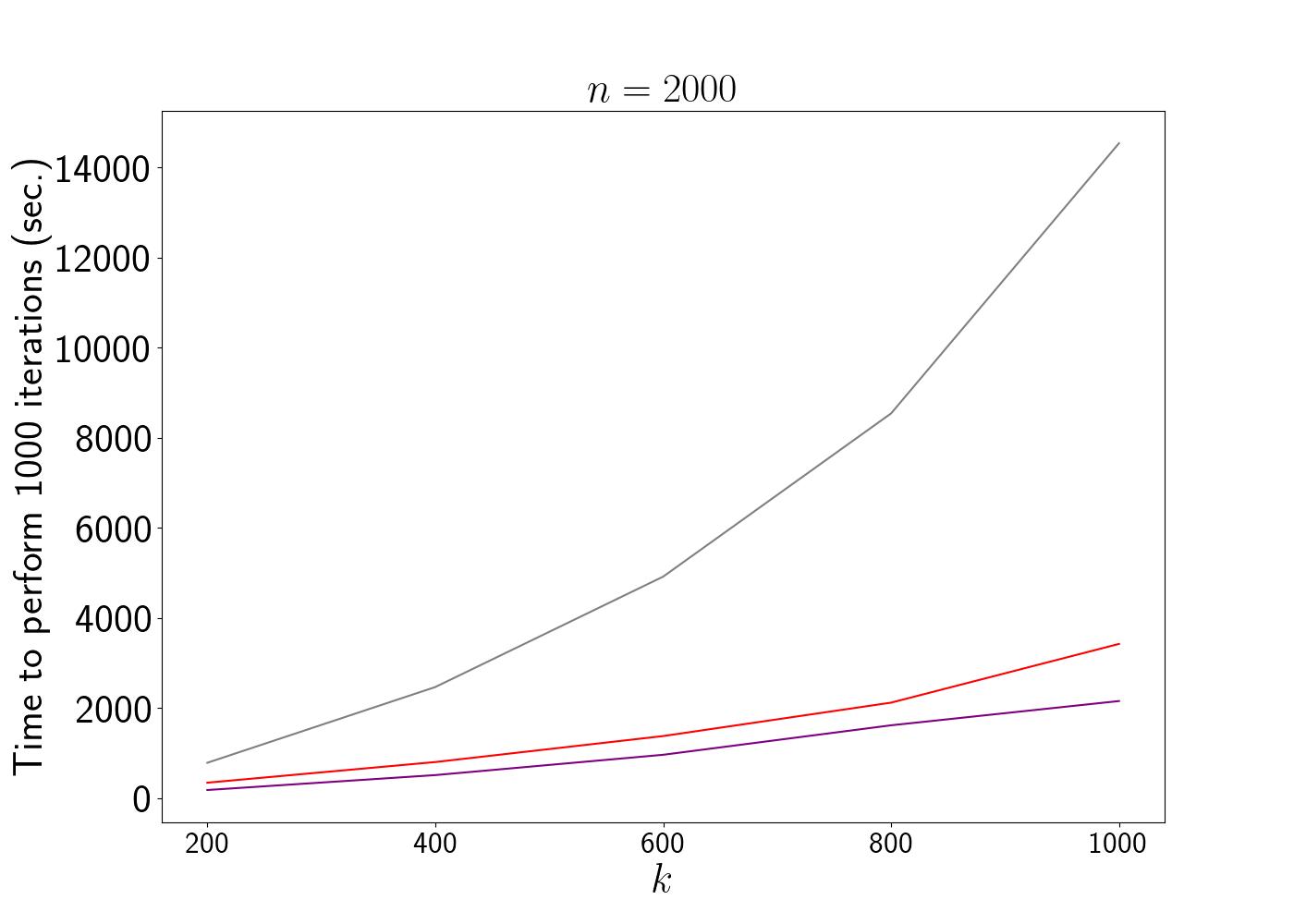}\put(5,70){\textbf{B}}\end{overpic}\vspace{0.8cm}
    \includegraphics[width=\columnwidth, trim={0cm, 1cm, 0cm, 2cm}]{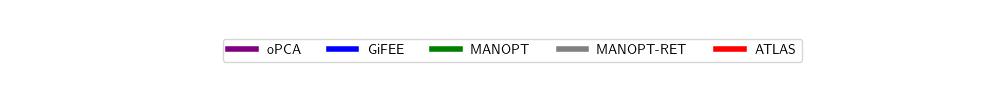}
    \caption{\textbf{Quasi-Euclidean updates in the atlas graph representation of the Grassmannian $\mathbf{Gr}_{n,k}$ converge faster to the population Fr\'echet mean than  other first-order update schemes.} 
    \textbf{A.} Plots show the error in  Grassmann distance (y axis) between the population Fr\'echet mean of $\mathbf{GPD}(\mathcal{X},p)$ and the estimate at each time point (x axis), for each first-order update scheme (color), for varying values of $p$ (rows) and of $n$ and $k$ (columns). \textbf{B.} For samples $\mathbf{X}\sim\mathbf{GPD}(\mathcal{X},p)$ for $n=2,000$ and $p=2$, the plot shows the runtime (y axis) as a function of $k$ (x axis),  for different first-order update schemes (color). MANOPT and GiFEE were excluded due to prohibitively long runtimes.  The analysis focuses on values of $k$ for $k\leq 1000$ because $\mathbf{Gr}_{n,k}$ is homeomorphic to $\mathbf{Gr}_{n,n-k}$. For all panels, 1,000 samples were generated on-line.}
    \label{fig:table_high_nk}
\end{figure}

The results of the experiments demonstrate that our {\atlasfrechet} algorithm has runtime superiority over the other first-order routines in a few ways. The atlas graph framework is faster than all manifold-based frameworks for experiments with $(n,k)$ set to $(30,5)$, $(100,5)$, or $(300,10)$ (Fig.~\ref{fig:table_high_nk}A). Further, for fixed $n$, the runtime of MANOPT-RET appears to grow quadratically with $k$, while the runtime appears to grow merely linearly for {\atlasfrechet} (Fig.~\ref{fig:table_high_nk}B).
This implies that even if there exist regimes in which MANOPT-RET is faster, {\atlasfrechet} outspeeds asymptotically.
All methods have similar, high accuracy---measured by geodesic distance between the GiFEE estimator and the population Fr\'echet mean---as a function of the number of iterations. 
Moreover, we note that, even if we are solving a harder, more structured problem in the Fr\'echet mean using manifold-based primitives, the running time of our algorithm tracks closely that of the very efficient oPCA.

\section{Results on learning manifolds from point clouds} \label{sec:manifold_learning}
Turning to point cloud data, we present an approach for constructing an atlas graph representation from a point cloud via ``approximate coordinate charts'' (Sec.~\ref{sec:smoothlyembedded}), followed by algorithms for approximating several differential-geometry primitives (Sec.~\ref{sec:primitives}). We apply these methods to create a representation of the manifold of high-contrast natural image patches that we show preserves homology and pairwise geodesic distances (Sec.~\ref{sec:carlsson_example}).

\subsection{Learning approximate coordinate charts from point clouds}\label{sec:smoothlyembedded}
To generalize the utility of atlas graphs to manifolds lacking known algebraic structure, we present a method for learning approximate coordinate charts from point cloud data (Fig.~\ref{fig:three_subplots}). Consistent with the Manifold Hypothesis, which states that most empirical data distributions observed in nature are well approximated as Riemannian manifolds \citep{fefferman}, we assume that we are given point cloud data sampled from an underlying manifold structure.
We use previous methods for local linear and quadratic approximations of point cloud data (reviewed in Appendix~\ref{sec:quad_approx_pt_cloud}) to formalize a notion of approximate coordinate charts that can be incorporated into our atlas graph framework. 


Our approach is based on the premise that, for a set of points $\left\{\vec{x}_1,\ldots,\vec{x}_N\right\}\subset\mathbb{R}^D$ sampled from a manifold $\mathcal{M}\subset\mathbb{R}^D$ and perturbed by sub-Gaussian noise, the tangent plane $T_{\vec{x}}\mathcal{M}$, interpreted as an affine subspace of $\mathbb{R}^D$, can be learned \citep{little}. Since the tangent plane $T_{\vec{x}}\mathcal{M}$ at point $\vec{x}\in\mathcal{U}$ is linearly isomorphic to $\mathbb{R}^d$ (where $d$ is the dimension of $\mathcal{M}$), a coordinate chart $(\mathcal{U},\mathcal{V},\varphi)$ of $\mathcal{M}$ can be learned using an open subset of $T_{\vec{x}}\mathcal{M}$ for $\mathcal{V}$. Then, we find a quadratic polynomial $f:\mathcal{V}\to\mathbb{R}^n$ in the coordinates $\vec{\xi}$ of $\mathcal{V}$ such that, for fixed $\vec{x}\in\mathcal{U}$ and $\vec{\xi}_{\vec{x}}=L^\top\left(\vec{x}-\vec{c}\right)$, the approximation $\vec{x}\approx f\left(\tv_{\vec{x}}\right)$ holds and $f$ has form
\begin{equation}\label{eqn:quad_form_general_body}
    f\left(\tv\right)=\frac{1}{2}MK\left(\tv\otimes\tv\right)+L\tv+\vec{c},
\end{equation}
where $MK$, $L$, and $\vec{c}$ contain the quadratic, linear, and constant terms of the approximation, respectively. Without loss of generality, we assume that $M$ and $L$ are orthogonal Stiefel matrices belonging to $\mathbb{R}^{D\times(D-d)}$ and $\mathbb{R}^{D\times d}$, respectively.

While the compressed chart $\mathcal{V}$ can be learned, an open subset $\mathcal{U}\subset\mathcal{M}$ cannot be. However, for sufficiently small $\mathcal{U}$, an approximation can be learned as the image of a quadratic polynomial. For an \textit{approximate coordinate chart}, instead of creating the coordinate chart $\left(\mathcal{U}\subset\mathcal{M},\mathcal{V},\varphi\right)$, we create the tuple $\left(\tilde{\mathcal{U}}\subset\mathbb{R}^D,\mathcal{V},\tilde{\varphi}\right)$, where:
\begin{enumerate}
    \setlength\itemsep{0em}
    \item $\tilde{\varphi}:\tilde{\mathcal{U}}\to\mathcal{V}$ is a homeomorphism;
    \item $\tilde{\varphi}^{-1}$ is the restriction of a quadratic polynomial in tangential coordinates from $T_p\mathcal{M}$ to its orthocomplement in $\mathbb{R}^D$ (i.e., $\tilde{\varphi}^{-1}=f$ from Equation~\ref{eqn:quad_form_general_body}); and
    \item the uniform measures on $\mathcal{U}$ and $\tilde{\mathcal{U}}$ are close in the sense of Wasserstein geometry\footnote{A quick overview of Wasserstein geometry is given by \citep{ay_2017}.}; this is always true when $\mathcal{U}$ can be approximated as the image of some quadratic function.
\end{enumerate}




\begin{figure}[t]
    \centering
    \includegraphics[width=0.25\columnwidth, align=c]{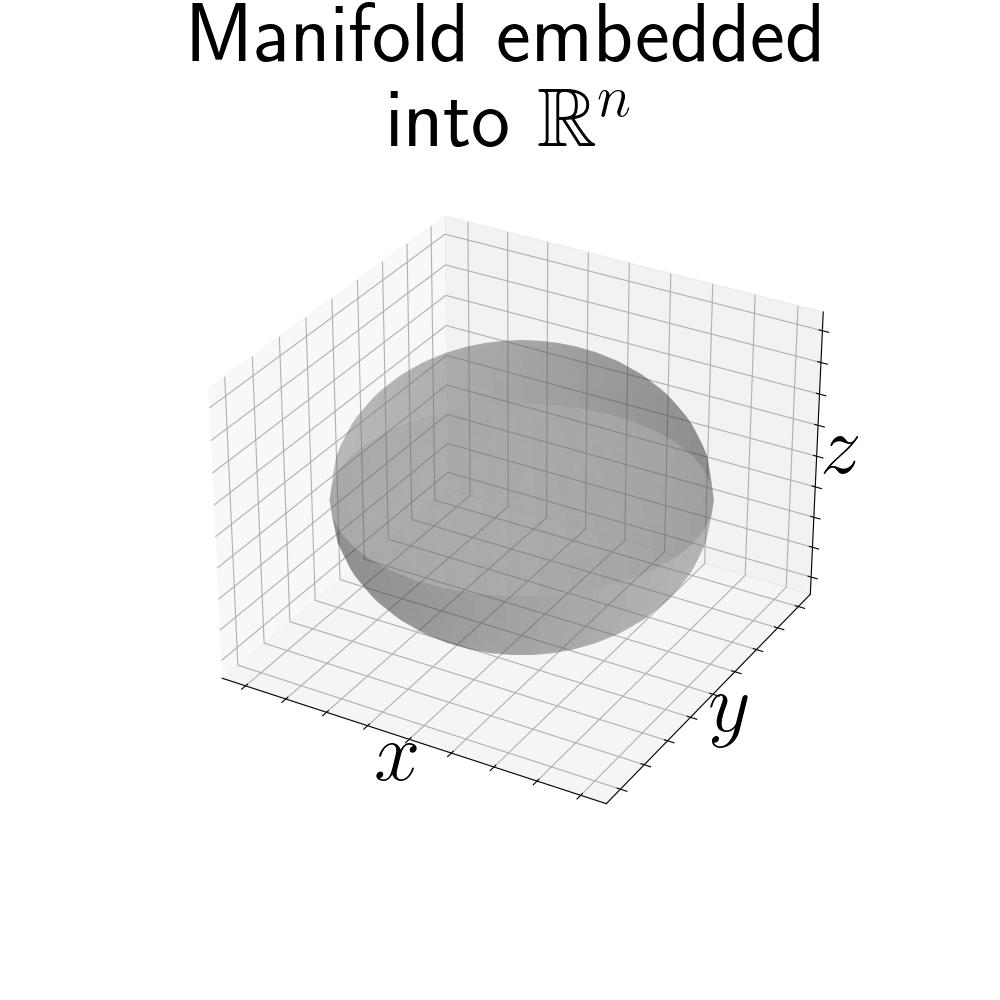}$\xrightarrow[\text{as}]{\text{observed}}$
    \includegraphics[width=0.25\columnwidth, align=c]{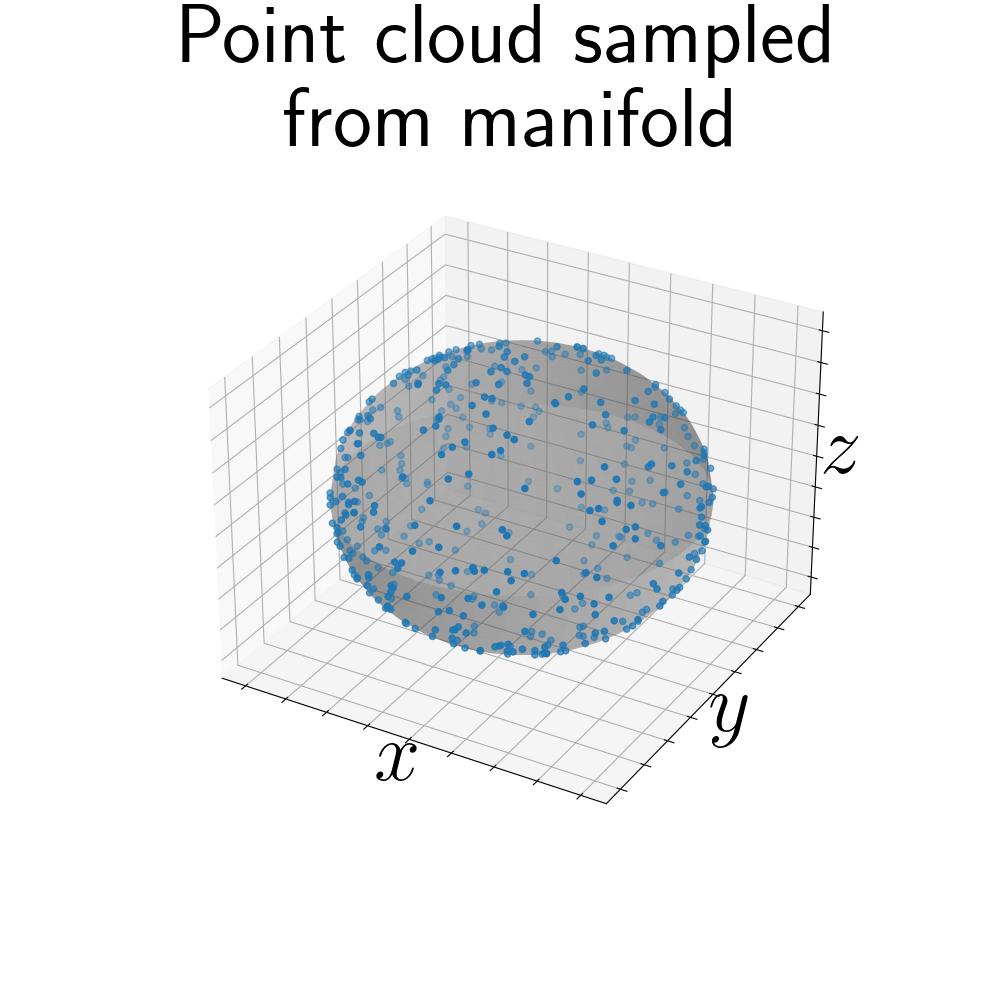}$\xrightarrow[\text{by}]{\text{approximated}}$
    \includegraphics[width=0.25\columnwidth, align=c]{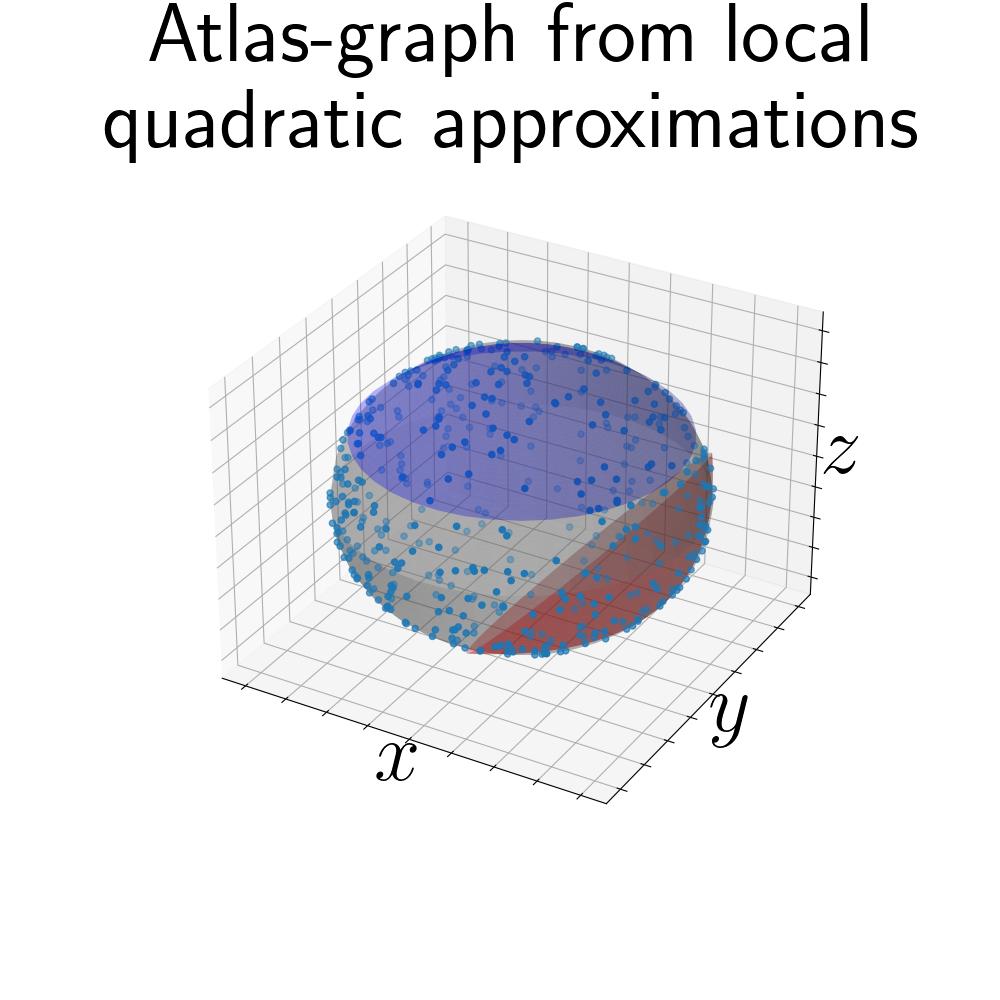}
    \caption{\textbf{Schematic for learning manifold structure from point cloud data using atlas graphs.} The Riemannian geometry of a manifold may not be observed directly but rather revealed via point clouds sampled from the manifold. An atlas graph can be learned from the point cloud to accurately recover the geometry of the underlying manifold. 
    }
    \label{fig:three_subplots}
\end{figure}

In the case of standard coordinate charts, one can define transition maps as a restriction of the composition $\varphi_\beta\circ\varphi_\alpha^{-1}$; however, this sort of composition of $\tilde{\varphi}_\beta$ and $\tilde{\varphi}_\alpha^{-1}$ is not generally possible for approximate coordinate charts, as the relationship between $\tilde{\mathcal{U}}_\alpha$ and $\tilde{\mathcal{U}}_\beta$ is ill defined. For this reason, in order to construct an atlas graph representation from approximate coordinate charts, we must adapt the notion of a transition map to this context. Given approximate coordinate charts with indices $\alpha,\beta$, we define the \textit{transition map} from $\mathcal{V}_\alpha$ to $\mathcal{V}_\beta$, i.e.,  $\tilde{\psi}_{\alpha\beta}:\mathcal{V}_\alpha\to\mathcal{V}_\beta$, by 
\begin{align*}
    \tv&\mapsto L_\beta^\top\left(\frac{1}{2}M_\alpha K_\alpha\left(\tv\otimes\tv\right)+L_\alpha\tv+\vec{c}_\alpha-\vec{c}_\beta\right),
\end{align*}
and $L_\alpha,L_\beta,M_\alpha,K_\alpha,\vec{c}_\alpha$, and $\vec{c}_\beta$ are as in Equation \ref{eqn:quad_form_general_body}. While these maps do not compose (i.e., $\tilde{\psi}_{\alpha\beta}\circ\tilde{\psi}_{\beta\alpha}$ is not generally defined), first-order updates (Sec.~\ref{sec:primitives}) only require maps to be injective, guaranteed under certain Lipschitz conditions (Claim \ref{clm:lipschitz}, Appendix~\ref{app:injectivity}).

\subsection{Computing differential-geometric primitives on point cloud manifolds}\label{sec:primitives}
To enable Riemannian optimization on approximate atlas graphs, we generalize the atlas graph computation of differential-geometric primitives, such as quasi-Euclidean updates, to the case of approximate atlas graphs. 
We then demonstrate the application of these primitives to the manifold of high-contrast natural image patches (Secs.~\ref{sec:carlsson_example} and~\ref{sec:rpb}). 

\subsubsection{Quasi-Euclidean updates in approximate atlas graphs}\label{sec:euclidean_updates}
For standard atlas graphs, the action of a representative tangent vector $\vec{\tau}$ upon representative $\vec{\xi}\in\mathcal{V}$ is simply computed as $\vec{\xi}+\vec{\tau}$ under a quasi-Euclidean update (Sec.~\ref{sec:descent}). 
In an approximate atlas graph, quasi-Euclidean updates enable computation of geodesics or retractions at the cost of slight numerical inaccuracy.
While numerical accuracy only depends upon the Christoffel symbols in the atlas-graph case,
it depends upon both the Christoffel symbols and the error of the local quadratic approximation in approximate atlas graphs.

The error contribution of local quadratic approximations can be reduced, at the potential expense of a larger number of coordinate charts, by ensuring that a neighborhood around each point in the manifold is well approximated by at least one quadratic approximation, i.e., that the manifold is ``well covered'' by approximate coordinate charts. While a theoretical analysis of this complex trade-off, which may depend on local curvature, is left for future work, we show in empirical examples that the trade-off can be managed effectively. Further, in the next section, we reduce the error contributions of both the Christoffel symbols and the quadratic approximations by carefully treating transition boundaries. 

\subsubsection{Learning transition boundaries from an ambient SVM}\label{sec:chart_boundaries}
To reduce the numerical inaccuracy contributed by both quadratic approximations and the Christoffel symbols in quasi-Euclidean updates on approximate atlas graphs~(Section~\ref{sec:euclidean_updates}), we need a means of determining when to transition from one coordinate chart to another. Hence, we formalize the notion of a transition boundary. We say that a point $\vec{x}_i\in\mathbb{R}^D$ \emph{belongs}\footnote{As coordinate charts on a manifold can overlap, a point may not belong to a single chart exclusively. The notion of belonging used here is strictly computational.} to approximate coordinate chart $\left(\tilde{\mathcal{U}},\mathcal{V},\tilde{\varphi}\right)$ if $\vec{x}_i$ is better approximated by the quadratic $\tilde{\varphi}^{-1}$ than by the quadratic of any other approximate coordinate chart.

For each approximate coordinate chart $\left(\tilde{\mathcal{U}},\mathcal{V},\tilde{\varphi}\right)$, we separate the points belonging to $\mathcal{V}$ from those belonging to other charts using a soft-margin support vector machine (SVM) with a quadratic kernel~(Appendix~\ref{sec:kernel_trick}). The separating hyperplane in the SVM's feature space reduces to the zero-locus of a quadratic in the original, ambient space. As our analysis shows (Lemma \ref{lem:quad_linear_vanish}, Appendix~\ref{sec:kernel_trick}), we can assume
that this quadratic takes the form $\vec{x}^\top A\vec{x}+c$ for some symmetric matrix $A$ and constant $c$. 
Substituting Equation~\ref{eqn:quad_form_general_body} into the quadratic $\vec{x}^\top A\vec{x}+c$ gives a quartic polynomial in $\vec{\xi}$ that we evaluate to determine when to invoke a transition map. Specifically, a transition map is invoked when the quartic is positive. The evaluation of this quartic is sped up by computing it in the form 
\begin{equation}\label{eqn:simplified_quartic}
    \left(\tv\otimes\tv\right)^\top A_4\left(\tv\otimes\tv\right)+\tv^\top A_3\left(\tv\otimes\tv\right)+A_{2a}\left(\tv\otimes\tv\right)+\tv^\top A_{2b}\tv+A_1\tv+a,
\end{equation} 
where the following terms are precomputed:
\begin{equation*}
    \begin{array}{ccccc}
        a=c+\vec{c}^\top A\vec{c} & \hphantom{1cm} & A_1=2\vec{c}^\top AL & \hphantom{1cm} & A_{2a}=\vec{c}^\top AMH \\
        A_{2b}=L^\top AL & \hphantom{1cm} & A_3=L^\top AMH & \hphantom{1cm} & A_4=\frac{1}{4}H^\top M^\top AMH.
    \end{array}
\end{equation*}
In addition to speeding up computation of the quartic, precomputing these terms is desirable from the perspective of space complexity, as none of the matrices in the reduced quartic have shapes contingent on the ambient dimension of the data.

\subsubsection{Approximate path lengths of quasi-Euclidean updates}\label{sec:app_naive_dist}
In Riemannian optimization algorithms that iterate over manifold-valued data, a loss function may weigh the contribution of each datum according to its distance from the current iterate \citep[e.g., stochastic Riemannian gradient descent,][]{hosseini_and_sra}. In regimes where quasi-Euclidean updates approximate the exponential map well, the paths traversed by quasi-Euclidean updates approximate geodesic paths well. Therefore, the lengths of quasi-Euclidean paths with respect to the metric inherited from the ambient space can be used to estimate (albeit overestimate\footnote{Overestimation results from the definition of geodesics as length-minimizing paths on a Riemannian manifold}) geodesic distances. For $\tv_0,\tv_1\in\mathcal{V}_i$, we refer to
\begin{equation}\label{eqn:naive_approx_dist}
    \tilde{d}_i\left(\tv_0,\tv_1\right)=\int_0^1\left(\left(\vec{\xi}_1-\vec{\xi}_0\right)^\top\left(\vec{\xi}_1-\vec{\xi}_0\right)+\sum_{j=1}^{D-d}\left(\left(\vec{\xi}_1-\vec{\xi}_0\right)^\top K_{ij}\left((1-t)\tv_0+t\tv_1\right)\right)^2\right)^{\frac{1}{2}}dt
\end{equation}
as the \textit{na\"ive approximate distance} between $\tv_0$ and $\tv_1$. A closed form is in Appendix~\ref{app:naive_app_dist}. 

To define the naive approximate distances between two points $\vec{\xi}_1\in\mathcal{V}_1,\vec{\xi}_2\in\mathcal{V}_2$ that are not in the same coordinate chart 
we use the shortest-path distance between their ambient representations $\vec{x}_1$ and $\vec{x}_2$, respectively, in a nearest-neighbor graph $G_{\text{NN}}$ created on points densely sampled from the approximate atlas graph (Algorithm~\ref{alg:dense_graph}), with edges weighted by na\"ive approximate distance.

\begin{algorithm}[t]
    \caption{Create a dense, unweighted subgraph for geodesic approximation}\label{alg:dense_graph}
    \begin{algorithmic}
        \Require Quad. approx. matrices $M_i,K_i,L_i,\vec{c}_i$ for all charts $i$\Comment{Equation \ref{eqn:quad_form_general_body}}
        \Require Chart radii $r_1,\ldots,r_k>0$
        \Require Boundary functions $f_1,\ldots,f_k:\mathbb{R}^d\to\mathbb{R}$\Comment{Equation~\ref{eqn:simplified_quartic}}
        \Require Vertex density parameter $\delta>0$\Comment{We use $\delta=0.1$}
        \Require Distance threshold parameter $\epsilon>0$\Comment{We use $\epsilon=0.6$}
        \State \texttt{V}$\gets\text{empty array}$
        \For{$i\gets1,\ldots,k$}
            \State $S\gets\left\{n\delta\left\lvert n\in\mathbb{Z}\cap\left(-\frac{r_i}{\delta},\frac{r_i}{\delta}\right)\right.\right\}$ \Comment{Evenly spaced points in $\mathbb{R}$}
            \For{$\vec{\xi}\in S^d$} \Comment{$S^d$ is the $d$th Cartesian power of $S$}
                \State $\vec{x}\gets\frac{1}{2}M_iK_i\left(\vec{\xi}\otimes\vec{\xi}\right)+L_i\vec{\xi}+\vec{c}_i$\Comment{Ambient representation of $\vec{\xi}$; Eq.~(\ref{eqn:quad_form_general_body})}
                \If{$f_i\left(\vec{x}\right)<0$}\Comment{Check membership of $\vec{x}$ in chart $i$; Eq.~(\ref{eqn:simplified_quartic})}
                    \State $\texttt{V.append}\left(\vec{x}\right)$
                \EndIf
            \EndFor
        \EndFor
        \State \texttt{E}$\gets\text{empty array}$
        \For{$\vec{x}\in\texttt{V}$}
            \For{$\vec{x}^\prime\in\texttt{V}$}
                \If{$\vec{x}\neq\vec{x}^\prime$}
                    \State $w\gets\left\lVert\vec{x}-\vec{x}^\prime\right\rVert_2$
                    \If{$w<\epsilon$} \Comment{Check that points are sufficiently close}
                        \State \texttt{E.append}$\left(\vec{x},\vec{x}^\prime\right)$
                    \EndIf
                \EndIf
            \EndFor
        \EndFor
        \State\Return $G_{\text{NN}}=(\texttt{V},\texttt{E})$
    \end{algorithmic}
\end{algorithm}

\subsubsection{Vector transport}\label{sec:vector_transport}
The computation of vector transports is essential for the implementation of Riemannian optimization routines for general manifolds \citep[e.g.,][]{hosseini_and_sra}. For approximating vector transports, we use two techniques: one for transporting a tangent vector between points within the same coordinate chart, and one for transporting a tangent vector between points that are close but belong to separate coordinate charts. We can assume these two cases without loss of generality, as vector transport between distant points can be approximated by iterated vector transports between near points, though this could lead to significant accumulation of numerical error. Moreover, several first-order Riemannian optimization techniques do not require vector transports across long distances \citep[e.g.,][]{hosseini_and_sra}. If points $\vec{x}$ and $\vec{x}^\prime$ belong to the same chart $\left(\mathcal{U}_\alpha,\mathcal{V}_\alpha,\tilde{\varphi}_\alpha\right)$, then we approximate the parallel transport 
as $\vec{\xi}^\prime-\vec{\xi}$, where $\vec{\xi}$ and $\vec{\xi}^\prime$ are the representatives of $\vec{x}$ and $\vec{x}^\prime$, respectively, as in Section~\ref{sec:vector_transport_def}. If $\vec{x}\in\mathcal{V}_\alpha$ and $\vec{x}^\prime\in\mathcal{V}_\beta$ belong to distinct charts,
then vector transport from $T_{\vec{x}}\mathcal{M}$ to $T_{\vec{x}^\prime}\mathcal{M}$ is approximated by taking representative tangent vector $\vec{\tau}$ to representative tangent vector $\left[D\tilde{\varphi}_\beta^{-1}\left(\tv^\prime\right)\right]^{-1}\left[D\tilde{\varphi}_\alpha^{-1}\left(\tv\right)\right]\left(\vec{\tau}\right)$, where $D\tilde{\varphi}^{-1}$ denotes the differential of $\tilde{\varphi}^{-1}$.

\subsubsection{Approximate Riemannian Logarithm}\label{sec:riem_log}

\begin{algorithm}[t]
    \caption{Approximate Riemannian logarithm}\label{alg:riem_log}
    \begin{algorithmic}
        \Require Dense subgraph $G_{\text{NN}}=(V\subset\mathcal{M},E\subset V\times V)$\Comment{Alg. \ref{alg:dense_graph}}
        \Require Weight function $W:E\to\mathbb{R}$ given by na\"ive approx. dist. \Comment{Sec.~\ref{sec:app_naive_dist}}
        \Require Quad. approx. matrices $M_i,K_i,L_i,\vec{c}_i$ for all charts $i$\Comment{Equation \ref{eqn:quad_form_general_body}}
        \Require basepoint $\vec{\xi}$ in chart $i$
        \Require point $\vec{\xi}^*$ in chart $i^*$, of which to take Riemannian logarithm
        \State $\vec{x}\gets\frac{1}{2}M_iK_i\left(\vec{\xi}\otimes\vec{\xi}\right)+L_i\vec{\xi}+\vec{c}_i$ \Comment{Ambient representation of $\vec{\xi}$}
        \State $\vec{x}^*\gets\frac{1}{2}M_{i^*}K_{i*}\left(\vec{\xi}\otimes\vec{\xi}\right)+L_{i^*}\vec{\xi}+\vec{c}_{i^*}$ \Comment{Ambient representation of $\vec{\xi}^*$}
        \State $\vec{v}\gets\text{point in }V\text{ closest to }\vec{x}$
        \State $\vec{v}^*\gets\text{point in }V\text{ closest to }\vec{x}^*$
        \State $\left(\vec{v},\vec{v}_1,\ldots,\vec{v}_j,\vec{v}^*\right)\gets\text{shortest path from }\vec{v}\text{ to }\vec{v}^*\text{ in }(V,E,W)$
        \LeftComment{$\vec{\xi}^\prime$ stores the cumulative Riem. log to be returned}
        \LeftComment{$\vec{\xi}_{\text{dq}}$ is the next point of which to compute the Riem. log locally; ``dq''$=$``dequeue''}
        \State $\xi^\prime\gets\vec{0}$ \Comment{Initialize Riem. log. to zero}
        \State $\vec{\xi}_{\texttt{dq}}\gets\vec{\xi}^*$ \Comment{First point of which to take local log. is $\vec{\xi}^\prime$}
        \State $i_\texttt{dq}\gets\text{chart index of }\vec{\xi}^*$
        \For{$\tilde{v}\gets\vec{v}^*,\vec{v}_j,\ldots,\vec{v}_1,\vec{v}$}
            \State $\tilde{i}\gets\text{chart index of }\tilde{v}$
            \State $\tilde{\xi}\gets\text{representation of }\tilde{v}\text{ in chart }\tilde{i}$
            \If{$i_\texttt{dq}=\tilde{i}$}\Comment{If both points are in the same chart}
                \State $\vec{\xi}^\prime\gets\vec{\xi}^\prime+\left(\vec{\xi}_\texttt{dq}-\tilde{\xi}\right)$ \Comment{Increment cumulative log. by local log.}
                \State $\vec{\xi}_\texttt{dq}\gets\tilde{\xi}$ \Comment{The base of the local log. becomes operand of next local log.}
            \Else
                \LeftComment{We assume by density of $G_\text{NN}$ that there is a valid representative in chart $\tilde{i}$}
                \State $\vec{\xi}_\texttt{dq}\gets L_{\tilde{i}}^\top\left(\frac{1}{2}M_{i_\texttt{dq}}K_{i_\texttt{dq}}\left(\vec{\xi}_\texttt{dq}\otimes\vec{\xi}_\texttt{dq}\right)+L_{i_\texttt{dq}}\vec{\xi}_\texttt{dq}+\vec{c}_{i_\texttt{dq}}\right)$ \Comment{Rep. of $\vec{\xi}_\texttt{dq}$ in chart $\tilde{i}$}
                \State $\vec{\xi}^\prime\gets L_{\tilde{i}}^\top\left(\frac{1}{2}M_{i^\prime}K_{i^\prime}\left(\vec{\xi}^\prime\otimes\vec{\xi}^\prime\right)+L_{i^\prime}\vec{\xi}^\prime+\vec{c}_{i^\prime}\right)$ \Comment{Rep. of $\vec{\xi}^\prime$ in chart $\tilde{i}$}
                \State $\vec{\xi}^\prime\gets\vec{\xi}^\prime+\left(\vec{\xi}_\texttt{dq}-\tilde{\xi}\right)$ \Comment{Increment cumulative log. by local log.}
                \State $i_\texttt{dq}\gets\tilde{i}$
            \EndIf
        \EndFor
        \State $\vec{\xi}^\prime\gets\vec{\xi}^\prime-\vec{\xi}$ \Comment{Lastly, increment log. by $\mathbf{Log}^\mathbf{Ret}_{\vec{\xi}}\left(\vec{\xi}^\prime\right)$}
        \State\Return $\vec{\xi}^\prime$
    \end{algorithmic}
\end{algorithm}

When performing variants of Riemannian gradient descent on a loss function determined by manifold-valued data \citep[as surveyed in][]{hosseini_and_sra}, the contribution of each datum to the Riemannian gradient of the loss function involves computing retraction logarithms~(Section~\ref{sec:differentiable_mfld}). For the quasi-Euclidean retraction on an atlas graph, retraction logarithms can be computed easily. Specifically, for points $\vec{\xi}_1, \vec{\xi}_2$ belonging to the same compressed chart $\mathcal{V}$, the retraction logarithm of $\vec{\xi}_2$ at $\vec{\xi}_1$, or $\mathbf{Log}^{\mathbf{Ret}}_{\vec{\xi}_1}(\vec{\xi}_2)$, is simply $\vec{\xi}_2-\vec{\xi}_1$; for points $\vec{\xi}_1, \vec{\xi}_2$ that do not belong to the same compressed chart, we approximate $\mathbf{Log}^{\mathbf{Ret}}_{\vec{\xi}_1}(\vec{\xi}_2)$ using the shortest path between $\vec{\xi}_1$ and $\vec{\xi}_2$ on the nearest neighbor graph $G_{\text{NN}}$ of densely sampled points from the approximate atlas graph (Sec.~\ref{sec:app_naive_dist}, Algorithm~\ref{alg:dense_graph}), by iteratively summing and vector-transporting the edgewise retraction logarithms from the endpoint of the path back to the start (Algorithm~\ref{alg:riem_log}).

\subsection{Atlas graph representation of high-contrast image patches}\label{sec:carlsson_example}
\begin{figure}[t]
    \centering
    \begin{subfigure}{0.5\columnwidth}
        \begin{overpic}[width=\columnwidth]{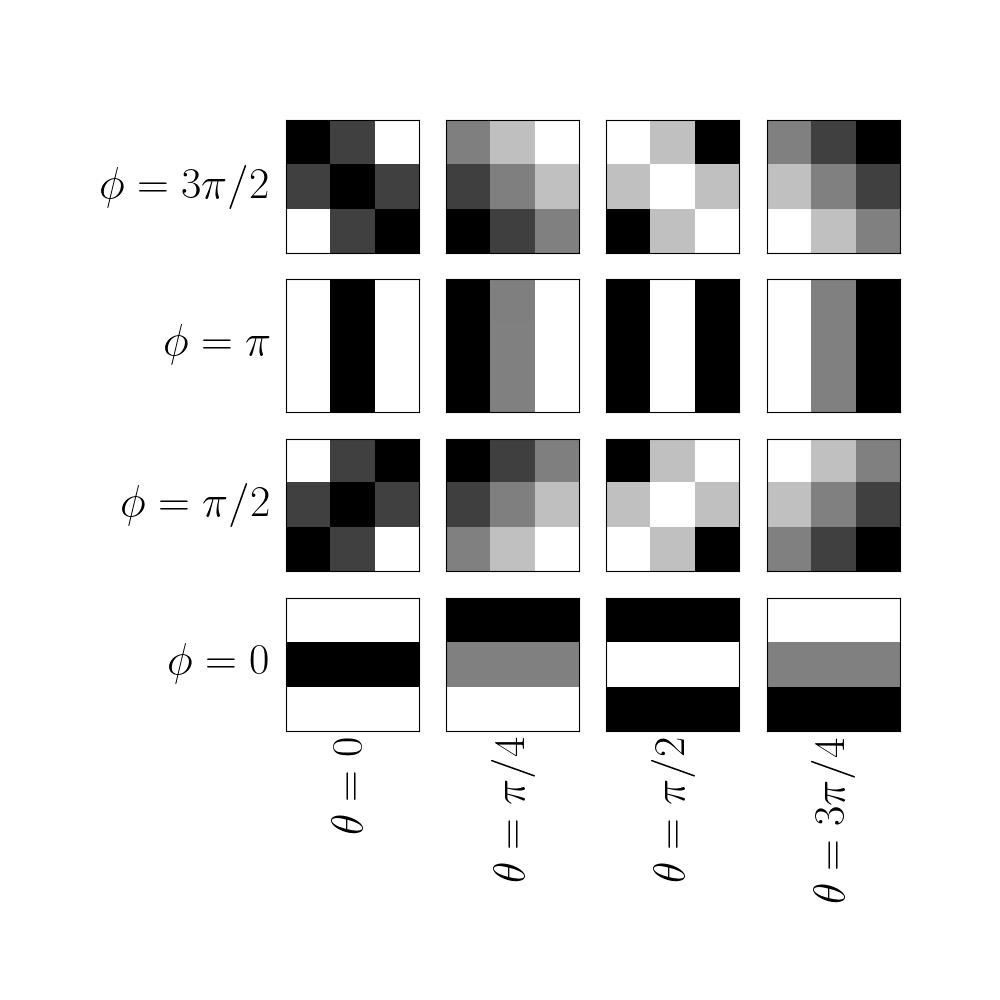}\put(5,95){\textbf{A}}\end{overpic}
    \end{subfigure}
    \begin{subfigure}{0.45\columnwidth}
        \begin{overpic}[width=\columnwidth]{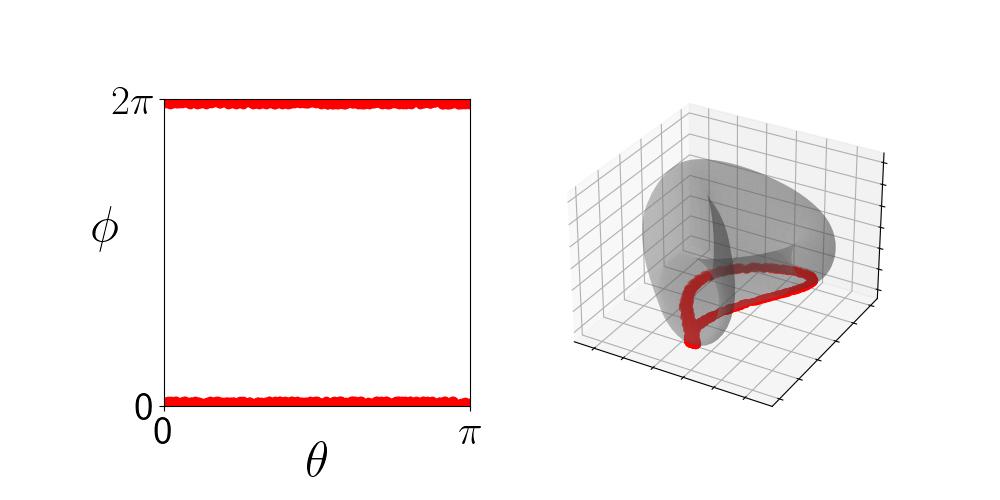}\put(0,50){\textbf{B}}\end{overpic} \\
        \begin{overpic}[width=\columnwidth]{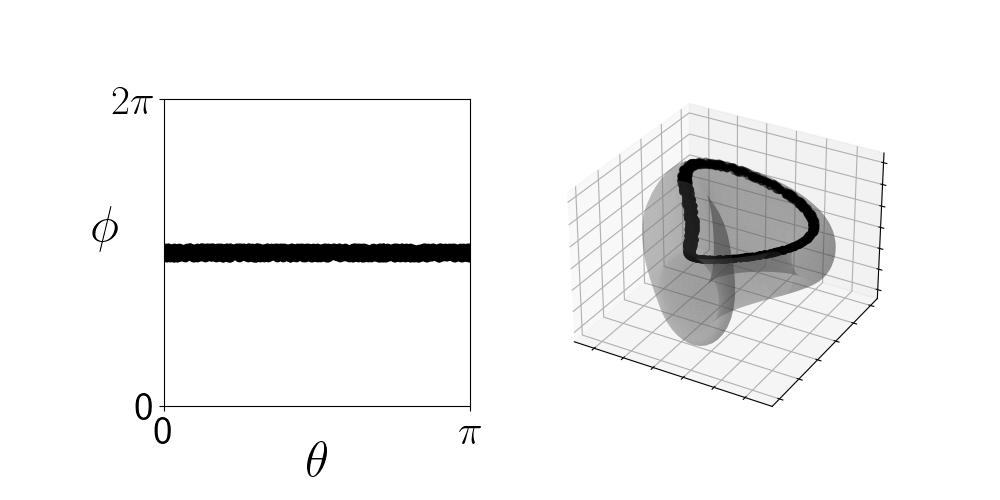}\put(0,50){\textbf{C}}\end{overpic}
    \end{subfigure}
    \caption{\textbf{In the manifold of Carlsson's $3 \times 3$ high-contrast image patches, convex and concave patches are disjoint classes separated by a natural boundary.} \textbf{A.} Examples of high-contrast patches in  the  manifold, parameterized as in Sec.~\ref{sec:high_contrast}, including convex patches (column ``$\theta=0$'') and concave patches (column  ``$\theta=\pi/2$''). \textbf{B.} Samples of convex patches (red), shown using the polar coordinate representation (left) and the Karcher visualization (right) of the Klein bottle. \textbf{C.} Analogous plots for concave patches (black).}
    \label{fig:pos_class}
\end{figure}
To demonstrate learning an atlas graph representation using approximate coordinate charts, we apply it to data points sampled from a manifold of high-contrast, natural image patches, which is homeomorphic to the Klein bottle \citep{carlsson}. 
These data are useful for methods development, as they lie on a manifold of previously characterized topology \citep{carlsson} that permits simple visualizations, as it has intrinsic dimensionality of two. 

First, we briefly review a parameterization of the manifold of high-contrast patches \citep{sritharan}. 
We then leverage this parameterization, our methodology (Sec.~\ref{sec:smoothlyembedded}), and a specific cover of the manifold to construct an atlas graph representation of approximate coordinate charts.  
Lastly, we demonstrate that this representation preserves the homology of the underlying manifold, as well as geodesic distances between points.

\subsubsection{Carlsson's high-contrast patches}\label{sec:high_contrast}

The Klein bottle can be parameterized by the polynomials  $k_{\theta,\phi}:\mathbb{R}^2 \to\mathbb{R}$ such that
\begin{equation*}
    k_{\theta,\phi}(x,y) \mapsto\cos\phi\left(x\cos\theta+y\sin\theta\right)^2+\sin\phi\left(x\cos\theta+y\sin\theta\right)
\end{equation*}
for $(\theta,\phi)\in[0,\pi]\times[0,2\pi]$ \citep{sritharan}, which is identified with the space of $3\times3$ high-contrast image 
patches by restricting the arguments of $k_{\theta,\phi}$ to $\{-1,0,1\}\times\{-1,0,1\}$.
    
This parametrization naturally splits the $3\times3$ patches into convex and concave classes, inducing a corresponding classification task (Fig.~\ref{fig:pos_class}) that we address later (Sec.~\ref{sec:rpb}). Fixing $\theta=0$, we consider the polynomials $k_{0,\phi}(x,y)$,
\begin{equation*}
    k_{0,\phi}:(x,y)\mapsto x^2\cos\phi+x\sin\phi.
\end{equation*}
These polynomials, which are constant in $y$ and quadratic in $x$, fail to be monotonic with respect to $x$ on $[-1,1]$ if and only if $-2<\tan\phi<2$. Those that are not monotonic then fall into two groups, depending on whether $\cos\phi$ is positive or negative, corresponding to convexity and concavity, respectively, in $x$. We therefore refer to patches corresponding to polynomials $k_{\theta,\phi}$ that satisfy $-2<\tan\phi<2$ as \textit{convex} if $\cos\phi>0$ and \textit{concave} otherwise. \\

\begin{figure}[t]
    \centering
    \begin{overpic}[width=0.45\columnwidth,keepaspectratio]{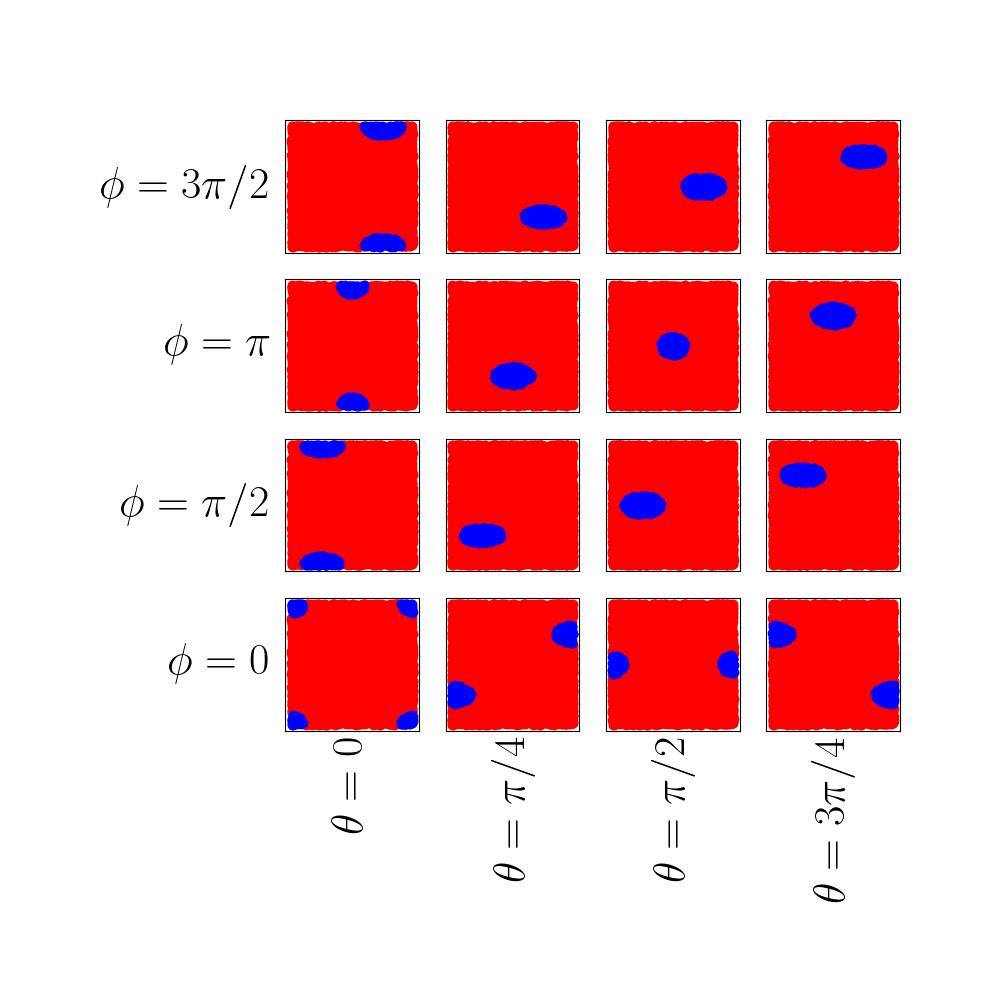} \put(5,95){\textbf{A}} \end{overpic}
    \begin{overpic}[width=0.45\columnwidth,keepaspectratio]{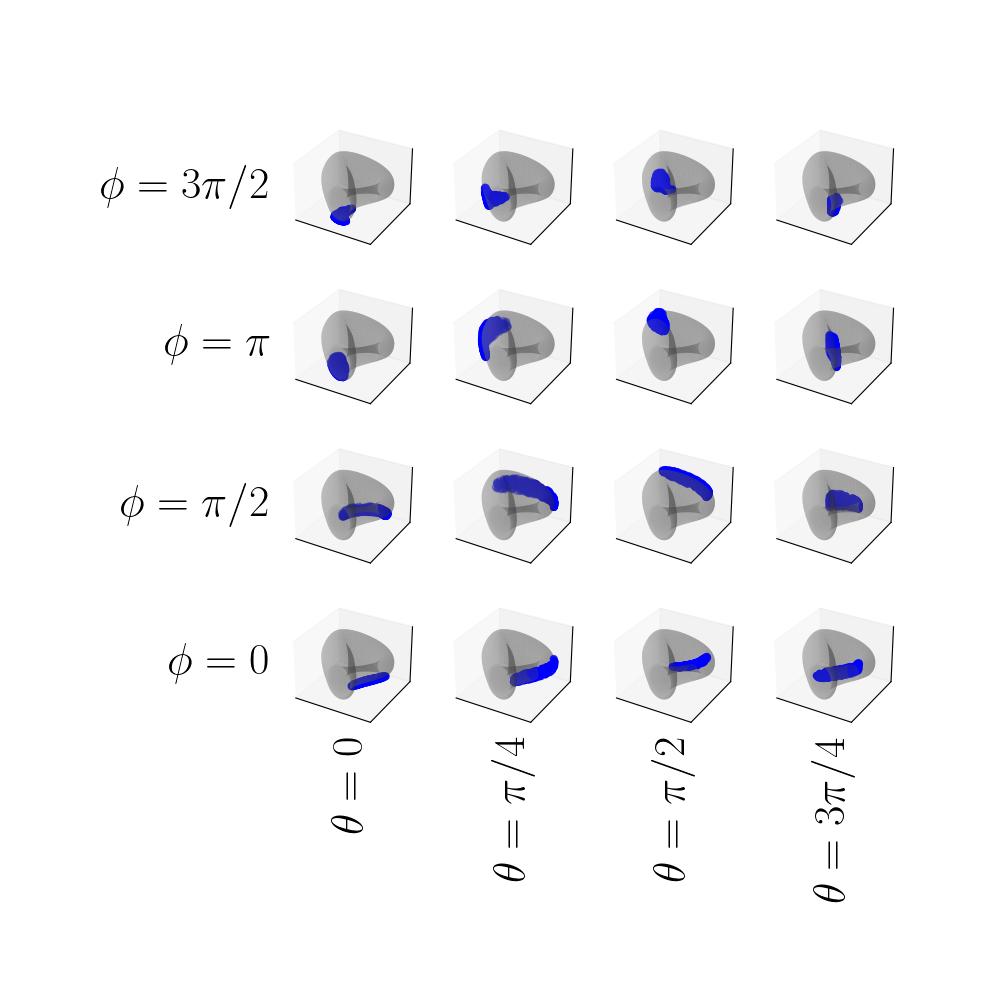}\put(5,95){\textbf{B}}
    \end{overpic}
    \caption{\textbf{Approximate coordinate charts 
    used in the chart cover of the high-contrast image patches manifold} (Sec.~\ref{sec:atlas_graph_for_carlsson}). \textbf{A.} Approximate coordinate charts in the atlas graph representation of the manifold are depicted in polar coordinates and indexed by the polar coordinates of the chart center $k(\theta,\phi)$ (Sec.~\ref{sec:high_contrast}). For each chart, points are colored blue  if they belong to the learned neighborhood of the chart center, 
    and red otherwise. \textbf{B.} The same coordinate charts are depicted on the Karcher representation of the Klein bottle. The full set of approximate coordinate charts appears in Fig.~\ref{fig:carlsson_atlas graph}.}
    \label{fig:carlsson_atlas_partial}
\end{figure}

\subsubsection{Constructing an approximate atlas graph of the manifold}\label{sec:atlas_graph_for_carlsson}
Using the notion of approximate coordinate charts, we construct an approximate atlas graph representation of the high-contrast $3 \times 3$ natural-image patches.
Similarly to our approach for constructing the atlas graph of the Grassman manifold from a known cover of the manifold (Sec.~\ref{sec:grass_example}), we construct an approximate atlas graph for the high-contrast image patches using a pre-defined cover of the manifold. Specifically, we select chart center points to be the centers of an $8\times 8$ grid in the polar coordinate representation of the Klein bottle. We then densely sample the manifold near these center points and construct local quadratic approximations from them
(Fig.~\ref{fig:carlsson_atlas_partial}).

To investigate how well the approximate atlas graph preserves homology and geometry, we generated samples from the atlas graph representation and from the Klein bottle parameterization \citep{sritharan} (Appendix~\ref{sec:persistent_homology}). We computed persistent homology on each, as well as on the latter samples after transformation, via several state-of-the-art dimensionality reduction methods. The results show that, unlike the other methods, the atlas graph preserves homological features of the natural image patches, as measured by an aggregate bottleneck distance we define (Fig.~\ref{fig:h2_homology}, Appendix~\ref{app:bottleneck_heatmap}). Geodesic distances between pairs of points were also better preserved by the atlas graph representation than by other dimensionality reduction methods (Fig.~\ref{fig:dist_fig}, Appendix~\ref{sec:geodesic_distances}). Moreover, the local quadratic approximations used to construct the approximate coordinate charts preserve scalar curvature and higher eigenvalues of the Laplace-Beltrami operator \citep{sritharan}.

\begin{figure}[t]
    \centering
    \begin{overpic}[width=0.33\columnwidth, trim=48 48 48 48, clip]{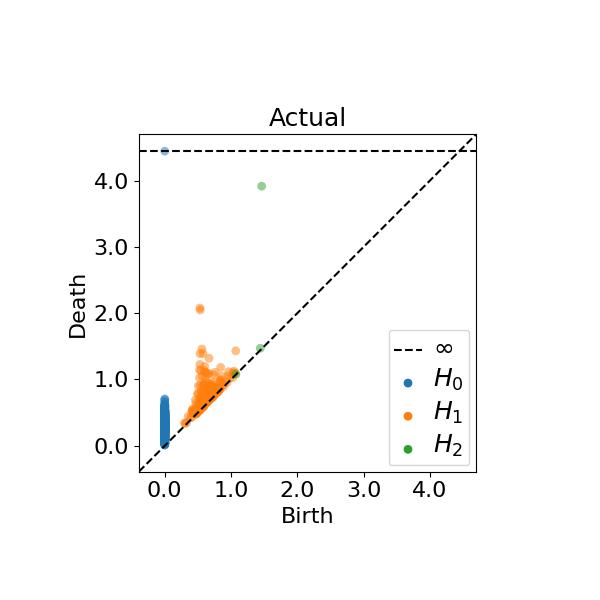}\put(5,95){\textbf{A}}\end{overpic}\begin{overpic}[width=0.33\columnwidth, trim=48 48 48 48, clip]{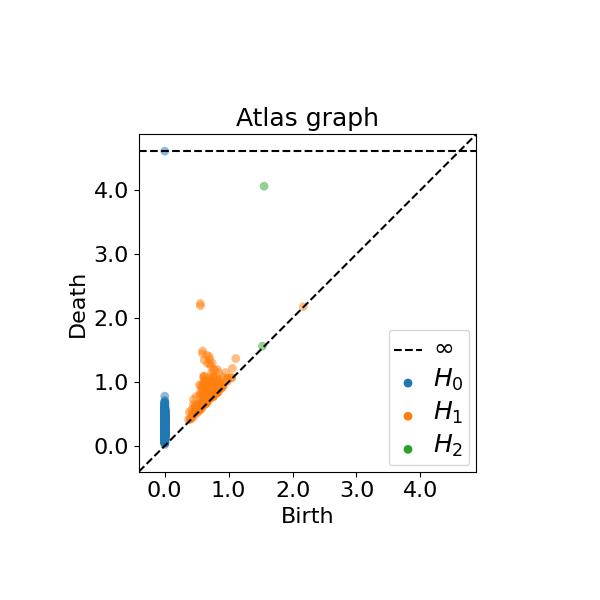}\put(5,95){\textbf{B}}\end{overpic}\begin{overpic}[width=0.33\columnwidth, trim=20 20 20 20, clip]{./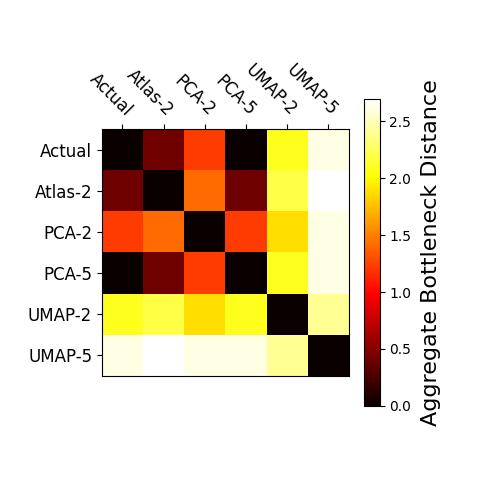}\put(5,95){\textbf{C}}\end{overpic}
    \caption{\textbf{Our atlas graph representation of the high-contrast image patches manifold preserves its H1 and H2 homology groups.} \textbf{A,B.} The persistence diagram computed from points sampled from the 
    parameterization of the Klein bottle (A) closely matches that computed from points sampled from our atlas graph representation (B) (Appendix~\ref{sec:persistent_homology}). \textbf{C.} The heatmap shows pairwise aggregate bottleneck distance (Appendix~\ref{app:bottleneck_heatmap}) between persistence diagrams generated from different dimensionality reductions (rows, columns). PCA-2, PCA-5: the first 2 and 5 principal components, respectively; UMAP-2, UMAP-5: 2-dimensional and 5-dimensional UMAP embeddings, respectively; Atlas-2: our atlas graph representation, which uses 2-dimensional coordinate charts.
    }\label{fig:h2_homology}
\end{figure}


\section{Riemannian optimization over a manifold learned from a point cloud} \label{sec:learned_manifold_optimization}
To test the capability of an atlas graph learned from a point cloud to enable Riemannian optimization and machine learning, we implemented and applied the Riemannian principal boundary (RPB) algorithm~\citep{yao_2020}, which generalizes the SVM to Riemannian geometry, to a manifold without closed form. 

\subsection{Riemannian principal boundary to classify image patches}\label{sec:rpb}


The natural image patches have a natural principal boundary between convex and concave patches (Fig.~\ref{fig:pos_class}), which we aim to identify using RPB algorithmic approach. %
For a manifold $\mathcal{M}$ of dimension two, the RPB algorithm learns a binary classifier by first learning one-dimensional ``boundary'' submanifolds $\Gamma,\Gamma^\prime\subset\mathcal{M}$ for each of the two classes, and then ``interpolating'' between these two boundaries to create a one-dimensional separating submanifold. This interpolation is done by characterizing $\Gamma$ and $\Gamma^\prime$ as ODEs parameterized by the class label, the solutions to which are called \emph{principal flows}. At each iteration of the algorithm, the solution curve to each principle flow is approximated locally as a short geodesic update in the direction of the first derivative given by the ODE. The boundary curve is simultaneously updated by parallel-transporting the first derivatives from the two principle flows, taking their weighted average, and taking a small geodesic update in the direction of this average. The resulting boundary curve, which is referred to as the \textit{principal boundary}, serves as a binary classifier. 

Computationally, we apply Euler's method over 2,000 iterations to simultaneously integrate the principal flows for the convex and concave patches, as well as the principal boundary ODE (Fig.~\ref{fig:gradual_path}). Our implementation of the RPB algorithm also includes slight modifications (Appendix \ref{sec:exp_carlsson}), designed to avoid degenerate cases where the boundary curve estimate collapses into the principal flow or a principle flow moves too far from the data. 

Our experiments show that the algorithm can successfully learn a boundary curve between the convex and concave patches, solely in terms of the geometry encoded in our atlas graph representation (Fig.~\ref{fig:gradual_path}). The learned boundary curve intuitively captures what would result from parallel-transporting the first derivative of either principle flow to a point equidistant between both principle flows, and following the unique induced geodesic.
\begin{figure}[t]
    \centering
    \begin{overpic}[width=0.23\columnwidth,trim=0cm 0cm 1cm 0cm, clip]{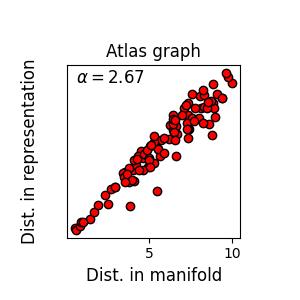}\put(5,95){\textbf{A}}\end{overpic}\begin{overpic}[width=0.23\columnwidth,trim=0cm 0cm 1cm 0cm, clip]{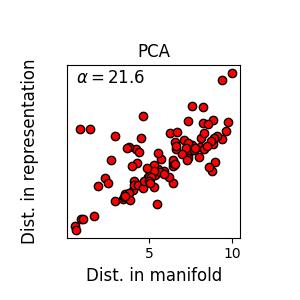}\put(5,95){\textbf{B}}\end{overpic}\begin{overpic}[width=0.23\columnwidth,trim=0cm 0cm 1cm 0cm, clip]{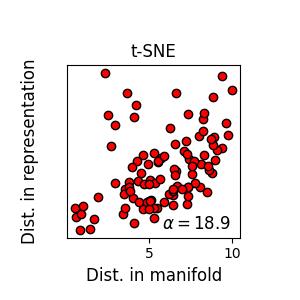}\put(5,95){\textbf{C}}\end{overpic}\begin{overpic}[width=0.23\columnwidth,trim=0cm 0cm 1cm 0cm, clip]{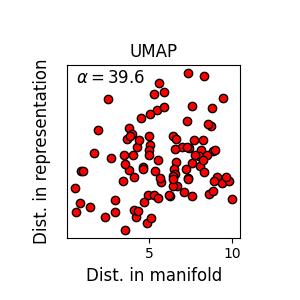}\put(5,95){\textbf{D}}\end{overpic}
    \caption{\textbf{Our atlas graph representation of the manifold of high-contrast image patches preserves pairwise geodesic distance better than other dimensionality reduction techniques.} \textbf{A.} For a random sample of points, the plot shows the manifold distance (x axis) between pairs of points versus the distance in the atlas graph representation (y axis), computed using na\"ive approximate distances (Sec.~\ref{sec:app_naive_dist}).     The metric distortion \citep[]{charikar_2018} is given by $\alpha$. \textbf{B--D}. Analogous plots for PCA, $t$-SNE, and UMAP representations (panel titles), with geodesic distances computed as in Appendix~\ref{sec:geodesic_distances}. 
    }
    \label{fig:dist_fig}
\end{figure}

\begin{figure}[t]
    \centering
    \begin{overpic}[width=0.23\columnwidth, trim={1cm, 0.5cm, 1cm, 2.5cm}]{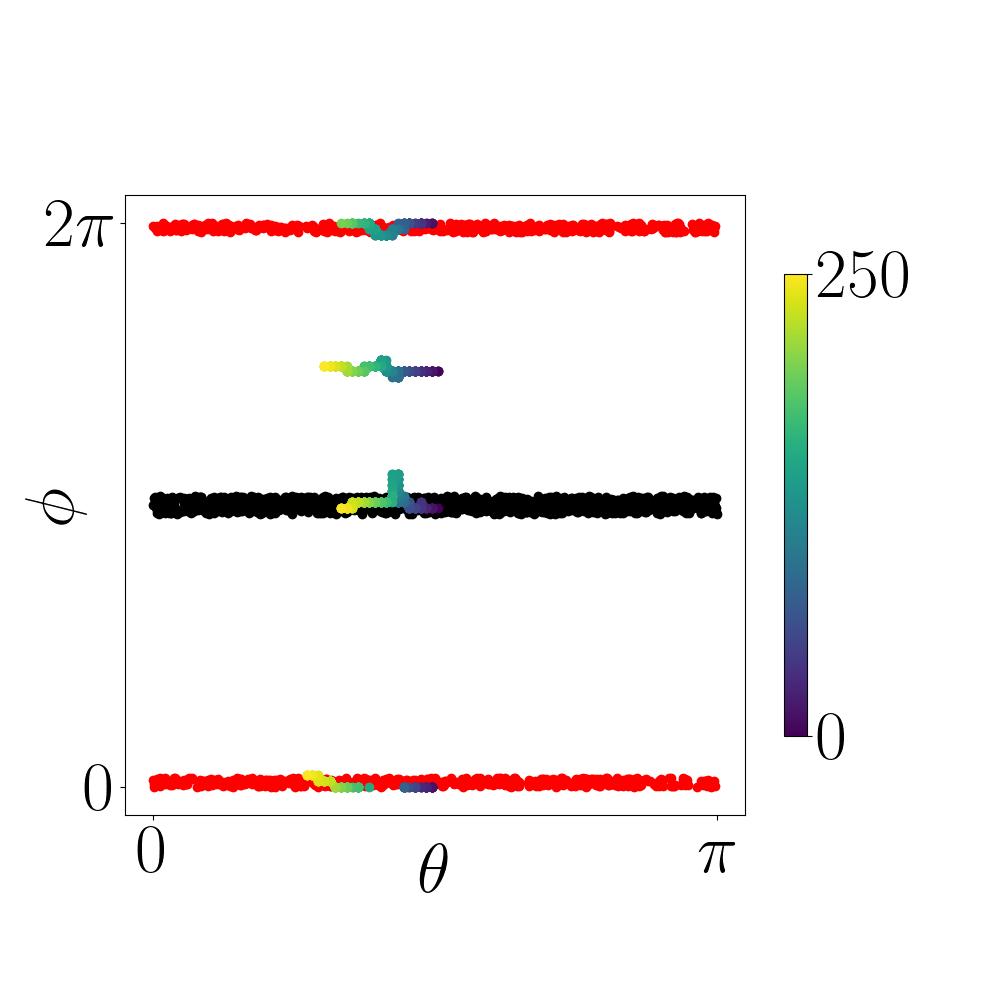}\put(-15, 0){\textbf{A}}\end{overpic}
    \includegraphics[width=0.23\columnwidth, trim={1cm, 0.5cm, 1cm, 2.5cm}]{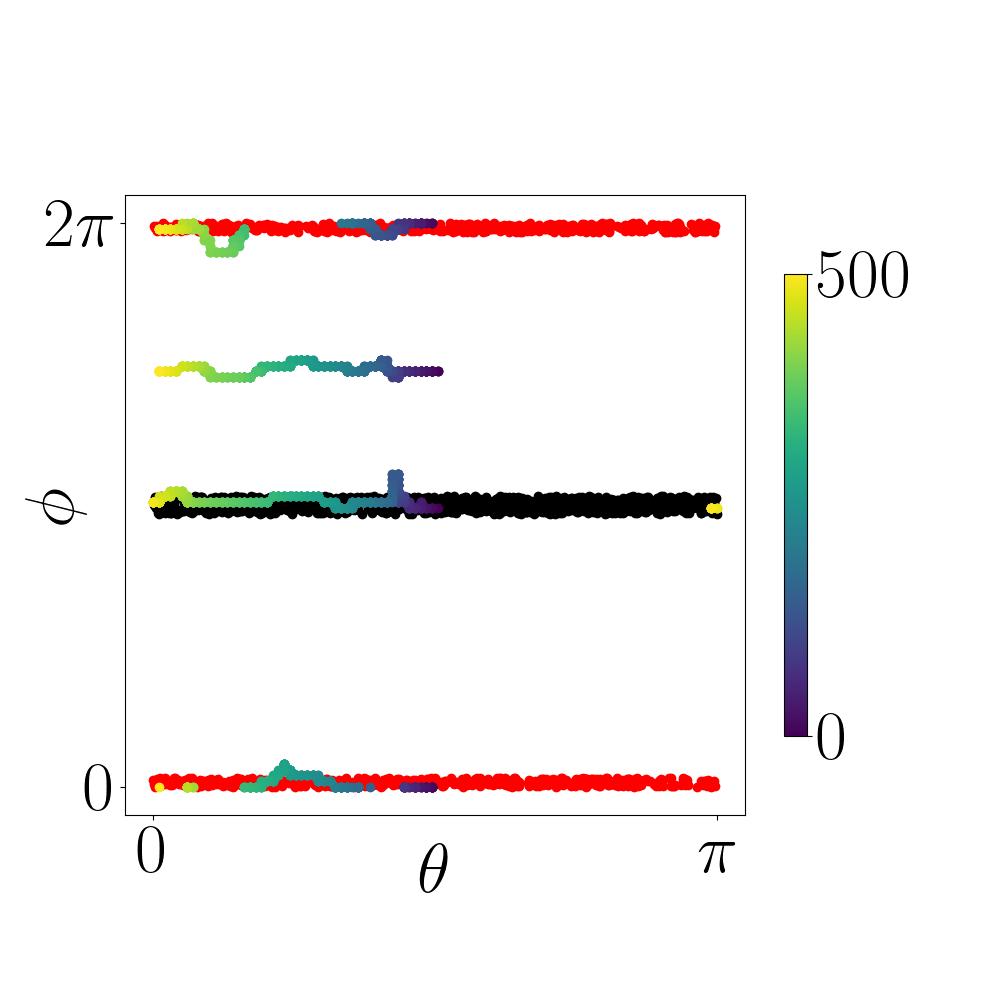}
    \includegraphics[width=0.23\columnwidth, trim={1cm, 0.5cm, 1cm, 2.5cm}]{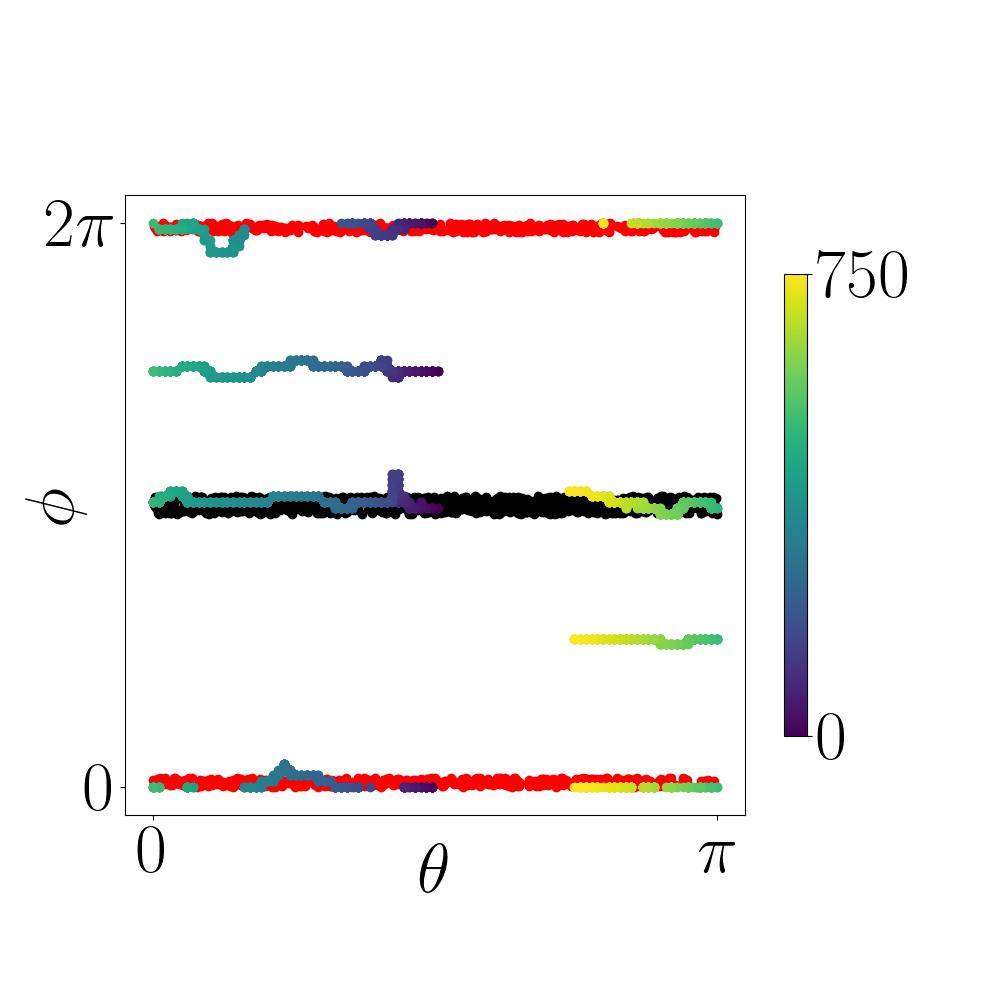}
    \includegraphics[width=0.23\columnwidth, trim={1cm, 0.5cm, 1cm, 2.5cm}]{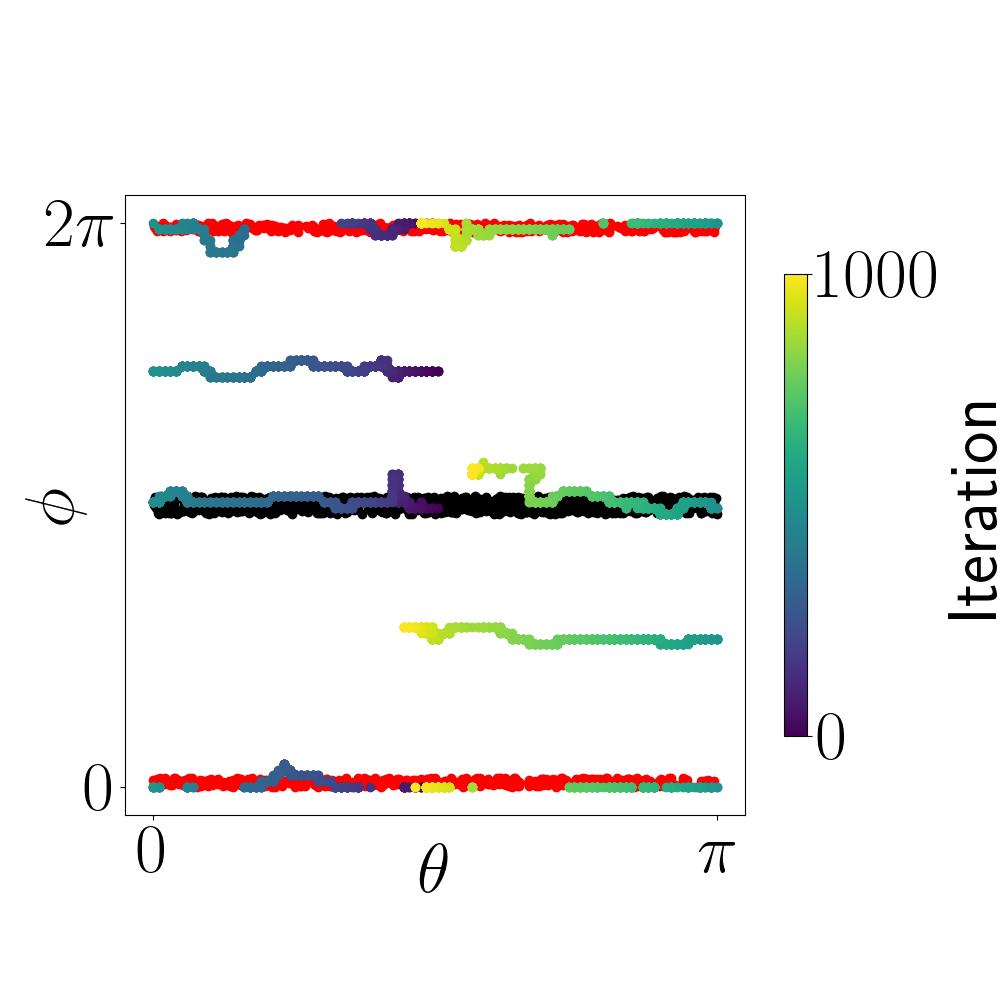} \\
    \includegraphics[width=0.23\columnwidth, trim={1cm, 0.5cm, 1cm, 2.5cm}]{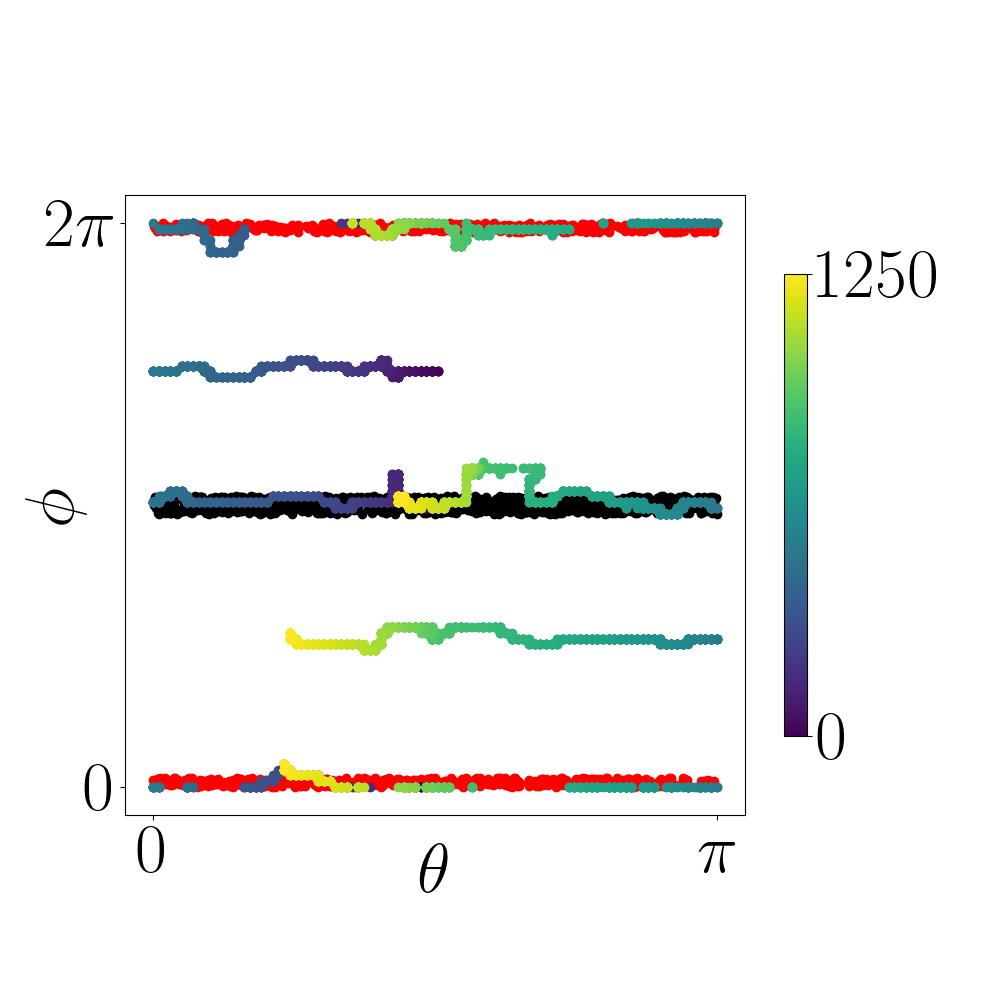}
    \includegraphics[width=0.23\columnwidth, trim={1cm, 0.5cm, 1cm, 2.5cm}]{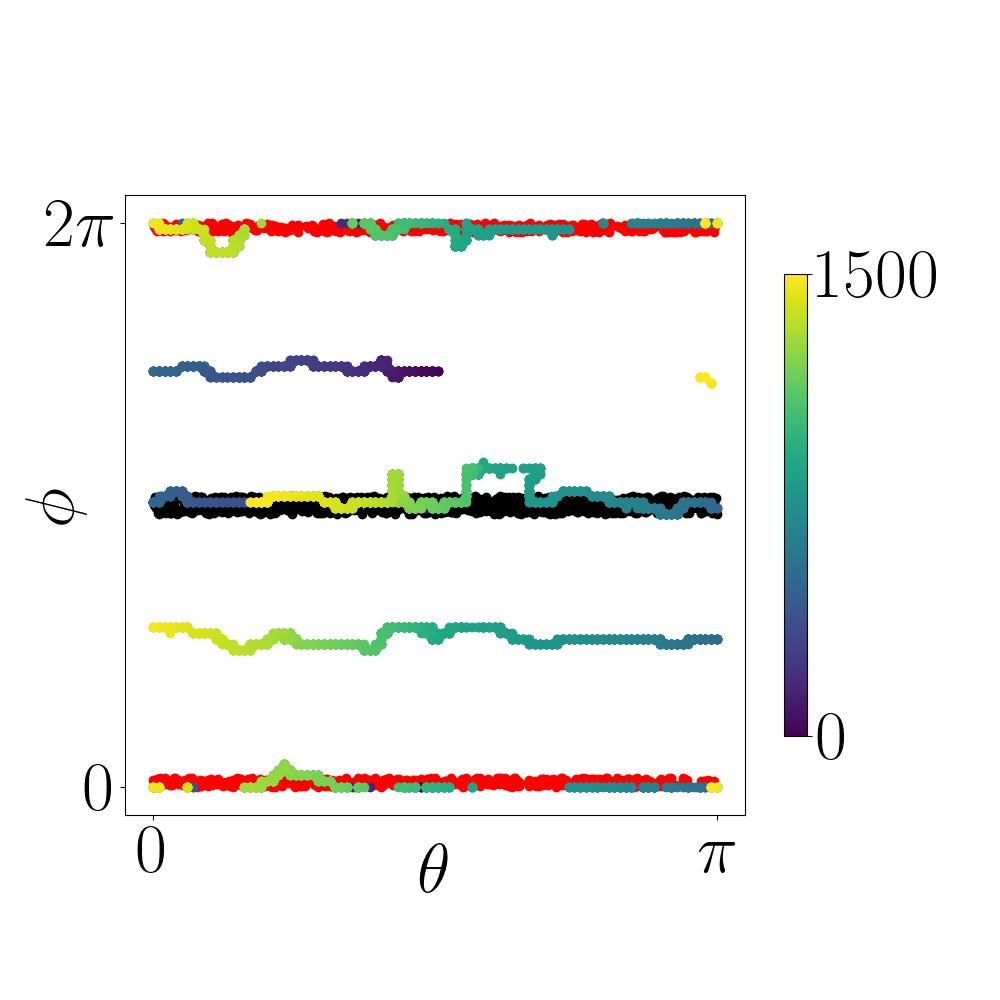}
    \includegraphics[width=0.23\columnwidth, trim={1cm, 0.5cm, 1cm, 2.5cm}]{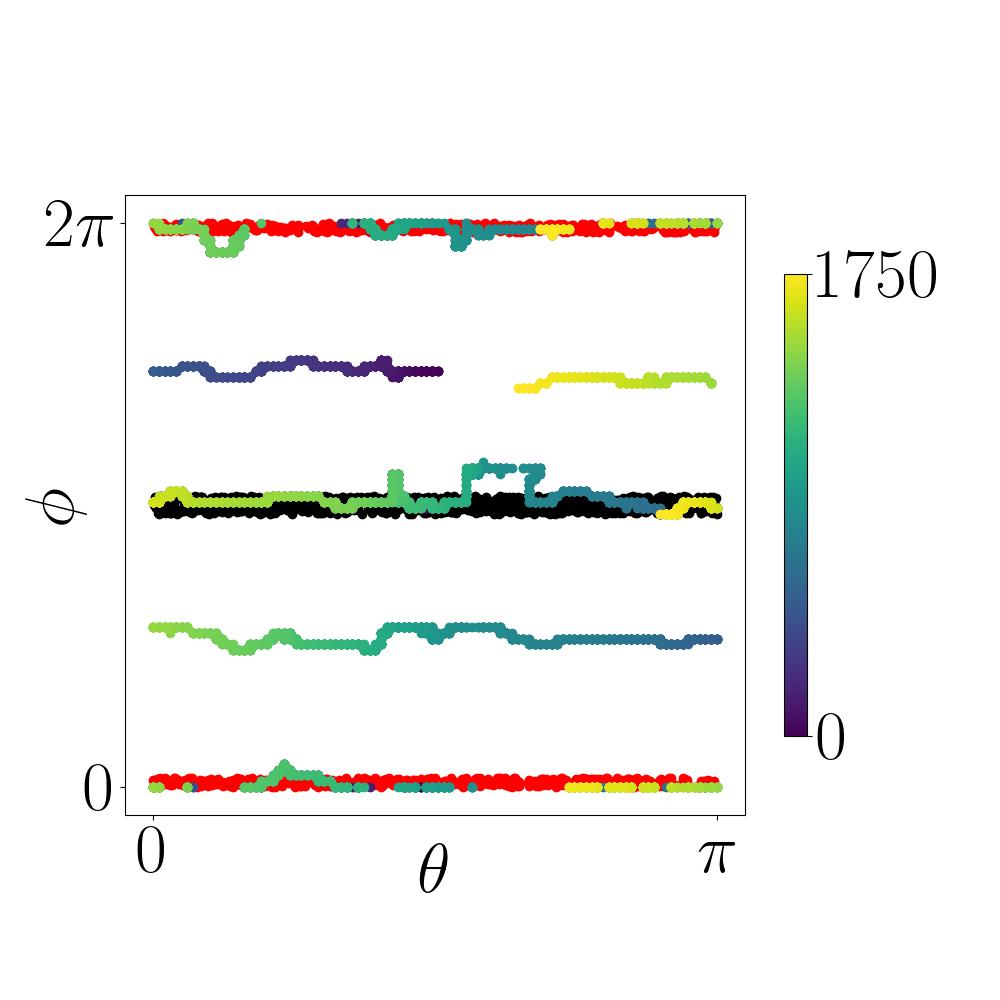}
    \includegraphics[width=0.23\columnwidth, trim={1cm, 0.5cm, 1cm, 2.5cm}]{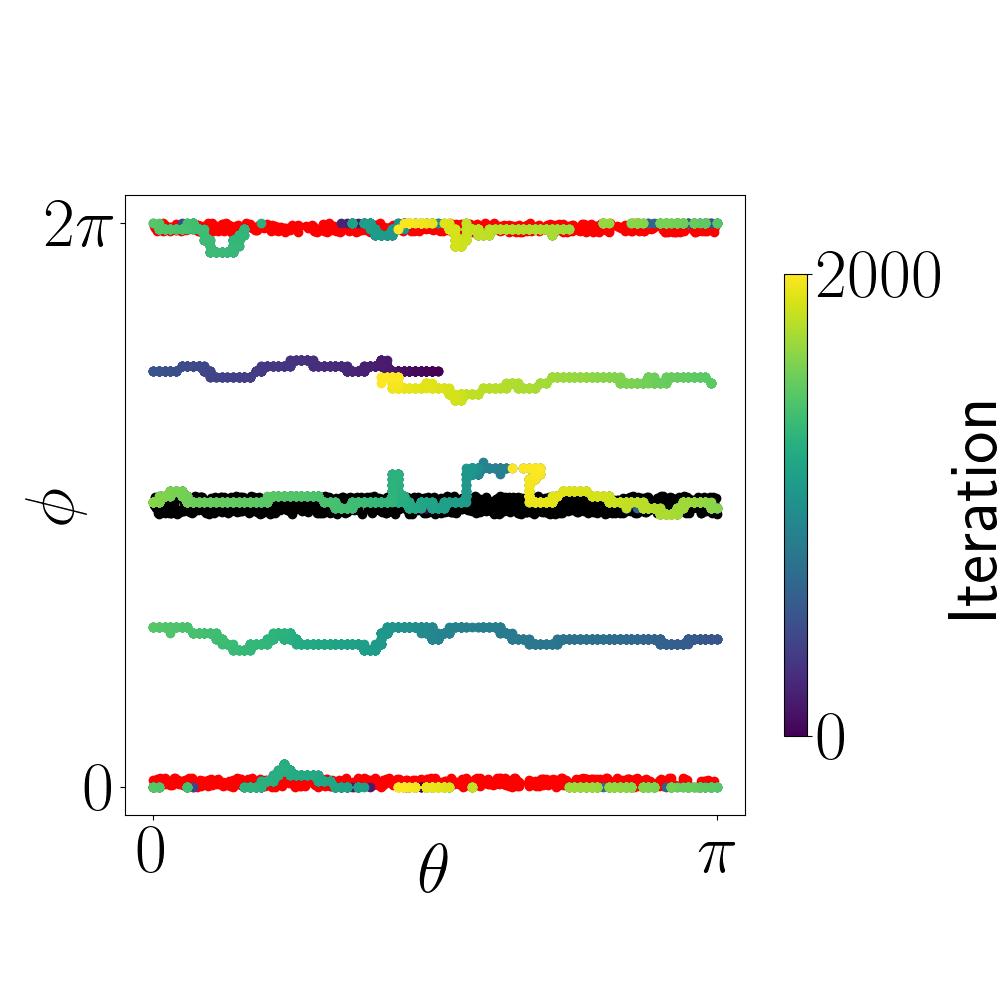}
    \begin{overpic}[width=0.3\columnwidth, trim={0.8cm, 1.8cm, 0.8cm, 1.8cm}]{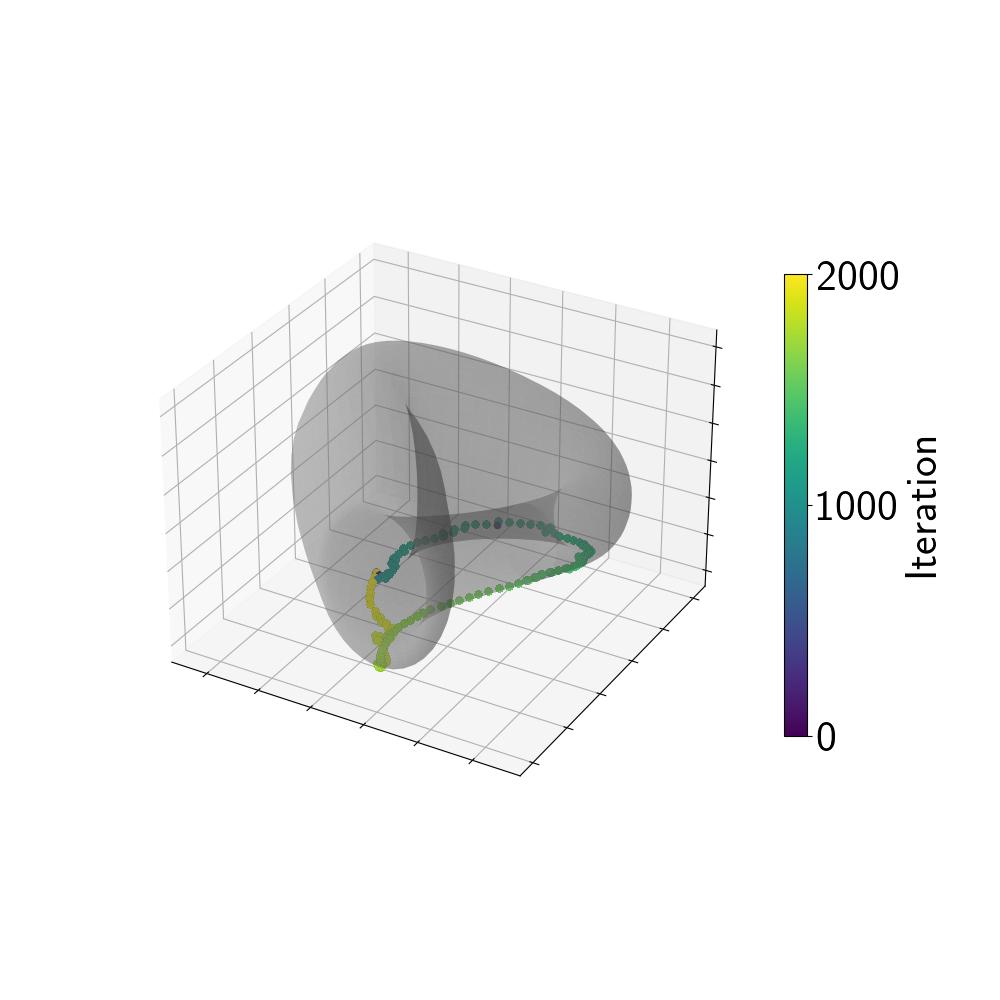}\put(5,75){\textbf{B}}\end{overpic}\begin{overpic}[width=0.3\columnwidth, trim={0.8cm, 1.8cm, 0.8cm, 1.8cm}]{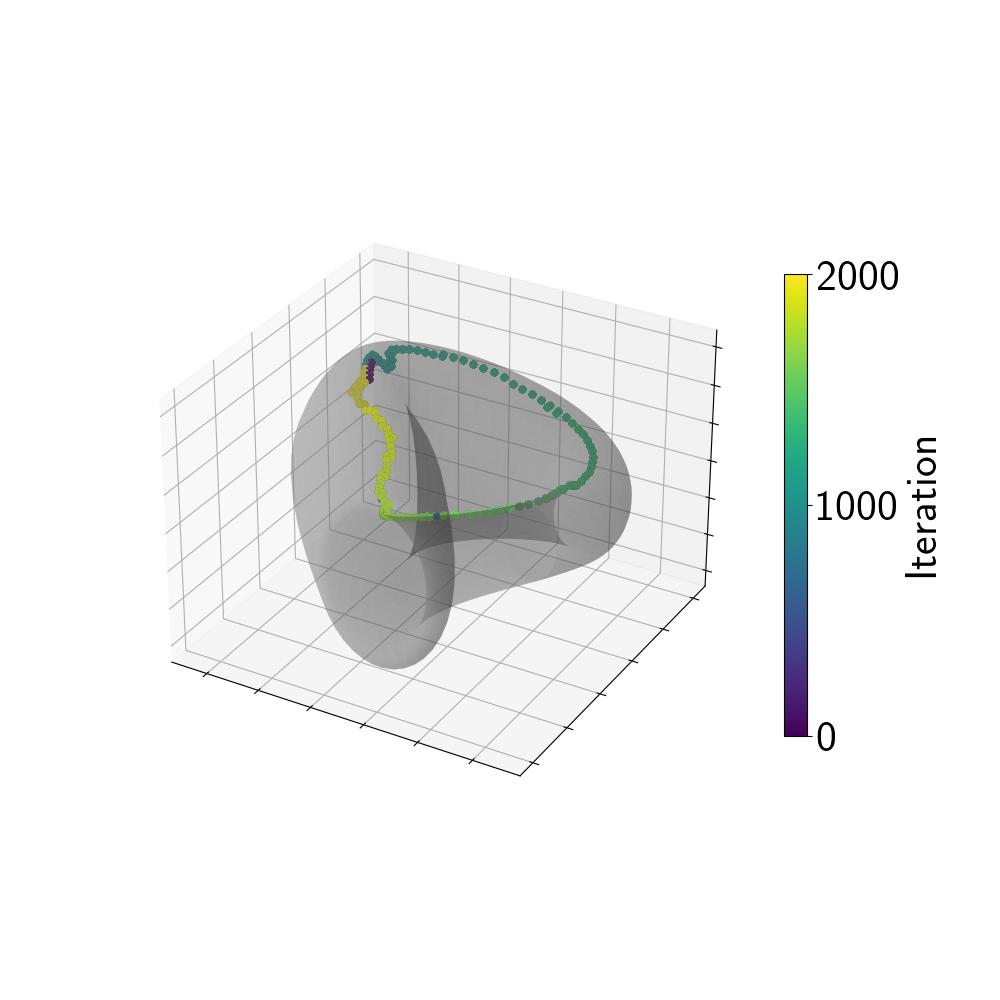}\put(5,75){\textbf{C}}\end{overpic}\begin{overpic}[width=0.3\columnwidth, trim={0.8cm, 1.8cm, 0.8cm, 1.8cm}]{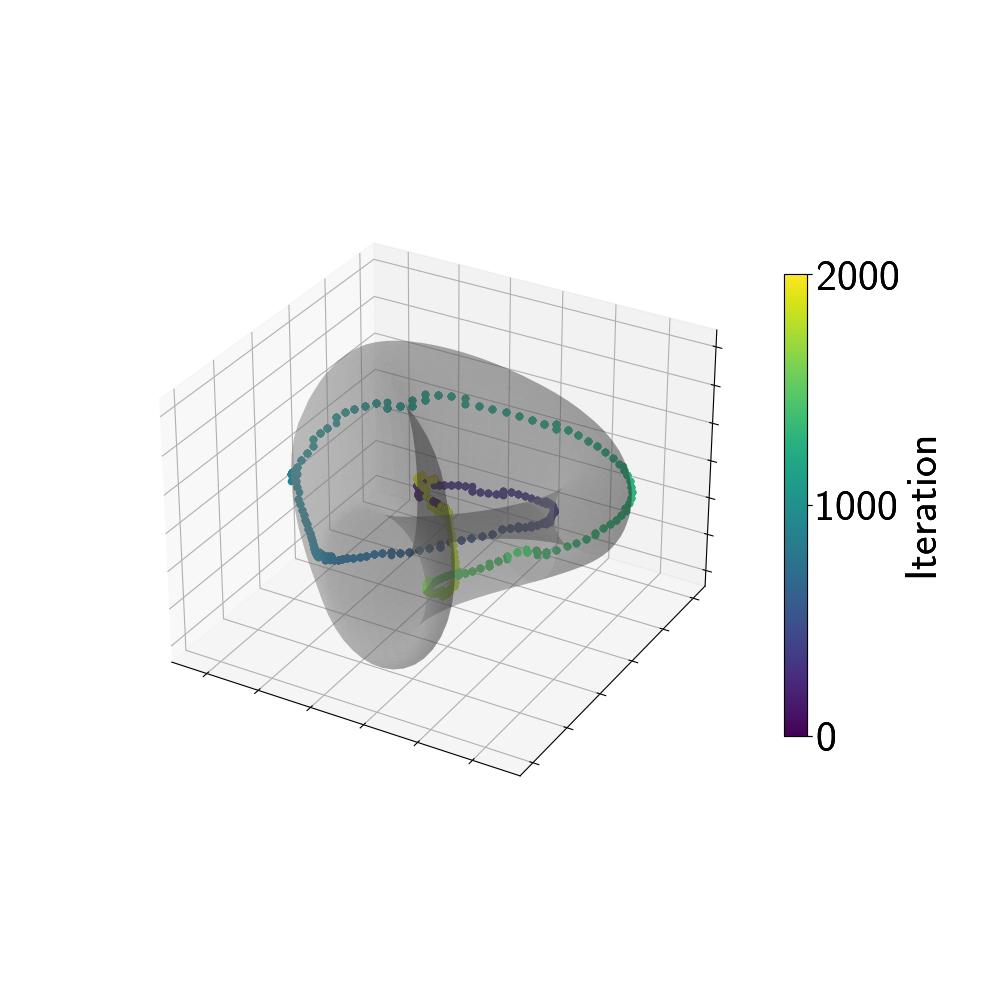}\put(5,75){\textbf{D}}\end{overpic}
    \caption{\textbf{The RPB algorithm applied to the atlas graph of the high-contrast image patches manifold infers representations of the convex and concave patch classes and learns a discriminating boundary.} \textbf{A.} Each panel is a polar parameterization of the manifold, with convex and concave sample points colored in red and black, respectively. Three paths are shown, corresponding to the simultaneous integrations of three ODEs: our modified versions of the Yao principal flow ODEs (Eq.~\ref{eqn:modified_yao}) for each of the convex and concave classes, and the RPB ODE. Points in each path are colored by the forward-Euler iteration at which they occur. The progression of the integration is shown in the sequence of panels from left to right, top to bottom, in multiples of 250 iterations. The final panel shows the complete solution, with the RPB curve acting as a separator between convex and concave patches. \textbf{B--D.} The complete paths for the convex patches (B), concave patches (C) and RPB algorithm (D), visualized on the Karcher representation of the Klein bottle.} 
    \label{fig:gradual_path}
\end{figure}

\section{Discussion}\label{sec:discussion}
Here we detail the advances introduced by this work, its limitations, and potential applications.  
\subsection{Advances} 
We present a universal manifold representation scheme, based on general principles of Riemannian manifolds. 
Our atlas graph scheme can be used to learn manifold representations from point cloud data, as well as to approximate differential-geometric primitives for Riemannian optimization and machine learning. 
We use the scheme to implement the RPB algorithm in the context of Carlsson's high-contrast image patches and show that it successfully estimates the intuitive principal boundary between the convex and concave classes of patches (Fig.~\ref{fig:gradual_path}). This represents the first application of a Riemannian optimization routine on a manifold with nontrivial differential-geometric primitives, using a representation learned from point cloud data. 

Given that our implementation depends upon parallel transports, path lengths, and 
Riemannian logarithms, all of which depend upon the particularities of the underlying Riemannian manifold, the successful recovery of the boundary suggests that the learned atlas graph closely approximates the actual Riemannian geometry. Indeed, all differential-geometric primitives in the RPB implementation are computed in terms of local quadratic approximations, with the exceptions of the Riemannian logarithms $\mathbf{Log}_xy$ and vector transports $\tau_{x\to y}\tv$ for distant $x$ and $y$. The latter computations rely upon finding shortest paths in a dense subgraph, which is itself constructed solely in terms of the atlas graph.
Moreover, using random samples from the atlas graph, we correctly recover the homology groups of, as well as geodesic distances in, the Klein bottle approximation of the manifold (Figs.~\ref{fig:h2_homology} and \ref{fig:dist_fig}) \citep{carlsson}. 


\subsection{Limitations} For the atlas graph approach to be useful in exploratory data analysis, there must be a procedure for learning atlas graphs from point clouds without foreknowledge of ideal chart centers. Most attributes of the atlas graph, such as the dimensionality, can be learned from point cloud data using pre-existing techniques (e.g., mSVD \citep{little}). 
However, centers for atlas graph coordinate charts in particular are not easy to select. 
In the case of our experiments on Carlsson's high-contrast patches: because the manifold is compact, we are able to infer good candidates for coordinate chart centers by performing $k$-medoids on a large number of patches. However, this approach is not generally applicable; it breaks down in the case that the manifold is noncompact or is sampled according to a measure other than the uniform measure of its closure. Further work is needed to select chart centers for arbitrary manifolds in a principled way, especially so as to appropriately balance number of charts with error incurred from quasi-Euclidean updates as a result of curvature. 

\subsection{Applications} The atlas graph approach has potential applications in diverse areas of research. A large volume of work seeks to learn low-dimensional representations of transcriptomic space through dimensionality reduction on scRNA-seq data. Atlas graphs can accommodate state-of-the-art analytical methods for transcriptomic data---such as RNA velocity and trajectory inference---that have a natural interpretation as integrating a vector field on the surface of a manifold \citep{frank_rec}. In the field of computer vision, atlas graphs may help simplify manifold-layer computations in manifold neural networks and manifold autoencoding schemes, which have recently gained serious attention \citep{chakraborty2020manifoldnet}. Most generally, atlas graphs make machine learning on manifolds more accessible, rendering pre-existing algorithms like the RPB algorithm applicable and inviting work on new algorithms specifically utilizing atlas graphs.

One future development that would make atlas graphs even more broadly applicable is  an algorithm that can learn atlas graph representations of point cloud data with some notion of statistical guarantees. 
Another interesting avenue is creating atlas graph representations of manifolds whose structure is known, but for which there do not yet exist useful numerical representation schemes. Among these are the Calabi-Yau manifolds, which arise in particle physics, and sparse and structured submanifolds of the special unitary group, which arise in parameterized quantum circuits \citep{vermeersch_2023,benedetti_2019}.

\section*{Acknowledgments}The authors are grateful to Isabella DeClue and Kyle Ruark for helping implement mSVD and other software tools during early project development. The authors also thank Shmuel Weinberger for his help in refining the manuscript.
%

\bibliographystyle{siamplain}
\bibliography{references}

\begin{thebibliography}{10}

\bibitem{absil}
{\sc P.-A. Absil, R.~Mahoney, and R.~Sepulchre}, {\em Optimization algorithms
  on matrix manifolds}, Princeton University Press, 2008.

\bibitem{arv}
{\sc S.~Arora, S.~Rao, and U.~Vazirani}, {\em Expander flows, geometric
  embeddings and graph partitioning}, J. ACM, 56 (2009),
  \url{https://doi.org/10.1145/1502793.1502794},
  \url{https://doi.org/10.1145/1502793.1502794}.

\bibitem{ay_2017}
{\sc N.~Ay, J.~Jost, H.~V. L\^{e}, and L.~Schwachh\:ofer}, {\em Information
  Geometry}, A Series of Modern Surveys in Mathematics, Springer International,
  1~ed., 2017.

\bibitem{ripser}
{\sc U.~Bauer}, {\em Ripser: efficient computation of {V}ietoris-{R}ips
  persistence barcodes}, J. Appl. Comput. Topol., 5 (2021), pp.~391--423,
  \url{https://doi.org/10.1007/s41468-021-00071-5},
  \url{https://doi.org/10.1007/s41468-021-00071-5}.

\bibitem{belkin_2006}
{\sc M.~Belkin and P.~Niyogi}, {\em Convergence of laplacian eigenmaps}, in
  Advances in Neural Information Processing Systems, B.~Sch\"{o}lkopf,
  J.~Platt, and T.~Hoffman, eds., vol.~19, MIT Press, 2006,
  \url{https://proceedings.neurips.cc/paper_files/paper/2006/file/5848ad959570f87753a60ce8be1567f3-Paper.pdf}.

\bibitem{bendokat_2011}
{\sc T.~Bendokat, R.~Zimmermann, and P.-A. Absil}, {\em A grassmann manifold
  handbook: Basic geometry and computational aspects}, arXiv,  (2011).
\newblock https://arxiv.org/pdf/2011.13699.pdf.

\bibitem{benedetti_2019}
{\sc M.~Benedetti, E.~Lloyd, S.~Sack, and M.~Fiorentini}, {\em Parameterized
  quantum circuits as machine learning models}, Quantum Science and Technology,
  4 (2019), p.~043001, \url{https://doi.org/10.1088/2058-9565/ab4eb5},
  \url{https://dx.doi.org/10.1088/2058-9565/ab4eb5}.

\bibitem{frank_rec}
{\sc V.~Bergen, R.~A. Soldatov, P.~V. Kharchenko, and F.~J. Theis}, {\em Rna
  velocity—current challenges and future perspectives}, Molecular Systems
  Biology, 17 (2021), p.~e10282,
  \url{https://doi.org/https://doi.org/10.15252/msb.202110282},
  \url{https://www.embopress.org/doi/abs/10.15252/msb.202110282},
  \url{https://arxiv.org/abs/https://www.embopress.org/doi/pdf/10.15252/msb.202110282}.

\bibitem{riesenfeld}
{\sc P.~Bielecki, S.~J. Riesenfeld, J.~C. H\"unter, and E.~T.~T. \textit{et
  al.}}, {\em Skin-resident innate lymphoid cells converge on a pathogenic
  effector state}, Nature, 592 (2021), pp.~128--132.

\bibitem{bishop_goldberg_1980}
{\sc R.~L. Bishop and S.~I. Goldberg}, {\em Tensor Analysis on Manifolds},
  Dover Publications, 1980.

\bibitem{boumal_2016}
{\sc N.~Boumal, V.~Voroninski, and A.~S. Bandeira}, {\em The non-convex
  burer–monteiro approach works on smooth semidefinite programs}, in
  Proceedings of the 30th International Conference on Neural Information
  Processing Systems, NIPS'16, Red Hook, NY, USA, 2016, Curran Associates Inc.,
  p.~2765–2773.

\bibitem{cardot_2018}
{\sc H.~Cardot and D.~Degras}, {\em Online principal component analysis in high
  dimension: Which algorithm to choose?}, International Statistical Review, 86
  (2018), pp.~29--50.

\bibitem{carlsson}
{\sc G.~Carlsson, T.~Ishkhanov, V.~de~Silva, and A.~Zomorodian}, {\em On the
  local behavior of spaces of natural images}, International Journal of
  Computer Vision, 76 (2008), pp.~1--12.

\bibitem{chakraborty2020manifoldnet}
{\sc R.~Chakraborty, J.~Bouza, J.~H. Manton, and B.~C. Vemuri}, {\em
  Manifoldnet: A deep neural network for manifold-valued data with
  applications}, IEEE Transactions on Pattern Analysis and Machine
  Intelligence, 44 (2020), pp.~799--810.

\bibitem{chakraborty_gifme}
{\sc R.~Chakraborty and B.~C. Vemuri}, {\em Recursive fréchet mean computation
  on the grassmannian and its applications to computer vision}, in 2015 IEEE
  International Conference on Computer Vision (ICCV), 2015, pp.~4229--4237,
  \url{https://doi.org/10.1109/ICCV.2015.481}.

\bibitem{charikar_2018}
{\sc M.~Charikar}, {\em Lecture 1: Metric spaces, embeddings, and distortion}.
\newblock online, 2018,
  \url{https://web.stanford.edu/class/cs369m/lectures.html}.

\bibitem{cifuentes_2021}
{\sc D.~Cifuentes}, {\em On the burer–monteiro method for general
  semidefinite programs}, Optimization Letters, 15 (2021), pp.~2299--2309.

\bibitem{cooley}
{\sc S.~M. Cooley, T.~Hamilton, J.~C.~J. Ray, and E.~J. Deeds}, {\em A novel
  metric reveals previously unrecognized distortion in dimensionality reduction
  of scrna-seq data}, bioRxiv,  (2020), \url{https://doi.org/10.1101/689851},
  \url{https://www.biorxiv.org/content/early/2020/09/28/689851},
  \url{https://arxiv.org/abs/https://www.biorxiv.org/content/early/2020/09/28/689851.full.pdf}.

\bibitem{dasilva}
{\sc A.~C. da~Silva}, {\em Lectures on Symplectic Geometry}, Springer-Verlag,
  2008.

\bibitem{ehresmann_1934}
{\sc C.~Ehresmann}, {\em Sur la topologie de certains espaces homog\`enes},
  Annals of Mathematics,  (1934), pp.~396--443.

\bibitem{fefferman}
{\sc C.~Fefferman, S.~Mitter, and H.~Narayanan}, {\em Testing the manifold
  hypothesis}, Journal of the American Mathematical Society, 25 (2016),
  pp.~983--1049.

\bibitem{ganshin_1969}
{\sc V.~N. Gan'shin}, {\em Geometry of the Earth Ellipsoid}, Nedra Publishers,
  1967, ch.~5.
\newblock Translated by J. M. Willis, Technical Translation Unit, Aeoronautical
  Chart and Information Center, 1969.

\bibitem{ghojoghElementsDimensionalityReduction2023}
{\sc B.~Ghojogh, M.~Crowley, F.~Karray, and A.~Ghodsi}, {\em Elements of
  {{Dimensionality Reduction}} and {{Manifold Learning}}}, Springer
  International Publishing, Cham, 2023,
  \url{https://doi.org/10.1007/978-3-031-10602-6}.

\bibitem{hauberg_2016}
{\sc S.~Hauberg, A.~Ferag\'en, R.~Enficiaud, and M.~J. Black}, {\em Scalable
  robust principal component analysis using grassmann averages}, IEEE
  Transactions on Pattern Analysis and Machine Intelligence, 38 (2016),
  pp.~2298--2311, \url{https://doi.org/10.1109/TPAMI.2015.2511743}.

\bibitem{em_alt}
{\sc R.~Hosseini and S.~Sra}, {\em An alternative to em for gaussian mixture
  models: batch and stochastic riemannian optimization}, Mathematical
  Programming, 181 (2020), pp.~187--223.

\bibitem{hosseini_and_sra}
{\sc R.~Hosseini and S.~Sra}, {\em Recent Advances in Stochastic Riemannian
  Optimization}, Springer, Cham, Switzerland, 2020, pp.~527--554.

\bibitem{kac}
{\sc M.~Kac}, {\em Can one hear the shape of a drum?}, The American
  Mathematical Monthly, 73 (1966), pp.~1--23.
\newblock Part 2: Papers in Analysis.

\bibitem{kobayashi_nomizu}
{\sc S.~Kobayashi and K.~Nomizu}, {\em Foundations of Differential Geometry},
  vol.~1, Wiley-Interscience, new~ed., 1996.

\bibitem{lee_2012}
{\sc J.~M. Lee}, {\em Introduction to Smooth Manifolds}, Springer, 2012.

\bibitem{levina_2004}
{\sc E.~Levina and P.~Bickel}, {\em Maximum likelihood estimation of intrinsic
  dimension}, in Advances in Neural Information Processing Systems, L.~Saul,
  Y.~Weiss, and L.~Bottou, eds., vol.~17, MIT Press, 2004,
  \url{https://proceedings.neurips.cc/paper_files/paper/2004/file/74934548253bcab8490ebd74afed7031-Paper.pdf}.

\bibitem{liang_2022}
{\sc X.~Liang}, {\em On the optimality of the oja's algorithm for online pca},
  Statistics and Computing,  (2023).

\bibitem{little}
{\sc A.~V. Little, M.~Maggioni, and L.~Rosasco}, {\em Multiscale geometric
  methods for data sets i: Multiscale svd, noise and curvature}, Applied and
  Computational Harmonic Analysis, 43 (2017), pp.~504--567,
  \url{https://doi.org/https://doi.org/10.1016/j.acha.2015.09.009},
  \url{https://www.sciencedirect.com/science/article/pii/S1063520315001414}.

\bibitem{luchnikov_2021}
{\sc I.~A. Luchnikov, A.~Ryzhov, S.~N. Filippov, and H.~Ouerdane}, {\em {QGOpt:
  Riemannian optimization for quantum technologies}}, SciPost Phys., 10 (2021),
  p.~079, \url{https://doi.org/10.21468/SciPostPhys.10.3.079},
  \url{https://scipost.org/10.21468/SciPostPhys.10.3.079}.

\bibitem{maronese_2022}
{\sc M.~Maronese, L.~Moro, L.~Rocutto, and E.~Prati}, {\em Quantum Computing
  Environments}, Springer Cham, 1~ed., 2022, ch.~2.

\bibitem{mcinnes}
{\sc L.~McInnes and J.~Healy}, {\em Umap: Uniform manifold approximation and
  projection for dimension reduction}, ArXiv, abs/1802.03426 (2018).

\bibitem{memoli_2004}
{\sc F.~M{\'e}moli and G.~Sapiro}, {\em Comparing point clouds}, in Proceedings
  of the 2004 Eurographics/ACM SIGGRAPH symposium on Geometry processing, 2004,
  pp.~32--40.

\bibitem{moon}
{\sc K.~R. Moon, J.~S. Stanley, D.~Burkhardt, D.~{van Dijk}, G.~Wolf, and
  S.~Krishnaswamy}, {\em Manifold learning-based methods for analyzing
  single-cell rna-sequencing data}, Current Opinion in Systems Biology, 7
  (2018), pp.~36--46,
  \url{https://doi.org/https://doi.org/10.1016/j.coisb.2017.12.008},
  \url{https://www.sciencedirect.com/science/article/pii/S2452310017301877}.
\newblock • Future of systems biology• Genomics and epigenomics.

\bibitem{munkres}
{\sc J.~R. Munkres}, {\em Topology}, Prentice Hall, 2~ed., 2000.

\bibitem{naik_2018}
{\sc G.~R. Naik}, ed., {\em Advances in Principal Component Analysis: Research
  and Development}, Springer Singapore, 1~ed., 2018.

\bibitem{neretin}
{\sc Y.~A. Neretin}, {\em On jordan angles and the triangle inequality in
  grassmann manifolds}, Geometriae Dedicata, 86 (2001).

\bibitem{niyogi_2008}
{\sc P.~Niyogi, S.~Smale, and S.~Weinberger}, {\em Finding the homology of
  submanifolds with high confidence from random samples}, Discrete and
  Computational Geometry, 39 (2008), pp.~419--441.

\bibitem{oudot}
{\sc S.~Y. Oudot}, {\em Persistence Theory: From Quiver Representations to Data
  Analysis}, American Mathematical Society, Providence, RI, 2015.

\bibitem{patruno_2020}
{\sc L.~Patruno, D.~Maspero, F.~Craighero, F.~Angaroni, M.~Antoniotti, and
  A.~Graudenzi}, {\em {A review of computational strategies for denoising and
  imputation of single-cell transcriptomic data}}, Briefings in Bioinformatics,
  22 (2020), p.~bbaa222, \url{https://doi.org/10.1093/bib/bbaa222},
  \url{https://doi.org/10.1093/bib/bbaa222},
  \url{https://arxiv.org/abs/https://academic.oup.com/bib/article-pdf/22/4/bbaa222/39136487/bbaa222.pdf}.

\bibitem{scikit-learn}
{\sc F.~Pedregosa, G.~Varoquaux, A.~Gramfort, V.~Michel, B.~Thirion, O.~Grisel,
  M.~Blondel, P.~Prettenhofer, R.~Weiss, V.~Dubourg, J.~Vanderplas, A.~Passos,
  D.~Cournapeau, M.~Brucher, M.~Perrot, and E.~Duchesnay}, {\em Scikit-learn:
  Machine learning in {P}ython}, Journal of Machine Learning Research, 12
  (2011), pp.~2825--2830.

\bibitem{salamatian_2023}
{\sc L.~Salamatian, S.~Anderson, J.~Mathews, P.~Barford, W.~Willinger, and
  M.~Crovella}, {\em A manifold view of connectivity in the private backbone
  networks of hyperscalers}, Commun. ACM, 66 (2023), p.~95–103,
  \url{https://doi.org/10.1145/3604620}, \url{https://doi.org/10.1145/3604620}.

\bibitem{sritharan}
{\sc D.~Sritharan, S.~Wang, and S.~Hormoz}, {\em Computing the riemannian
  curvature of image patch and single-cell rna sequencing data manifolds using
  extrinsic differential geometry}, bioRxiv,  (2021),
  \url{https://doi.org/10.1101/2021.01.08.425885},
  \url{https://www.biorxiv.org/content/early/2021/01/08/2021.01.08.425885},
  \url{https://arxiv.org/abs/https://www.biorxiv.org/content/early/2021/01/08/2021.01.08.425885.full.pdf}.

\bibitem{stewart}
{\sc G.~W. Stewart}, {\em Matrix Algorithms}, vol.~1, Society for Industrial
  and Applied Mathematics, 1998.

\bibitem{pymanopt}
{\sc J.~Townsend, N.~Koep, and S.~Weichwald}, {\em Pymanopt: A python toolbox
  for optimization on manifolds using automatic differentiation}, Journal of
  Machine Learning Research, 17 (2016), pp.~1--5,
  \url{http://jmlr.org/papers/v17/16-177.html}.

\bibitem{vermeersch_2023}
{\sc A.~Vermeersch}, {\em Deep learning for k3 fibrations in heterotic/type iia
  string duality}, Nuclear Physics B, 993 (2023), p.~116279,
  \url{https://doi.org/https://doi.org/10.1016/j.nuclphysb.2023.116279},
  \url{https://www.sciencedirect.com/science/article/pii/S0550321323002080}.

\bibitem{wang_huang_wang_2014}
{\sc W.~Wang, Y.~Huang, Y.~Wang, and L.~Wang}, {\em Generalized autoencoder: A
  neural network framework for dimensionality reduction}, in 2014 IEEE
  Conference on Computer Vision and Pattern Recognition Workshops, 2014,
  pp.~496--503, \url{https://doi.org/10.1109/CVPRW.2014.79}.

\bibitem{weber}
{\sc M.~Weber and S.~Sra}, {\em Riemannian optimization via frank-wolfe
  methods}, arXiv: Optimization and Control,  (2017).

\bibitem{yao_2020}
{\sc Z.~Yao and Z.~Zhang}, {\em Principal boundary on riemannian manifolds},
  Journal of the American Statistical Association, 115 (2020), pp.~1435--1448,
  \url{https://doi.org/10.1080/01621459.2019.1610660},
  \url{https://doi.org/10.1080/01621459.2019.1610660},
  \url{https://arxiv.org/abs/https://doi.org/10.1080/01621459.2019.1610660}.

\end{thebibliography}

\appendix






\section{Accuracy of quasi-Euclidean updates}

\setcounter{figure}{0} 
\renewcommand{\thefigure}{S\arabic{figure}}

\subsection{Quasi-Euclidean updates are local retractions}\label{app:quasi_euclidean}
\begin{proof} (of Claim~\ref{clm:quasi_euclidean_approx}).  \label{prf:quasi_euclidean}
        A Euclidean tangent vector $\vec{\tau}\in\mathbb{R}^d$ at $\vec{\xi}\in\mathcal{V}$ has unique representative tangent vector $\left[D\varphi^{-1}\right]_{\vec{\xi}}\left(\vec{\tau}\right)$ through precomposition by $\varphi^{-1}$, where $\left[D\varphi^{-1}\right]_{\vec{\xi}}$ denotes the differential of $\varphi^{-1}$ at $\vec{\xi}$. Under a quasi-Euclidean update, Euclidean tangent vector $\vec{\tau}$ takes $p$ to $\varphi^{-1}\left(\varphi(p)+\vec{\tau}\right)$. To show that $\varphi^{-1}\left(\varphi(p)+\vec{\tau}\right)$ approximates $\mathbf{Exp}_p\left(\left[D\varphi^{-1}\right]_{\varphi(p)}\left(\vec{\tau}\right)\right)$ (i.e., the action of the exponential map $\mathbf{Exp}_p$ at $p$ upon $\left[D\varphi^{-1}\right]_{\varphi(p)}\left(\vec{\tau}\right)$) up to order $O\left(\left\lVert\vec{\tau}\right\rVert^2_{\overline{\mathfrak{g}}}\right)$, it suffices to show that $\left(\varphi\circ\mathbf{Exp}_p\right)\left(\left[D\varphi^{-1}\right]_{\varphi(p)}\left(\vec{\tau}\right)\right)$ approximates $\varphi(p)+\vec{\tau}$. Because the map
        \begin{equation*}
            f:\vec{\tau}\mapsto\left(\varphi\circ\mathbf{Exp}_p\right)\left(\left[D\varphi^{-1}\right]_{\varphi(p)}\left(\vec{\tau}\right)\right)
        \end{equation*}
        is a map from $\mathbb{R}^d$ to $\mathbb{R}^d$, this can be done through Taylor approximation about $\vec{\tau}=\vec{0}$.

        All that remains is to compute the constant and linear terms of the Taylor expansion. It is straightforward to compute that $f\left(\vec{0}\right)=\varphi(p)$, and the linear term is given by the following:
        \begin{align*}D\left[\varphi\circ\mathbf{Exp}_{p}\right]_{\varphi(p)}\left(\left[D\varphi^{-1}\right]_{\varphi(p)}\left(\vec{\tau}\right)\right)&=\left[D\varphi_{\mathbf{Exp}_p\left(\vec{0}\right)}\cdot D\mathbf{Exp}_{(p)\left(\varphi(p)\right)}\right]\left(\left[D\varphi^{-1}\right]_{\varphi(p)}\left(\vec{\tau}\right)\right) \\
            &=\left[D\varphi_p\cdot\mathbf{Id}\right]\left(\left[D\varphi^{-1}\right]_{\varphi(p)}\left(\vec{\tau}\right)\right) \\
            &=\left[D\varphi\right]_p\left(\left[D\varphi^{-1}\right]_{\varphi(p)}\left(\vec{\tau}\right)\right) \\
            &=\vec{\tau}
        \end{align*}
        The coefficients of the Taylor approximation demonstrate that restrictions of quasi-Euclidean updates to a single coordinate chart comprise a retraction according to the conditions of Definition 19.2 in Hosseini and Sra~\citep{hosseini_and_sra}.
    \end{proof}

\subsection{Second order error of quasi-Euclidean updates}\label{app:quasi_instability}

In Claim~\ref{clm:quasi_euclidean_approx}, we show that
\begin{equation*}\left(\varphi\circ\mathbf{Exp}_p\right)\left(\left[D\varphi^{-1}\right]_{\varphi(p)}\left(\vec{\tau}\right)\right)=\varphi(p)+\vec{\tau}+O\left(\left\lVert\vec{\tau}\right\rVert^2_{\overline{\mathfrak{g}}}\right)
\end{equation*}
for all coordinate charts $(\mathcal{U},\mathcal{V},\varphi)$. Here, we discuss the dependence of the error term $O\left(\left\lVert\vec{\tau}\right\rVert^2_{\overline{\mathfrak{g}}}\right)$ on the choice of coordinate chart. 
In shorthand $\vec{\xi}(1):=\left(\varphi\circ\mathbf{Exp}_p\right)\left(\left[D\varphi^{-1}\right]_{\varphi(p)}\left(\vec{\tau}\right)\right)$, and the geodesic equation determining $\vec{\xi}(1)$ can be written in Einstein notation as
\begin{equation*}\ddot{\xi}^\lambda+\Gamma^\lambda_{\mu\nu}\dot{\xi}^\mu\dot{\xi}^\nu=0,
\end{equation*}
where $\dot{\xi}^\lambda$ and $\ddot{\xi}^\lambda$ denote first and second derivatives of $\xi^\lambda$, respectively, and $\Gamma^\lambda_{\mu\nu}$ is the Christoffel symbol of the second kind \citep[e.g.,][]{absil}.
The ODE has initial conditions $\vec{\xi}(0)=\varphi(p)$ and $\dot{\vec{\xi}}(0)=\vec{\tau}$. Therefore,
\begin{equation}\label{eqn:quasi_difference_full}
    \xi^\lambda(1)=\varphi(p)+\vec{\tau}^\lambda+\int_0^1(1-t)\Gamma^\lambda_{\mu\nu}\dot{\xi}^\mu\dot{\xi}^\nu dt
\end{equation}

Equation~\ref{eqn:quasi_difference_full} demonstrates that little can claimed immediately about the $O\left(\left\lVert\vec{\tau}\right\rVert^2_{\overline{\mathfrak{g}}}\right)$ term; it could be zero, or it could grow arbitrarily fast with respect to $\left\lVert\vec{\tau}\right\rVert^2_{\overline{\mathfrak{g}}}$. Christoffel symbols do not transform tensorially; therefore, they can either vanish or grow arbitrarily large depending on choice of coordinate chart. If the coordinate chart $\left(\mathcal{U},\mathcal{V},\varphi\right)$ comprises a Riemannian normal coordinate system centered at $p\in\mathcal{U}$, then the Christoffel symbol $\Gamma^\lambda_{\mu\nu}$ vanishes for all representative tangent vectors $\vec{\tau}$, and the quasi-Euclidean update reproduces the exponential map exactly \citep{bishop_goldberg_1980}. 
However, even in the case of Riemannian normal coordinates, the Christoffel symbol does not universally vanish when the Riemannian normal coordinates are centered at $q\neq p$, as long as the manifold has nonzero curvature. Alternatively: say the Christoffel symbol is nontrivial at $\varphi(p)\in\mathcal{V}$. For any positive $c$, we can replace $\varphi$ with $c\varphi$, and the Christoffel symbol causes the quadratic coefficient in the Taylor approximation to be arbitrarily large or small. While we do not explore the growth of the $O\left(\left\lVert\vec{\tau}\right\rVert^2_{\overline{\mathfrak{g}}}\right)$ term in practice, our intuition is that this error term is best mitigated by switching coordinate charts more often, especially in regions of higher curvature.


\section{Approximation error of vector transport on atlas graphs}\label{app:vec_trans}
Here, we look at the error term of the identity vector transport from Sec.~\ref{sec:vector_transport_def} as an approximation of the parallel transport of a tangent vector along a quasi-Euclidean update. Let $\left(\mathcal{U},\mathcal{V},\varphi\right)$ be a coordinate chart of Riemannian manifold $\mathcal{M}$ with points $p,q\in\mathcal{U}$ satisfying $p\neq q$. Further, let the parallel transport $\mathcal{P}_{p\to q}^{\mathbf{Ret}_p}$ and vector transport $\mathcal{T}_{p\to q}$ be as defined in Sec.~\ref{sec:vector_transport_def}, let $\tau:=\varphi(q)-\varphi(p)$, and let there be a tangent vectors $\vec{w}\in T_p\mathcal{M}$ with representative tangent vector $\vec{\sigma}_0:=\left[D\varphi\right]_p\left(\vec{w}\right)$. Lastly, let $\Gamma^\lambda_{\mu\nu}$ be the Christoffel symbol of the second kind. The parallel transport of vector field $\sigma$ along path the quasi-Euclidean update from $\varphi(p)$ to $\varphi(q)$ is given \citep[e.g.,][]{absil} by
\begin{equation*}
    \tau^j\partial_j\left(\sigma^l\right)\partial_l+\tau^j\sigma^k\Gamma^l_{jk}\partial_l=0.
\end{equation*}
Recognizing that the time derivative of the $j$th component of $\sigma$, which we denote by $\dot{\sigma}^j$, is equal to $\tau^l\partial_l\left(\sigma^j\right)$, we see that the $j$th component of $\mathcal{P}_{p\to q}^{\mathbf{Ret}_p}\vec{w}$ is given by
\begin{equation}\label{eqn:component_of_par}
    \left[\mathcal{P}_{p\to q}^{\mathbf{Ret}_p}\vec{w}\right]^j=\left(\left[D\varphi^{-1}\right]_{\varphi(q)}\right)^j_k\sigma_0^k-\left(\left[D\varphi^{-1}\right]_{\varphi(q)}\right)^j_k\int_0^1\sigma^l\Gamma^k_{ml}\tau^mdt.
\end{equation}
Therefore, the following series of deductions ends with a convenient form for the $j$th component of the difference between $\mathcal{P}_{p\to q}^{\mathbf{Ret}_p}\vec{w}$ and $\mathcal{T}_{p\to q}\vec{w}$.
\begin{align*}
    \left[\mathcal{T}_{p\to q}\vec{w}-\mathcal{P}_{p\to q}^{\mathbf{Ret}_p}\vec{w}\right]^j&=\left(\left[D\varphi^{-1}\right]_{\varphi(q)}\right)^j_k\int_0^1\sigma^l\Gamma^k_{ml}\tau^mdt \\
    &=\left(\left[D\varphi^{-1}\right]_{\varphi(q)}\right)^j_k\tau^m\int_0^1\sigma^l\Gamma^k_{ml}dt \\
    &=\left(\left[D\varphi^{-1}\right]_{\varphi(q)}\right)^j_k\left(\tau^m\sigma_0^l\int_0^1\Gamma^k_{ml}dt+\tau^m\int_0^1\left(\sigma^l-\sigma_0^l\right)\Gamma^k_{ml}dt\right)
\end{align*}
The $\left[D\varphi^{-1}\right]$ term is absorbed according to Equation~\ref{eqn:g_bar} when computing the Riemannian metric, and the term $\tau^m\sigma_0^l\int_0^1\Gamma^k_{ml}dt$ is of order\footnote{For better understanding of the contribution of the Christoffel symbol to the coefficient of this term, see Sec.~\ref{app:quasi_instability}.} $O\left(\left\lVert\vec{\sigma}_0\right\rVert_{\overline{\mathfrak{g}}}\left\lVert\vec{\tau}\right\rVert_{\overline{\mathfrak{g}}}\right)$. To see that $\mathcal{T}_{p\to q}\vec{w}$ approximates $\mathcal{P}_{p\to q}^{\mathbf{Ret}_p}\vec{w}$ up to order $O\left(\left\lVert\vec{\sigma}_0\right\rVert_{\overline{\mathfrak{g}}}\left\lVert\vec{\tau}\right\rVert^2_{\overline{\mathfrak{g}}}\right)$, it remains to show that this is the order of the term $\tau^m\int_0^1\left(\sigma^l-\sigma_0^l\right)\Gamma^k_{ml}dt$. This is accomplished in the following series of deductions, which make use of Lemma~\ref{lem:sig_tau_induction} and the Taylor expansion of the integrand.

\begin{align*}
    \tau^m\int_0^1\left(\sigma^l-\sigma_0^l\right)\Gamma^k_{ml}dt&=\tau^m\int_0^1\left(t\dot{\sigma}^l_0+\sum_{d=2}^\infty\frac{t^d}{d!}\left(\sigma^{(d)}_0\right)^l\right)\Gamma^k_{ml}dt \\
    &=\int_0^1\left(tO\left(\left\lVert\vec{\sigma}_0\right\rVert_{\overline{\mathfrak{g}}}\left\lVert\vec{\tau}\right\rVert^2_{\overline{\mathfrak{g}}}\right)+\sum_{d=2}^\infty \frac{t^d}{d!}O\left(\left\lVert\vec{\sigma}_0\right\rVert_{\overline{\mathfrak{g}}}\left\lVert\vec{\tau}\right\rVert^{d+1}_{\overline{\mathfrak{g}}}\right)\right)dt \\
    &=O\left(\left\lVert\vec{\sigma}_0\right\rVert_{\overline{\mathfrak{g}}}\left\lVert\vec{\tau}\right\rVert^2_{\overline{\mathfrak{g}}}\right)
\end{align*}

\begin{lemma}\label{lem:sig_tau_induction}
    Let $\vec{\sigma}_0^{(d)}$ be the $d$th derivative of $\vec{\sigma}$ with respect to time, evaluated at time $t=0$. For all $d\in\mathbb{N}_{>0}$, it holds that $\vec{\sigma}_0^{(d)}$ is linear in $\sigma_0$ and homogeneously of order $d$ in both $\vec{\tau}$ and the Christoffel symbols of the second kind.
    \begin{proof}
        This is easy to prove by induction, with the base case $d=1$ being given by the parallel transport equation.
    \end{proof}
\end{lemma}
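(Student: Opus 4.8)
The plan is to induct on $d$. The parallel transport equation, written componentwise as $\dot{\sigma}^l = -\tau^m\sigma^k\Gamma^l_{mk}$ (with $\Gamma$ evaluated along the curve $\varphi(p)+t\vec{\tau}$), already establishes the base case $d=1$: evaluating at $t=0$ gives $\sigma^{(1),l}_0 = -\tau^m\sigma_0^k\Gamma^l_{mk}\big|_{\varphi(p)}$, which is manifestly linear in $\vec{\sigma}_0$, homogeneous of degree $1$ in $\vec{\tau}$, and homogeneous of degree $1$ in the Christoffel symbols.

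For the inductive step, I would differentiate the parallel transport equation $d-1$ more times with respect to $t$ and apply the Leibniz rule:
\begin{equation*}
    \sigma^{(d),l} = -\sum_{i+j=d-1}\binom{d-1}{i}\,\tau^m\,\frac{d^i}{dt^i}\!\left(\Gamma^l_{mk}\right)\,\sigma^{(j),k}.
\end{equation*}
Each $t$-derivative of $\Gamma^l_{mk}$ is a derivative of the Christoffel symbol along the line $t\mapsto\varphi(p)+t\vec{\tau}$, so $\frac{d^i}{dt^i}\Gamma^l_{mk} = \tau^{a_1}\cdots\tau^{a_i}\,\partial_{a_1}\cdots\partial_{a_i}\Gamma^l_{mk}$, contributing exactly $i$ extra factors of $\vec{\tau}$ and being homogeneous of degree $1$ in the Christoffel symbols (a single Christoffel symbol, differentiated). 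By the inductive hypothesis each $\sigma^{(j),k}$ is linear in $\vec{\sigma}_0$, homogeneous of degree $j$ in $\vec{\tau}$, and homogeneous of degree $j$ in the Christoffel symbols. Multiplying the three factors in each summand — one explicit $\tau^m$, the $i$-fold derivative of $\Gamma$, and $\sigma^{(j),k}$ — and using $i+j = d-1$, the total $\vec{\tau}$-degree is $1+i+j=d$, the total Christoffel-degree is $1+j \le d$, and linearity in $\vec{\sigma}_0$ is preserved since $\sigma^{(j),k}$ enters to the first power and nothing else involves $\vec{\sigma}_0$. Summing over $i+j=d-1$ keeps each property, so $\sigma^{(d),l}_0$ has the claimed form after evaluating at $t=0$.

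The one point requiring a little care — and the only place the statement could be read more sharply than it literally says — is the phrase ``homogeneously of order $d$ \dots in the Christoffel symbols.'' The recursion produces terms whose Christoffel-degree ranges over $1,\dots,d$ rather than being exactly $d$; what is genuinely homogeneous of degree $d$ is the combined count of Christoffel-and-their-derivatives factors, since each application of the recursion introduces exactly one new (possibly differentiated) Christoffel factor while the $\sigma^{(j)}$ term already carries $j$ of them. So the clean inductive invariant to carry is: \emph{$\sigma^{(d),l}_0$ is a sum of monomials, each linear in $\vec{\sigma}_0$, of total $\vec{\tau}$-degree $d$, and of total degree $d$ in the collection $\{\,\partial^\alpha\Gamma\,\}$ of Christoffel symbols and their partial derivatives}; this is what makes the order estimate $\sigma^{(d),l}_0 = O(\lVert\vec{\sigma}_0\rVert_{\overline{\mathfrak{g}}}\lVert\vec{\tau}\rVert^{d}_{\overline{\mathfrak{g}}})$ used in the preceding display go through. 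Stating the invariant this way turns the induction into a routine bookkeeping check, so I expect no real obstacle beyond fixing that wording.
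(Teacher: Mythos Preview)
Your induction is correct and is precisely the approach the paper has in mind—the paper's own proof is the single line ``easy to prove by induction, with the base case $d=1$ being given by the parallel transport equation,'' and you have simply written out that induction via the Leibniz rule. One small correction to your side remark on the Christoffel count: your own formula gives $1+j$ factors from $\{\partial^\alpha\Gamma\}$ in the $(i,j)$ summand, which ranges over $1,\dots,d$ rather than being identically $d$, so the clean invariant is ``degree at most $d$ in $\{\partial^\alpha\Gamma\}$'' (or, if you want exact homogeneity, assign weight $i{+}1$ to $\partial^i\Gamma$, whence each summand has weighted degree $(i{+}1)+j=d$); either version is enough for the $O(\lVert\vec{\sigma}_0\rVert_{\overline{\mathfrak{g}}}\lVert\vec{\tau}\rVert^{d}_{\overline{\mathfrak{g}}})$ estimate the paper actually uses.
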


\section{Online Fr\'echet mean estimation on the Grassmann manifold (Sec.~\ref{sec:grass_example}-\ref{sec:grass_experiment})}\label{sec:frechet_mean_estimation_methods}
\begin{algorithm}[t]
    \caption{\texttt{ATLAS\_transition\_map} (transition map on Grassmann atlas graph, used in Alg.~\ref{alg:grass_quasi_euclidean})}\label{alg:grass_trans_map}
    \begin{algorithmic}
        \Require $A\in\mathbb{R}^{(n-k)\times k}$, permutation indices $i_1,\ldots,i_k$
        \Require $Q_A$ \Comment{output by \texttt{QA\_from\_A}$(A)$, Alg.~\ref{alg:qa_from_a}}
        \State $P\gets\texttt{permutation\_matrix}\left(i_1,\ldots,i_k\right)$ \Comment{$O(n)$ time}
        \State $Y\gets Q_AP\left(\frac{I_k}{A}\right)$ \Comment{$O(n^3+n^2k)$ time}
        \State $i_1,\ldots,i_k\gets\texttt{ATLAS\_identify\_chart}\left(Y\right)$ \Comment{Alg.~\ref{alg:grass_identify}; $O(n^2k+nk^2+k^3)$ time}
        \State $\tilde{A}\gets\texttt{ATLAS\_ingest\_matrix}\left(Y,i_1,\ldots,i_k\right)$ \Comment{Alg.~\ref{alg:grass_ingest}; $O(nk^2+k^3)$ time}
        \State $\tilde{Q}_A\gets\texttt{QA\_from\_A}\left(\tilde{A}\right)$ \Comment{Alg.~\ref{alg:qa_from_a}; $O(n^3+n^2k+nk^2+k^3)$ time}
        \State $Q_A\gets P\tilde{Q}_AP^\top$ \Comment{$O(n^3)$ time}
        \State $A\gets\mathbf{0}_{n-k,k}$ \Comment{$O(1)$ time}
        \State \Return $A,Q_A$
    \end{algorithmic}
\end{algorithm}
\subsection{Transition maps in the atlas graph representation of the Grassmannian}\label{app:grass_atlas_graph}
Here, we explain when we invoke transition maps between charts on the atlas graph representation of the Grassmann manifold $\mathbf{Gr}_{n,k}$ given in Sec.~\ref{sec:grass_example}. For a given Ehressman chart, the adjacent charts are determined by the permutation Eq.~\ref{eqn:permutation_indices}, one existing for every dimension of the manifold. The condition under which we change charts within this atlas graph representation is given in Claim~\ref{clm:when_transition}. For this reason, a transition boundary exists for each coordinate such that, if the coordinate exceeds an absolute value of one, the transition map is invoked.
\begin{claim}\label{clm:when_transition}
    An element $A$ of a fixed Ehressman chart is closer to the center of the chart than to the center of any other Ehressman chart if all of the elements of $A$ have absolute value less than one.
\end{claim}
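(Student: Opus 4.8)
The plan is to recast ``closer to the center'' as a comparison of principal angles and then exploit the explicit rational form of the atlas-graph transition maps. After permuting ambient coordinates we may assume the fixed chart has pivot index set $S_0=\{1,\dots,k\}$, so that the point represented by $A\in\mathbb{R}^{(n-k)\times k}$ is the subspace $\mathcal{Y}=\mathrm{colspan}\begin{pmatrix}I_k\\A\end{pmatrix}$ and the center of a chart with pivot set $S$ is the coordinate subspace $\mathcal{E}_S=\mathrm{span}\{e_l:l\in S\}$. Letting $\Pi_{\mathcal{Y}}=\begin{pmatrix}I_k\\A\end{pmatrix}(I_k+A^{\top}A)^{-1}\begin{pmatrix}I_k&A^{\top}\end{pmatrix}$ be the orthogonal projector onto $\mathcal{Y}$, the squared cosines of the principal angles between $\mathcal{Y}$ and $\mathcal{E}_S$ are the eigenvalues $\mu_1(S)\ge\cdots\ge\mu_k(S)$ of the principal submatrix $(\Pi_{\mathcal{Y}})_{S,S}$, so the Riemannian distance on $\mathbf{Gr}_{n,k}$ satisfies $d(\mathcal{Y},\mathcal{E}_S)^2=\sum_i\arccos^2\sqrt{\mu_i(S)}$; in particular $(\Pi_{\mathcal{Y}})_{S_0,S_0}=(I_k+A^{\top}A)^{-1}$ gives $d(\mathcal{Y},\mathcal{E}_{S_0})^2=\sum_i\arctan^2\sigma_i(A)$, where $\sigma_i(A)$ are the singular values of $A$ \citep{absil}. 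Thus the claim asserts that $S_0$ strictly minimizes $S\mapsto\sum_i\arccos^2\sqrt{\mu_i(S)}$ over all $k$-subsets as soon as $\|A\|_{\max}<1$. I would note at the outset that the tempting shortcut---showing that $S_0$ maximizes $\sum_{l\in S}(\Pi_{\mathcal{Y}})_{ll}$, which would settle the analogue for the projection (chordal) metric---does \emph{not} suffice here, so the convexity of $\mu\mapsto\arccos^2\sqrt{\mu}$ on $[0,1]$, which penalizes a principal angle near $\pi/2$ much more than $\sin^2$ does, has to be used.

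Because the atlas graph connects only charts whose pivot sets differ in a single index, I would first reduce to comparing $S_0$ with an adjacent chart $S'=(S_0\setminus\{p\})\cup\{k+q\}$, recovering the general case by descending along a chain of single swaps from an arbitrary $S$ to $S_0$ that increases $|S\cap S_0|$ at every step. For a single swap I would invoke the transition map of Algorithm~\ref{alg:grass_trans_map}: in the normalization $p=q=1$, writing $A=\begin{pmatrix}a_{11}&a_1^{\top}\\\tilde a_1&\tilde A\end{pmatrix}$, the new coordinate is
\[ A^{(S')}=\begin{pmatrix}a_{11}^{-1}&-a_{11}^{-1}a_1^{\top}\\ a_{11}^{-1}\tilde a_1&\tilde A-a_{11}^{-1}\tilde a_1 a_1^{\top}\end{pmatrix}, \]
and $\mathcal{Y}$ lies in chart $S'$ exactly when $a_{11}\neq 0$. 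The hypothesis $\|A\|_{\max}<1$ forces $|a_{11}|<1$, so the entry $a_{11}^{-1}$ of $A^{(S')}$ has modulus greater than $1$, whence $\sigma_{\max}(A^{(S')})>1$ and the largest principal angle between $\mathcal{Y}$ and $\mathcal{E}_{S'}$ exceeds $\pi/4$---geometrically, one cannot pass into an adjacent chart's unit cube without first leaving one's own. To upgrade this to the sharp distance inequality I would compare the full spectra: $(\Pi_{\mathcal{Y}})_{S_0,S_0}$ and $(\Pi_{\mathcal{Y}})_{S',S'}$ are both obtained from the common $(k-1)\times(k-1)$ block $(\Pi_{\mathcal{Y}})_{\{2,\dots,k\},\{2,\dots,k\}}$ by adjoining one bordering row and column, so by Cauchy's interlacing theorem their eigenvalue vectors interlace that of the common block; combining the interlacing, the entrywise information about $(\Pi_{\mathcal{Y}})_{S_0,S_0}$ furnished by $\|A\|_{\max}<1$, and the convexity of $\mu\mapsto\arccos^2\sqrt{\mu}$ should yield $\sum_i\arctan^2\sigma_i(A)<\sum_i\arctan^2\sigma_i(A^{(S')})$, the degenerate case $a_{11}=0$ being covered by the same interlacing bound.

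The step I expect to be the main obstacle is exactly this spectral comparison for a single swap. The easy half---that one principal angle jumps past $\pi/4$---is on its own too weak, since $d(\mathcal{Y},\mathcal{E}_{S_0})$ itself need not be small (the hypothesis bounds the entries of $A$, not $\|A\|_{\mathrm{op}}$ or $\|A\|_F$), so one must quantitatively trade the growth of the largest principal angle under the transition map against the possible shrinkage of the remaining $k-1$ angles. This balancing is where the entrywise bound $\|A\|_{\max}<1$---rather than a Frobenius- or operator-norm bound---must enter essentially, and it is the one place I would expect the proof to require genuine estimates rather than bookkeeping.
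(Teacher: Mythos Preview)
Your route and the paper's diverge sharply. The paper attempts no spectral comparison, no interlacing, no transition-map algebra: after reducing by permutation symmetry to the base Ehresmann chart, it simply cites two explicit distance computations, namely that for $A=t\,\vec{e}_{n-k}\vec{e}_k^{\top}$ the Grassmann distance to the base chart's center equals $\lvert\arctan t\rvert$ (Claim~\ref{clm:grass_dist_0}) while the distance to the center of the adjacent chart obtained by the cyclic coordinate shift equals $\lvert\operatorname{arccot} t\rvert$ (Claim~\ref{clm:grass_dist_other}), and then uses $\lvert\arctan t\rvert<\lvert\operatorname{arccot} t\rvert\iff\lvert t\rvert<1$. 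Those two claims treat only matrices with a \emph{single} nonzero entry and only \emph{one} particular adjacent chart, so the paper's argument is itself a sketch: it pins down the threshold $\lvert t\rvert=1$ along each coordinate axis---which is exactly what is used operationally to trigger transitions---but it does not address a generic $A$ with $\lVert A\rVert_{\max}<1$ against a generic chart center.

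Your plan does aim at the full statement, and it has two gaps, one you flag and one you do not. The unflagged one is the chain-of-swaps reduction. Your single-swap inequality consumes the hypothesis $\lVert A\rVert_{\max}<1$ in the \emph{current} chart, yet your own transition formula shows that after one swap the new coordinate matrix already contains the entry $a_{11}^{-1}$ with $\lvert a_{11}^{-1}\rvert>1$; the hypothesis is therefore lost after the very first step, and the induction along $S_0\to T_1\to\cdots\to S$ cannot be iterated past $T_1$. The flagged gap is the single-swap comparison itself: Cauchy interlacing against the shared $(k-1)\times(k-1)$ principal block relates the two spectra, but convexity of $\mu\mapsto\arccos^2\sqrt{\mu}$ does not by itself turn that interlacing into $\sum_i\arctan^2\sigma_i(A)<\sum_i\arctan^2\sigma_i(A^{(S')})$ without an additional majorization-type statement you have not supplied, and it is unclear that the entrywise bound $\lVert A\rVert_{\max}<1$ feeds such an argument in the right way.
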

\begin{proof}
    The claim is proved for the Ehressman chart $(\mathcal{U}_0,\mathcal{V}_0.\varphi_0)$ by Claims~\ref{clm:grass_dist_0} and \ref{clm:grass_dist_other}. For the remaining Ehressman charts, the Claim is proved by observing we can conjugate elements of $\mathcal{U}_0$ by the permutation matrices used to define the remaining Ehressman charts.
\end{proof}

In the online Fr\'echet mean experiment, transition maps between coordinate charts are implemented according to Algorithm~\ref{alg:grass_trans_map}, whose runtime scales with complexity $O(n^3+n^2k+nk^2+k^3)$ but is constant with respect to the number of compressed charts traversed by the atlas graph. Since the quasi-Euclidean updates have $O(nk)$ time complexity, while the first-order update schemes for GiFEE, MANOPT, and MANOPT-RET have $\Omega(nk)$ time-complexity, the ATLAS framework can outspeed the other schemes when it can make infrequent use of high-cost chart transitions.

\subsection{Miscellaneous subroutines}\label{app:grass_misc_subroutines}
The online Fr\'echet mean estimation on the atlas graph (Algorithm~\ref{alg:grass_quasi_euclidean}) depends on the subroutines \texttt{ATLAS\_identify\_chart} (Algorithm~\ref{alg:grass_identify}), which identifies the closest Ehressman chart,  and \texttt{ATLAS\_ingest\_matrix} (Algorithm~\ref{alg:grass_ingest}), which gives the representation of  matrix in the current coordinate chart in the atlas graph.

\begin{figure}[t]
    \centering
    \fbox{\begin{subfigure}{0.45\columnwidth}
        \caption{\texttt{GifeeLog}}
        \begin{algorithmic}
            \footnotesize
            \Require $\mathcal{X},\mathcal{Y}\in\mathbf{Gr}_{n,k}$
            \Require $X\in\mathbb{R}^{n\times k},\mathbf{colproj}(X)=\mathcal{X}$
            \Require $Y\in\mathbb{R}^{n\times k},\mathbf{colproj}(Y)=\mathcal{Y}$
            \State $A\gets\left(I-X\left(X^\top X\right)^{-1}X^\top\right)Y\left(X^\top Y\right)^{-1}$
            \State $U,\Sigma,V^\top\gets\texttt{ThinSVD}(A)$
            \State \Return $U,\Sigma,V^\top$
        \end{algorithmic}
    \end{subfigure}}
    \fbox{\begin{subfigure}{0.45\columnwidth}
        \caption{\texttt{ManoptLog}}
        \begin{algorithmic}
            \footnotesize
            \Require $\mathcal{X},\mathcal{Y}\in\mathbf{Gr}_{n,k}$
            \Require $X\in\mathbb{R}^{n\times k},\mathbf{colproj}(X)=\mathcal{X}$
            \Require $Y\in\mathbb{R}^{n\times k},\mathbf{colproj}(Y)=\mathcal{Y}$
            \State $A\gets\left(I-X\left(X^\top X\right)^{-1}X^\top\right)Y\left(X^\top Y\right)^{-1}$
            \State $U,\Sigma,V^\top\gets\texttt{ThinSVD}(A)$
            \State $\Theta\gets\arctan\Sigma$
            \State \Return $U\Theta V^\top$
        \end{algorithmic}
    \end{subfigure}} \\
    \fbox{\begin{subfigure}{0.45\columnwidth}
        \caption{\texttt{GifeeExp}}
        \begin{algorithmic}
            \footnotesize
            \Require $\mathcal{X}\in\mathbf{Gr}_{n,k}$
            \Require $X\in\mathbb{R}^{n\times k},\mathbf{colproj}(X)=\mathcal{X}$
            \Require $U,\Sigma,V^\top$ from \texttt{GifeeLog}
            \Require iteration number $i>0$
            \State $\Theta\gets\arctan\Sigma$
            \State $\tilde{Z}\gets UV\cos(\Theta/i)+U\sin(\Theta/i)$
            \State $Z,R\gets$ QR decomposition of $\tilde{Z}$
            \State \Return $Z$
        \end{algorithmic}
    \end{subfigure}}
    \fbox{\begin{subfigure}{0.45\columnwidth}
        \caption{\texttt{ManoptExp}}
        \begin{algorithmic}
            \footnotesize
            \Require $\mathcal{X}\in\mathbf{Gr}_{n,k}$
            \Require $X\in\mathbb{R}^{n\times k},\mathbf{colproj}(X)=\mathcal{X}$
            \Require $L$ from \texttt{ManoptLog}
            \Require iteration number $i>0$
            \State $U,\Sigma,V^\top\gets\texttt{ThinSVD}(L/i)$
            \State $\tilde{Z}\gets XV\cos(\Sigma) V^\top+U\sin(\Sigma) V^\top$
            \State $Z,R\gets$ QR decomposition of $\tilde{Z}$
            \State \Return $Z$
        \end{algorithmic}
    \end{subfigure}} \\
    \fbox{\begin{subfigure}{0.45\columnwidth}
        \caption{\texttt{ManoptRet}}
        \begin{algorithmic}
            \footnotesize
            \Require $\mathcal{X}\in\mathbf{Gr}_{n,k}$
            \Require $X\in\mathbb{R}^{n\times k},\mathbf{colproj}(X)=\mathcal{X}$
            \Require $L$ from \texttt{ManoptLog}
            \Require iteration number $i>0$
            \State $U,\Sigma,V^\top\gets\texttt{ThinSVD}(X+L/i)$
            \State $\tilde{Z}\gets UV^\top$
            \State $Z,R\gets$ QR decomposition of $\tilde{Z}$
            \State \Return $Z$
        \end{algorithmic}
    \end{subfigure}}
    \caption{\textbf{Riemannian logarithm and retraction algorithms used by the non-ATLAS first-order update schemes in Sec.~\ref{sec:grass_experiment}.}}\label{fig:first_order_subroutines}
\end{figure}

\begin{algorithm}[t]
    \caption{\texttt{QA\_from\_A} ($Q_A$ for $A\in\mathcal{V}_0$ according to Claim \ref{clm:Q_A})}\label{alg:qa_from_a}
    \begin{algorithmic}
        \Require $A\in\mathbb{R}^{n-k,k}$
        \State $Q_A\gets\left(\begin{array}{c|c}
            \sqrt{\left(I_k+A^\top A\right)^{-1}} & -A^\top\sqrt{\left(I_{n-k}+AA^\top\right)^{-1}} \\
            \hline
            A\sqrt{\left(I_k+A^\top A\right)^{-1}} & \sqrt{\left(I_{n-k}+AA^\top\right)^{-1}}
        \end{array}\right)$ \Comment{$O(n^3+n^2k+nk^2+k^3)$ time}
        \State \Return $Q_A$
    \end{algorithmic}
\end{algorithm}

\begin{algorithm}[t]
    \caption{\texttt{ATLAS\_identify\_chart} (identify chart in Ehresmann atlas whose center is closest to $\mathbf{colspan}X$)}\label{alg:grass_identify}
    \begin{algorithmic}
        \Require $X\in\mathbb{R}^{n\times k}$, full rank
        \State $P\gets X\left(X^\top X\right)^{-1}X^\top$ \Comment{$P=\mathbf{colproj}X$; $O(n^2k+nk^2+k^3)$ time}
        \For{$j\in\{1,\ldots,k\}$} \Comment{$O(nk)$ time}
            \State $i_j\gets i\text{ such that }P_{ii}\text{ is }j\text{th largest diagonal entry of }P$
            \State $i_1,\ldots,i_k\gets i_1,\ldots,i_k$ in increasing order
        \EndFor
        \State \Return $i_1,\ldots,i_k$
    \end{algorithmic}
\end{algorithm}

\begin{algorithm}[t]
    \caption{\texttt{ATLAS\_ingest\_matrix} (ingest Grassmann element represented as full-rank matrix into Ehresmann chart)}\label{alg:grass_ingest}
    \begin{algorithmic}
        \Require $X\in\mathbb{R}^{n\times k}$, full rank
        \Require $1\leq i_1\leq\ldots\leq i_k\leq n$ specifying Ehresmann chart
        \State $X_U\gets$ restriction of $X$ to rows in $\{i_1,\ldots,i_k\}$ \Comment{$O(1)$ time}
        \State $X_L\gets$ restriction of $X$ to rows not in $\{i_1,\ldots,i_k\}$ \Comment{$O(1)$ time}
        \State $A\gets X_LX_U^{-1}$ \Comment{$A=\varphi_{i_1,\ldots,i_k}\left(\mathbf{colproj}\left(X\right)\right)$; $O(nk^2+k^3)$}
        \State \Return $A$
    \end{algorithmic}
\end{algorithm}

\subsection{Non-uniform sampling of the Grassmann manifold by geodesic power scaling}\label{sec:geodesic_power_distribution}
For the Grassmann experiments, points for online sampling were generated by sampling randomly from the Grassmann manifold. To generate points from a distribution with a well-defined Fr\'echet mean, such as the uniform measure on balls of a fixed radius on the Grassmannian, we used the following method, which we call \textit{geodesic power scaling}. 

Fix $p>1$ and $\mathcal{X}\in\mathbf{Gr}_{n,k}$. Points are sampled from the \textit{geodesic power distribution} $\mathbf{GPD}(\mathcal{X}, p)$ as follows:
\begin{enumerate}
    \setlength\itemsep{0em}
    \item A point $\mathcal{Y}$ is sampled uniformly from $\mathbf{Gr}_{n,k}$.
    \item The Grassmann distance $\delta$ between $\mathcal{X}$ and $\mathcal{Y}$ is computed.
    \item A new point $\mathcal{Y}^\prime$ is computed as $\mathcal{Y}^\prime=\exp_\mathcal{X}\left(\left(\frac{\delta}{\delta_{\max}}\right)^p\log_\mathcal{X}\mathcal{Y}\right)$, where $\delta_{\max}$ is the largest possible Grassmann distance between two points on $\mathbf{Gr}_{n,k}$.\footnote{$\delta_{\max}=\frac{\pi}{2}\sqrt{\max\{k,n-k\}}$} 
\end{enumerate}

Whenever we sample from $\mathbf{GPD}(\mathcal{X},p)$, we assume that the sample takes the form of a matrix $X\in\mathbb{R}^{n\times k}$ such that $\mathcal{X}$ is the column space of $X$. Increasing the scaling exponent $p$ reduces the entropy of the distribution by concentrating probability around $\mathcal{X}$; as $p\to\infty$, $\mathbf{GPD}(\mathcal{X},p)$ approaches the Dirac delta based at $\mathcal{X}$. Note that the distribution is invariant under action by the special orthogonal group, as long as the action fixes $\mathcal{X}$, giving the distribution ``rotational symmetry'' about $\mathcal{X}$. Moreover, $\mathbf{GPD}(\mathcal{X},p)$ has unique population Fr\'echet mean $\mathcal{X}$ for all $p>1$.

\section{Atlas graph representation of high-contrast image patches}\label{sec:carlsson_computations}
\begin{figure}[t]
    \centering
    \begin{overpic}[width=0.5\columnwidth]{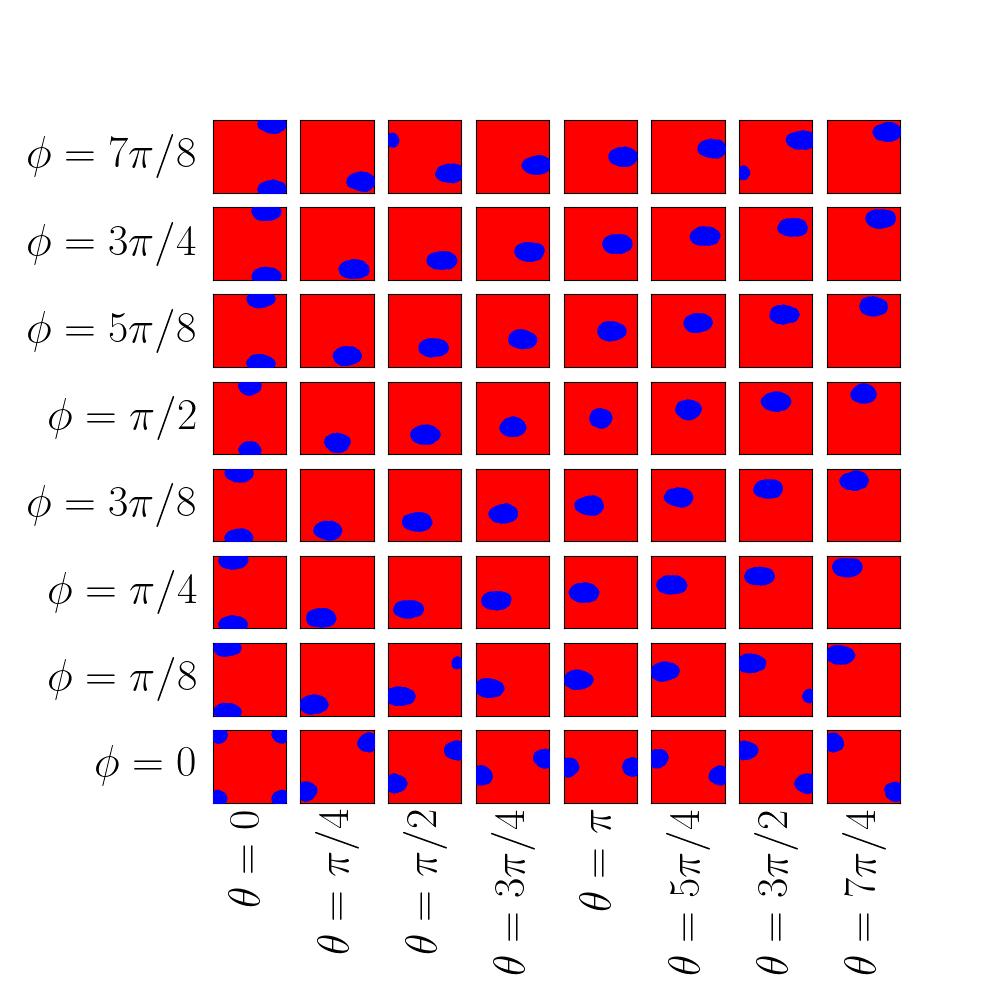}\put(5,95){\textbf{A}}\end{overpic}\begin{overpic}[width=0.5\columnwidth]{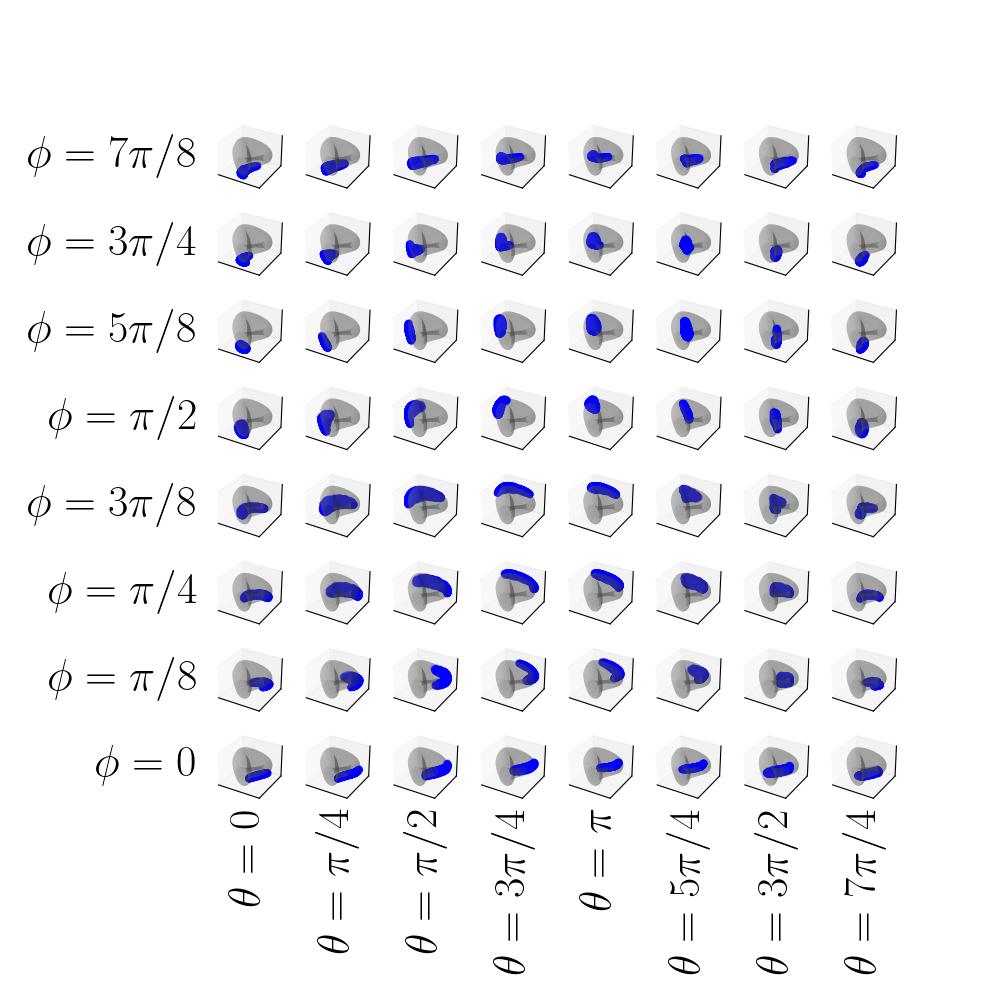}\put(5,95){\textbf{B}}\end{overpic}
    \caption{\textbf{Our approximate coordinate charts on the Carlsson manifold cover the entire manifold (Sec.~\ref{sec:atlas_graph_for_carlsson}).} (\textbf{A}) Coordinate charts used in our atlas graph representation of the Carlsson manifold, depicted in polar coordinates. Blue denotes points belonging to the neighborhood from the point sample, while red denoted points not belonging to the neighorhood from the point sample. (\textbf{B}) The same coordinate charts depicted on the Karcher representation of the Klein bottle.}
    \label{fig:carlsson_atlas graph}
\end{figure}

\subsection{Persistence diagram computation (Sec. \ref{sec:atlas_graph_for_carlsson}, Fig. \ref{fig:h2_homology}A,B)}\label{sec:persistent_homology}
We compute persistent homology on point clouds generated from our atlas graph representation of high-contrast image patches, as well as on points sampled from a parametrization from Sritharan \textit{et al.} \citep{sritharan} of the same manifold (Fig.~\ref{fig:h2_homology}).

Point samples from the atlas graph were generated by performing the following process for each of the 64 charts $\left(\tilde{\mathcal{U}},\mathcal{V},\tilde{\varphi}\right)$ in our atlas graph representation (Sec.~\ref{sec:primitives}):
\begin{itemize}
    \setlength\itemsep{0em}
    \item For chart $\mathcal{V}$ of radius $r$, a point $\tv$ is sampled uniformly from the ball of radius $r$ in $\mathcal{V}$;
    \item $\tv$ is rejected if it does not belong to the coordinate chart according to the kernel SVM classifier given by Equation~\ref{eqn:simplified_quartic};
    \item if $\tv$ is not rejected, return $\tilde{\varphi}^{-1}\left(\tv\right)\in\mathbb{R}^9$.  
\end{itemize}
For the Sritharan parametrization of the Klein bottle, a point is sampled by sampling coordinates $(\theta,\phi)$ from the uniform measure on $[0,\pi]\times [0,2\pi]$, mapping the coordinates into $\mathbb{R}^{3\times 3}$ by the restriction of the map $k_{\theta,\phi}$ to $\left\{-1,0,1\right\}\times\left\{-1,0,1\right\}$ (Sec.~\ref{sec:carlsson_example}), and reshaping the result as an element of $\mathbb{R}^9$. 

Ten points were sampled for each of the 64 charts in the atlas graph, for a total of 640 points, and $640$ points were sampled from Sritharan's parametrization of the Klein bottle. 

Persistent homology was computed on the samples by Vietoris-Rips complex using the Python package Ripser \citep[version 0.6.0,][]{ripser}. 


\subsection{Pairwise aggregate bottleneck distance computation (Sec.~\ref{sec:atlas_graph_for_carlsson}, Fig.~\ref{fig:h2_homology}C)}\label{app:bottleneck_heatmap}
Let $P_0,P_1,P_2$ be persistence diagrams generated from data $X$ corresponding to $H_0$, $H_1$, and $H_2$ features, respectively. Let $Q_0,Q_1,Q_2$ be generated similarly from data $Y$. We define the \textit{aggregate bottleneck distance} between diagram triplets $\left(P_0,P_1,P_2\right)$ and $\left(Q_0,Q_1,Q_2\right)$ as $\sqrt{\sum_{i=0}^2d^2_\text{B}\left(P_i,Q_i\right)}$, where $d_\text{B}$ denotes the bottleneck distance \citep[e.g.,][]{oudot}. This distance function 
reflects the bottleneck distance between persistence diagrams in each dimension, making it a natural way to measure the preservation of $H_0$, $H_1$, and $H_2$ features by different dimensionality reduction measures.

To create the heatmap in Fig.~\ref{fig:h2_homology}C, the same points were used as described in Appendix~\ref{sec:persistent_homology}. PCA (restricted to the top two or five principal components) and a two-dimensional UMAP were each computed on the points sampled from the Sritharan parametrization. We excluded $t$-SNE from the bottleneck distance computations, due to its very high aggregate bottleneck distances to to all other methods.

\subsection{Geodesic distance computations (Fig.~\ref{fig:dist_fig})}\label{sec:geodesic_distances}
For the experiment comparing geodesic distances, we approximated true geodesic distance as follows. First, we generated sample points from the Sritharan parametrization by mapping a $1000\times 1000$ grid of evenly spaced points in the set $[0,\pi]\times[0,2\pi]$ into $\mathbb{R}^{3\times3}$ through the restriction of $k_{\theta,\phi}$ to $\{-1,0,1\} \times \{-1,0,1\}$ (Sec.~\ref{sec:carlsson_example}), and then reshaped these as vectors in $\mathbb{R}^9$. These points were used to create a $5$-nearest-neighbors graph, with each edge weighted by the Euclidean distance between its endpoints. True geodesic distance between a given pair of points was then approximated by computing a shortest path in this graph. 

For PCA, $t$-SNE, and UMAP, geodesic distances were computed via an analogous approach. That is, each of these transformations was used to map the sampled grid points into a new, separate representation, on which $5$-nearest-neighbor graphs with distance-weighted edges were computed. Note that these graphs have one million nodes, while Algorithm~\ref{alg:dense_graph} results in a graph of only 28,700 nodes for the values of $\epsilon$ and $\delta$ specified; therefore, naively, one might expect geodesic distances to be better preserved by PCA, $t$-SNE, and UMAP in the transformed space than by the atlas graph. For a given transformation (i.e., PCA, $t$-SNE, and UMAP) and a pair of grid points, the geodesic distance between two points was approximated as the shortest path between them in the corresponding $5$-nearest-neighbor graph. 

To investigate how well geodesic distances were preserved by the atlas graph, PCA, $t$-SNE, and UMAP, 100 pairs of points were randomly sampled without replacement from the one million grid points in the Sritharan parametrization. For the atlas graph, which was generated 
as described in Sec.~\ref{sec:atlas_graph_for_carlsson}, these 100 pairs of points were ingested into the atlas graph, and the geodesic distance between each pair of points was approximated as 
the na\"ive approximate distance between the points (Sec.~\ref{sec:app_naive_dist}).
For PCA, $t$-SNE, and UMAP, each point was added into the 5-nearest-neighbor graph as a node connected solely to its closest point in the graph, and geodesic distances were approximated between pairs of points using shortest paths in the graph.

\subsection{Packages used for non-atlas-graph dimensionality reduction}\label{app:non_atlas_packages}
PCA and $t$-SNE were computed using scikit-learn (version 1.3.2)~\cite{scikit-learn}, while UMAP was computed using the official UMAP package for Python (version 0.5.1)~\cite{mcinnes}. We ran $t$-SNE was  with \texttt{perplexity=5.0} and otherwise default arguments. UMAP was computed with \texttt{n\_neighbors=5} and otherwise default arguments.

\section{Computing Riemannian principal boundaries (Sec.~\ref{sec:rpb})}\label{sec:exp_carlsson}

\subsection{Parameter choices and modifications}\label{sec:params_and_mods}
We implemented the RPB algorithm as described in \citep{yao_2020}, with the exceptions described here. 

In Yao, \textit{et al.}~\cite{yao_2020}, Sec.~2.2, a univariate kernel $\kappa_h$ is used to define which points are included in the computation of the local covariance matrix $\Sigma_h$. For this kernel, we use the indicator function for the ball of radius $h$.

The original algorithm computes a weighted average of the first derivatives of the two principal flows, where $\lambda_\delta(t)$ is the weight at iteration $t$ \citep[Equation~10]{yao_2020}. In our implementation, we assume that $\lambda_\delta(t)=1/2$ for all $t$. This choice was motivated by both simplicity and practical considerations, i.e., it helped avoid issues where the boundary could collapse into one of the principal flows. 

The differential equation for a principle flow $\gamma^+$ \citep[induced by Equation~5]{yao_2020} effectively follows the vector field defined by the top eigenvector of the local covariance matrix $\Sigma_h$. To avoid oscillations in the principle flow direction, due either to the insensitivity of eigenvector computations to multiplication by $-1$ or non-smooth changes in the top eigenvectors, we enforce a positive inner product between tangent vectors in adjacent iterates. 
Further,
if the support of the data is sufficiently sparse, in practice, the top eigenvector field for the principle flow will be dominated by noise and will eventually cause the boundary curve to move away from the data, which also causes $\Sigma_h$ to become undefined. 
To prevent this from happening, we use the following modification to correct the principle flow solution by moving it towards the mean of the local data:

Let $W$ be the top eigenvector field of the local covariance matrix $\Sigma_h$ \citep[as in Equation~4,][]{yao_2020}. Instead of following the update rule $\dot{\gamma}=W(\gamma),$ we instead follow the rule
\begin{equation}\label{eqn:modified_yao}
    \dot{\gamma}=W(\gamma)+\alpha\left(I-W(\gamma)W(\gamma)^\top\right)\left(\frac{1}{\sum_i\kappa_h\left(x_i,\gamma\right)}\sum_i\log_\gamma x_j\right),
\end{equation}
where $\alpha>0$ is a correction factor that moves $\dot{\gamma}$ toward the mean Riemannian logarithm of nearby sample points, projected onto the orthocomplement $I-W(\gamma)W(\gamma)^\top$ of the top eigenvector of the local covariance matrix. 


\subsection{Transition boundaries}\label{sec:transition_boundaries}
Transition maps on our atlas graph representation of Carlsson's natural image patches are constructed using the methodology behind Expression (\ref{eqn:simplified_quartic}). Coordinate charts are constructed by a least-squares quadratic fit as in \citep{sritharan}. The points for the $i$th fit were those within radius\footnote{We found that $r_i=1.1$ for all $i$ was sufficient to cover the Carlsson manifold, whilst being small enough to maintain topological triviality for all $i$.} $r_i$ of center $\vec{p}_i$. To be pedantic: this is the set of all sample points $\vec{x}_j$ satisfying the quadratic $\vec{y}_j^\top\vec{y}_j<r_i^2$ under the change of coordinates $\vec{y}_j=\vec{x}_j-\vec{p}_j$. Within the $i$th coordinate chart, then, Expression (\ref{eqn:simplified_quartic}) yields the inequality
\begin{equation}\label{eqn:transition_boundary_klein}
    \frac{1}{4}\left(\tv\otimes\tv\right)^\top K_i^\top K_i\left(\tv\otimes\tv\right)+\vec{h}_i^\top K_i\left(\tv\otimes\tv\right)+\tv^\top\tv+\vec{h}_i^\top\vec{h}_i-r_i^2<0,
\end{equation}
where $L_i,M_i,K_i$ are as in Equation~\ref{eqn:quad_form_general}.

\section{Theoretical results for the Grassmann manifold}

\subsection{Ingesting columnspanning matrix into atlas graph of the Grassmannian}\label{sec:ingestion}
Say $X\in\mathbb{R}^{n\times k}$ has full rank. Thinking of $\mathbf{Gr}_{n,k}$ as the manifold of $n\times n$ orthogonal projection matrices of rank $k$, we know that the columnspace of $X$ is uniquely represented in $\mathbf{Gr}_{n,k}$ by $X\left(X^\top X\right)^{-1}X^\top$. Finding the chart to which $X$ belongs is tantamount to finding the ``centerpoint'' projection matrix to which $X\left(X^\top X\right)^{-1}X^\top$ is closest. This, in turn, is equivalent to finding the centerpoint projection matrix with which $X\left(X^\top X\right)^{-1}X^\top$ has the highest Frobenius inner product. This is accomplished by finding the $k$ largest diagonal entries of $X\left(X^\top X\right)^{-1}X^\top$, as demonstrated by the following series of deductions.
\begin{align*}
    \left\langle X\left(X^\top X\right)^{-1}X^\top,\sum_{j=1}^k\vec{e}_{i_j}\vec{e}_{i_j}^\top\right\rangle_{\text{Fr}}&=\mathbf{Tr}\left[X\left(X^\top X\right)^{-1}X^\top\sum_{j=1}^k\vec{e}_{i_j}\vec{e}_{i_j}^\top\right] \\
    &=\sum_{j=1}^k\mathbf{Tr}\left[X\left(X^\top X\right)X^\top\left(\vec{e}_{i_j}\vec{e}_{i_j}^\top\right)\right] \\
    &=\sum_{j=1}^k\left[X\left(X^\top X\right)^{-1}X^\top\right]_{i_ji_j}
\end{align*}
Note that the diagonal entries of $X\left(X^\top X\right)^{-1}X^\top$ are guaranteed to be nonnegative by the positive-semidefiniteness of $X\left(X^\top X\right)^{-1}X^\top$.

More generally, let $Q$ and $P$ be orthogonal matrices such that
\begin{equation*}
    Q\left(\begin{array}{c|c}
        I_k & \mathbf{0}_{k,n-k} \\
        \hline
        \mathbf{0}_{k,n-k} & \mathbf{0}_{n-k,n-k}
    \end{array}\right)Q^\top=P.
\end{equation*}
We ingest a full-rank matrix $X\in\mathbb{R}^{n\times k}$ into the chart centered at $P$ by the map
\begin{equation}\label{eqn:ingestion_general}
    X\mapsto X_LX_U^{-1},
\end{equation}
where $X_U,X_L$ are given by
\begin{equation*}
    X_U=\left(I_k\mid\mathbf{0}_{k,n-k}\right)Q^\top X,\hspace{2cm}X_L=\left(\mathbf{0}_{n-k,k}\mid I_{n-k}\right)Q^\top X.
\end{equation*}

\subsection{Miscellaneous Grassmann computations} \label{app:misc_grass_comp}
\begin{claim}\label{clm:grass_dist_0}
    Let $\varphi_0$ be the Ehresmann coordinate chart map (Sec.~\ref{sec:grass_example}). For $t\in\mathbb{R}$, we define $V_t:=\varphi_0^{-1}\left(t\vec{e}_{n-k}\vec{e}_k\right)=\left(\frac{I_k}{t\vec{e}_{n-k}\vec{e}_k^\top}\right)$. The Grassmann distance between $\mathbf{colproj}\left(V_t\right)$ and $\mathbf{colproj}(U)$, where $U:=\varphi_0^{-1}\left(\mathbf{0}_{n-k,k}\right)=\left(\frac{I_k}{\mathbf{0}_{n-k,k}}\right)$, is equal to $\left\lvert\arctan t\right\rvert$.
    \end{claim}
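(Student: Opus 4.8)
The plan is to read the Grassmann distance off the principal angles between the two $k$-planes, using the standard fact \citep[e.g.,][]{absil} that the geodesic distance on $\mathbf{Gr}_{n,k}$ between two subspaces equals $\bigl(\sum_{i=1}^{k}\theta_i^2\bigr)^{1/2}$, where $\cos\theta_1,\dots,\cos\theta_k$ are the singular values of $\hat{U}^\top\hat{V}$ for any matrices $\hat{U},\hat{V}$ with orthonormal columns spanning the two subspaces. So the whole claim reduces to exhibiting orthonormal bases and computing one small SVD.

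First I would unwind the notation. The columns of $U=\left(\frac{I_k}{\mathbf{0}_{n-k,k}}\right)$ are exactly the standard basis vectors $\vec{e}_1,\dots,\vec{e}_k\in\mathbb{R}^n$, hence already orthonormal, so $\hat{U}=U$. The block $t\vec{e}_{n-k}\vec{e}_k^\top$ is the $(n-k)\times k$ matrix whose only nonzero entry is $t$ in position $(n-k,k)$; consequently the columns of $V_t$ are $\vec{e}_1,\dots,\vec{e}_{k-1}$ together with $\vec{e}_k+t\,\vec{e}_n$. Dividing the last column by $(1+t^2)^{1/2}$ produces a matrix $\hat{V}_t$ with orthonormal columns spanning the same subspace $\mathbf{colproj}(V_t)$.

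Next I would compute $\hat{U}^\top\hat{V}_t$. Since the first $k-1$ columns of $\hat{U}$ and $\hat{V}_t$ coincide, and $\vec{e}_1,\dots,\vec{e}_{k-1},\vec{e}_k,\vec{e}_n$ are mutually orthogonal, this product is the $k\times k$ diagonal matrix $\mathrm{diag}\bigl(1,\dots,1,(1+t^2)^{-1/2}\bigr)$. Its singular values are $1$ with multiplicity $k-1$ and $(1+t^2)^{-1/2}$, so $\theta_1=\dots=\theta_{k-1}=0$ and $\theta_k=\arccos\bigl((1+t^2)^{-1/2}\bigr)$. Finally, the elementary identity $\arccos\bigl((1+t^2)^{-1/2}\bigr)=\arctan\lvert t\rvert$ (valid because $\cos(\arctan\lvert t\rvert)=(1+t^2)^{-1/2}$ and $\arctan\lvert t\rvert\in[0,\pi/2)$) gives a Grassmann distance of $\bigl(\sum_i\theta_i^2\bigr)^{1/2}=\lvert\theta_k\rvert=\arctan\lvert t\rvert=\lvert\arctan t\rvert$.

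The argument is essentially routine arithmetic; the only things that need care are (i) correctly parsing $\vec{e}_{n-k}\vec{e}_k^\top$ so that $V_t$ differs from $U$ only in its last column, and (ii) being explicit that the ``Grassmann distance'' here is the geodesic (arc-length) distance, i.e., the $\ell^2$ aggregate of principal angles, so that the single nonzero angle $\theta_k$ is literally the distance. Neither is a genuine obstacle, but both must be pinned down for the reduction to one principal angle to be airtight.
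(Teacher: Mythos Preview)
Your proof is correct and follows essentially the same route as the paper: both reduce the distance to the single nonzero principal (Jordan) angle between the two $k$-planes and identify it as $\arccos\bigl((1+t^2)^{-1/2}\bigr)=\lvert\arctan t\rvert$. The only cosmetic difference is that you orthonormalize $V_t$ up front and read off the singular values of $\hat{U}^\top\hat{V}_t$, whereas the paper applies the Neretin formula $\left(U^\top U\right)^{-1}U^\top V_t\left(V_t^\top V_t\right)^{-1}V_t^\top U$ directly to the non-orthonormal representatives; the resulting diagonal matrix and the conclusion are identical.
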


\begin{proof}
        The square of the Grassmann distance $\mathbf{dist}_{\mathbf{Gr}}$ between two projection matrices $P,Q\in\mathbf{Gr}_{n,k}$ is given by the sum of squares of Jordan angles between their subspaces. From Lemma 5 in \citep{neretin}, if matrices $U_P,U_Q\in\mathbb{R}^{n\times k}$ satisfy $P=\mathbf{colproj}U_P$ and $Q=\mathbf{colproj}U_P$, the squares of cosines of the Jordan angles between $P$ and $Q$ are the eigenvalues of the matrix $\left(U_P^\top U_P\right)^{-1}U_P^\top U_Q\left(U_Q^\top U_Q\right)^{-1}U_Q^\top U_P$. Therefore, the following series of deductions holds.
        \begin{align*}\mathbf{dist}_{\mathbf{Gr}}\big(\varphi_0\left((t\vec{e}_k\vec{e}_k^\top)\right),\varphi_0\left(\mathbf{0}_{n-k,k}\right)\big)&=\sqrt{\mathbf{Tr}\left[\arccos\left(\sqrt{\left(U^\top U\right)^{-1}U^\top V_t\left(V_t^\top V_t\right)^{-1}V_t^\top U}\right)^2\right]} \\     &=\sqrt{\mathbf{Tr}\left[\arccos\left(\sqrt{\left(I_k+t^2\vec{e}_k\vec{e}_k^\top\right)^{-1}}\right)^2\right]} \\
            &=\sqrt{\mathbf{Tr}\left[\arccos\left(\sqrt{I_k-\frac{t^2}{1+t^2}\vec{e}_k\vec{e}_k^\top}\right)^2\right]} \\
            &=\sqrt{\mathbf{Tr}\left[\left(\begin{array}{c|c}
                \mathbf{0}_{k-1,k-1} & \mathbf{0}_{k-1,1} \\
                \hline
                \mathbf{0}_{1,k-1} & \arccos\left(\sqrt{\frac{1}{1+t^2}}\right)
            \end{array}\right)^2\right]} \\
            &=\left\lvert\arctan t\right\rvert
        \end{align*}
    \end{proof}

\begin{claim}\label{clm:grass_dist_other}
    Let $Q:=\left(\begin{array}{c|c}
        \mathbf{0}_{n-1,1} & I_{n-1} \\
        \hline
        1 & \mathbf{0}_{1,n-1}
    \end{array}\right)$ by the permutation matrix which moves up the indices of row vectors. Further, let $U,V_t$ be as in Claim~\ref{clm:grass_dist_0}, and let $U_\varnothing=QU$. The Grassmann distance between $\mathbf{col}\left(V_t\right)$ and $\mathbf{col}U_\varnothing$ is equal to $\lvert\arccot t\rvert$.
\end{claim}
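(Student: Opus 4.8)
The plan is to run the same argument as in the proof of Claim~\ref{clm:grass_dist_0}. By Lemma~5 of Neretin~\citep{neretin}, if $U_P,U_Q\in\mathbb{R}^{n\times k}$ satisfy $\mathbf{colproj}(U_P)=P$ and $\mathbf{colproj}(U_Q)=Q$, then the squared cosines of the Jordan angles between the two subspaces are the eigenvalues of $M:=\left(U_P^\top U_P\right)^{-1}U_P^\top U_Q\left(U_Q^\top U_Q\right)^{-1}U_Q^\top U_P$, and $\mathbf{dist}_{\mathbf{Gr}}$ equals $\sqrt{\mathbf{Tr}\big[\arccos(\sqrt{M})^2\big]}$. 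So I would take $U_P=V_t$ and $U_Q=U_\varnothing=QU$, and the whole computation reduces to evaluating $V_t^\top V_t$, $U_\varnothing^\top U_\varnothing$, and the cross term $V_t^\top U_\varnothing$.

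Two of the three are immediate. Since $Q$ is a permutation matrix it is orthogonal, so $U_\varnothing^\top U_\varnothing=U^\top Q^\top Q U=U^\top U=I_k$; and $V_t^\top V_t=I_k+t^2\vec{e}_k\vec{e}_k^\top$ exactly as computed in Claim~\ref{clm:grass_dist_0}. For the cross term I would spell out the action of $Q$: its columns are $Q\vec{e}_1=\vec{e}_n$ and $Q\vec{e}_j=\vec{e}_{j-1}$ for $j\ge2$, so the columns of $U_\varnothing=QU$ are $\vec{e}_n,\vec{e}_1,\dots,\vec{e}_{k-1}$, while the columns of $V_t$ are $\vec{e}_1,\dots,\vec{e}_{k-1},\vec{e}_k+t\vec{e}_n$. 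Taking pairwise inner products (and using $k<n$, so that $\vec{e}_k\perp\vec{e}_n$), one finds that $V_t^\top U_\varnothing$ has all entries zero except the $I_{k-1}$ block coming from $\vec{e}_1,\dots,\vec{e}_{k-1}$ and the single entry $\langle\vec{e}_k+t\vec{e}_n,\vec{e}_n\rangle=t$ in the bottom-left corner; i.e., $V_t^\top U_\varnothing$ has the same block-cyclic shape as $Q$, truncated to $k\times k$, with its one off-block entry replaced by $t$. Hence $V_t^\top U_\varnothing\left(U_\varnothing^\top U_\varnothing\right)^{-1}U_\varnothing^\top V_t=\left(V_t^\top U_\varnothing\right)\left(V_t^\top U_\varnothing\right)^\top=\mathrm{diag}(1,\dots,1,t^2)$.

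Assembling everything, $M=\left(I_k+t^2\vec{e}_k\vec{e}_k^\top\right)^{-1}\mathrm{diag}(1,\dots,1,t^2)=\mathrm{diag}\!\left(1,\dots,1,\tfrac{t^2}{1+t^2}\right)$, so $k-1$ of the Jordan angles are $0$ and the remaining one has squared cosine $\tfrac{t^2}{1+t^2}$. Plugging into the distance formula gives $\mathbf{dist}_{\mathbf{Gr}}=\arccos\sqrt{\tfrac{t^2}{1+t^2}}=\arccos\tfrac{|t|}{\sqrt{1+t^2}}$, and the elementary identity ``$\cos\theta=\tfrac{|t|}{\sqrt{1+t^2}}\iff\cot\theta=|t|$ for $\theta\in(0,\tfrac{\pi}{2}]$'' rewrites this as $|\arccot t|$. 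The only step that needs any care is the permutation bookkeeping in $V_t^\top U_\varnothing$; as a cross-check one can instead argue geometrically, noting that $\mathbf{col}(V_t)$ and $\mathbf{col}(U_\varnothing)$ share the subspace $\mathrm{span}\{\vec{e}_1,\dots,\vec{e}_{k-1}\}$ and restrict, on its orthocomplement, to the lines $\mathrm{span}\{\vec{e}_k+t\vec{e}_n\}$ and $\mathrm{span}\{\vec{e}_n\}$, whose angle is precisely $|\arccot t|$.
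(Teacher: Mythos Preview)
Your proof is correct and follows essentially the same approach as the paper: both compute the Jordan angles via the Neretin formula $\left(U_P^\top U_P\right)^{-1}U_P^\top U_Q\left(U_Q^\top U_Q\right)^{-1}U_Q^\top U_P$ and arrive at the diagonal matrix with entries $1,\ldots,1,\tfrac{t^2}{1+t^2}$. The only cosmetic difference is that you take $U_P=V_t$, $U_Q=U_\varnothing$ (so the $\tfrac{t^2}{1+t^2}$ lands in the $(k,k)$ slot), whereas the paper swaps the roles (landing it in the $(1,1)$ slot); your column-by-column computation of $V_t^\top U_\varnothing$ and the geometric cross-check at the end are, if anything, a bit tidier than the block-matrix expansion in the paper.
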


\begingroup
\allowdisplaybreaks
\begin{proof}
        Following from the discussion of Jordan angles in \citep{neretin}, we get:\\
        
        \begin{align*}\mathbf{dist}_{\mathbf{Gr}}\left(\mathbf{col}V_t,\mathbf{col}U_\varnothing\right)&=\left(\mathbf{Tr}\left[\arccos\left(\left[\left(U_\varnothing^\top U_\varnothing\right)^{-1}U_\varnothing^\top V_t\left(V_t^\top V_t\right)^{-1}V_t^\top U_\varnothing\right]^{1/2}\right)^2\right]\right)^{1/2} \\
            &=\Bigg(\mathbf{Tr}\Bigg[\arccos\bigg(\bigg[\left(I_k\mid\mathbf{0}_{k,n-k}\right)Q^\top\left(\frac{I_k}{t\vec{e}_{n-k}\vec{e}_k^\top}\right)\left(I_k-\frac{t^2}{1+t^2}\vec{e}_k\vec{e}_k^\top\right) \\
            &\hspace{1cm}\cdot\left(I_k\mid t\vec{e}_k\vec{e}_{n-k}^\top\right)Q\left(\frac{I_k}{\mathbf{0}_{n-k,k}}\right)\bigg]^{1/2}\bigg)^2\Bigg]\Bigg)^{1/2} \\
            &=\Bigg(\mathbf{Tr}\Bigg[\arccos\bigg(\bigg[\left(\begin{array}{c|c}
                \mathbf{0}_{1,k-1} & t \\
                \hline
                I_{k-1} & \mathbf{0}_{k-1,1}
            \end{array}\right)\left(I_k-\frac{t^2}{1+t^2}\vec{e}_k\vec{e}_k^\top\right) \\
            &\hspace{1cm}\cdot\left(\begin{array}{c|c}
                \mathbf{0}_{k-1,1} & I_{k-1} \\
                \hline
                t & \mathbf{0}_{1,k-1}
            \end{array}\right)\bigg]^{1/2}\bigg)^2\Bigg]\Bigg)^{1/2} \\
            &=\left(\mathbf{Tr}\left[\arccos\left(\left[\left(\begin{array}{c|c}
                t^2 & \mathbf{0}_{1,k-1} \\
                \hline
                \mathbf{0}_{k-1,1} & I_{k-1}
            \end{array}\right)-\frac{t^4}{1+t^2}\vec{e}_1\vec{e}_1^\top\right]^{1/2}\right)^2\right]\right)^{1/2} \\
            &=\left(\mathbf{Tr}\left[\arccos\left(\left(\begin{array}{c|c}
                \frac{t^2}{1+t^2} & \mathbf{0}_{1,k-1} \\
                \hline
                \mathbf{0}_{k-1,1} & I_{k-1}
            \end{array}\right)^{1/2}\right)^2\right]\right)^{1/2} \\
            &=\left(\mathbf{Tr}\left[\left(\begin{array}{c|c}
                \arccos\left(\sqrt{\frac{t^2}{1+t^2}}\right) & \mathbf{0}_{1,k-1} \\
                \hline
                \mathbf{0}_{k-1,1} & \mathbf{0}_{k-1,k-1}
            \end{array}\right)^2\right]\right)^{1/2} \\
            &=\lvert\arccot t\rvert
        \end{align*}
    \end{proof}
\endgroup

\begin{proof}(of Claim~\ref{clm:Q_A}) 
\label{prf:Q_A}
        We first show that $Q_A$ is orthogonal. Observe that $Q_AQ_A^\top$ is equal to
        \begin{align*}
            \left(\begin{array}{c|c}
                \left(I_k+A^\top A\right)^{-1}+A^\top\left(I_{n-k}+AA^\top\right)^{-1}A & \left(I_k+A^\top A\right)^{-1}A^\top-A^\top\left(I_{n-k}+AA^\top\right)^{-1} \\
                \hline
                A\left(I_k+A^\top A\right)^{-1}-\left(I_{n-k}+AA^\top\right)^{-1}A & A\left(I_k+A^\top A\right)^{-1}A^\top+\left(I_{n-k}+AA^\top\right)^{-1}
            \end{array}\right).
        \end{align*}
        To show $Q_A$ is orthogonal, then, it suffices to show that:
        \begin{enumerate}[label=\arabic*)]
            \setlength\itemsep{0em}
            \item $\left(I_k+A^\top A\right)^{-1}A^\top-A^\top\left(I_{n-k}+AA^\top\right)^{-1}=\mathbf{0}_{k,n-k}$;
            \item $\left(I_k+A^\top A\right)^{-1}+A^\top\left(I_{n-k}+AA^\top\right)^{-1}A=I_k$; and
            \item $A\left(I_k+A^\top A\right)^{-1}A^\top+\left(I_{n-k}+AA^\top\right)^{-1}=I_{n-k}$.
        \end{enumerate}
        These are shown in Lemmas, \ref{lem:Q_A_lem_2}, \ref{lem:Q_A_lem_1}, and \ref{lem:Q_A_lem_3}, respectively. \\
        
        It remains to show that $Q_A\left(\begin{array}{c|c}
            I_k & \mathbf{0}_{k,n-k} \\
            \hline
            \mathbf{0}_{n-k,k} & \mathbf{0}_{n-k,n-k}
        \end{array}\right)Q_A^\top=\tilde{\varphi}_0^{-1}(A)$. By definition of $\varphi_0^{-1}$, 
        \begin{equation*}
            \varphi_0^{-1}(A)=\left(\begin{array}{c|c}
                \left(I+A^\top A\right)^{-1} & \left(I+A^\top A\right)^{-1}A^\top \\
                \hline
                A\left(I+A^\top A\right)^{-1} & A\left(I+A^\top A\right)^{-1}A^\top
            \end{array}\right),
        \end{equation*}
        and so completing the proof is a straightforward computation.
\end{proof}

\begin{lemma}\label{lem:Q_A_lem_2}
    \begin{equation*}
        \left(I_k+A^\top A\right)^{-1}A^\top-A^\top\left(I_{n-k}+AA^\top\right)^{-1}=\mathbf{0}_{k,n-k}
    \end{equation*}
    \end{lemma}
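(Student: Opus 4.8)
The plan is to prove the identity via the standard \emph{push-through} (or \emph{matrix inversion}) manipulation, avoiding any explicit inversion formulae. First I would record that both $I_k+A^\top A$ and $I_{n-k}+AA^\top$ are symmetric positive definite, hence invertible: for any nonzero $\vec{v}$ we have $\vec{v}^\top(I_k+A^\top A)\vec{v}=\lVert\vec{v}\rVert^2+\lVert A\vec{v}\rVert^2>0$, and symmetrically for the other matrix. This justifies writing $\left(I_k+A^\top A\right)^{-1}$ and $\left(I_{n-k}+AA^\top\right)^{-1}$ in the first place.

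The key algebraic observation is the commutation relation
\begin{equation*}
\left(I_k+A^\top A\right)A^\top = A^\top + A^\top A A^\top = A^\top\left(I_{n-k}+AA^\top\right),
\end{equation*}
which holds simply because both sides equal $A^\top+A^\top AA^\top$. Left-multiplying this equation by $\left(I_k+A^\top A\right)^{-1}$ and right-multiplying by $\left(I_{n-k}+AA^\top\right)^{-1}$ yields
\begin{equation*}
A^\top\left(I_{n-k}+AA^\top\right)^{-1} = \left(I_k+A^\top A\right)^{-1}A^\top,
\end{equation*}
and subtracting the left-hand side from the right gives exactly the claimed zero matrix in $\mathbb{R}^{k\times(n-k)}$.

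There is no real obstacle here: the entire content is the identity $A^\top+A^\top AA^\top=\left(I_k+A^\top A\right)A^\top=A^\top\left(I_{n-k}+AA^\top\right)$ together with the observation that the two ``capacitance'' matrices are invertible. If one wished to be even more explicit, one could instead cite the Woodbury/push-through identity directly, but the two-line derivation above is self-contained and is the cleanest route; I would present it in that form.
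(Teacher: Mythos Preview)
Your proof is correct and, in fact, cleaner than the route taken in the paper. The paper proves this lemma by expanding both inverses as Neumann series, $A^\top(I_{n-k}+AA^\top)^{-1}=A^\top\sum_{j\geq0}(-1)^j(AA^\top)^j$, commuting $A^\top$ through each term to obtain $\sum_{j\geq0}(-1)^j(A^\top A)^jA^\top$, and re-summing. That argument implicitly requires the series to converge (i.e., the spectral radius of $AA^\top$ to be below one) or else an appeal to analytic continuation to handle general $A$. Your push-through manipulation, by contrast, is a two-line purely algebraic identity valid for every real matrix $A$, since you only use the always-true relation $(I_k+A^\top A)A^\top=A^\top(I_{n-k}+AA^\top)$ together with the positive-definiteness (hence invertibility) of the two Gram-type matrices. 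The paper's approach has the minor virtue of mirroring the series technique it reuses in Lemma~\ref{len:A_commutes_with_root} for matrix square roots, but your argument is both more elementary and more general.
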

    \begin{proof}
        Using a Neumann series representation of the matrix inverse \citep[e.g.,][]{stewart}, the lemma is proved by the following series of deductions.
        \begin{align*}
            A^\top\left(I_{n-k}+AA^\top\right)^{-1}&=A^\top\sum_{j=0}^\infty(-1)^j\left(AA^\top\right)^j \\
            &=\left(\sum_{j=0}^\infty(-1)^j\left(A^\top A\right)^j\right)A^\top \\
            &=\left(I_k+A^\top A\right)^{-1}A^\top
        \end{align*}
    \end{proof}

\begin{lemma}\label{lem:Q_A_lem_1}
    \begin{equation*}
        \left(I_k+A^\top A\right)^{-1}+A^\top\left(I_{n-k}+AA^\top\right)^{-1}A=I_k
    \end{equation*}
    \end{lemma}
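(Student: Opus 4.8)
\textbf{Proof proposal (of Lemma~\ref{lem:Q_A_lem_1}).} The plan is to reduce the claim to Lemma~\ref{lem:Q_A_lem_2}, which has just been established, so that essentially no new computation is needed. First I would invoke Lemma~\ref{lem:Q_A_lem_2} in the form $A^\top\left(I_{n-k}+AA^\top\right)^{-1}=\left(I_k+A^\top A\right)^{-1}A^\top$, and right-multiply both sides by $A$ to obtain
\begin{equation*}
A^\top\left(I_{n-k}+AA^\top\right)^{-1}A=\left(I_k+A^\top A\right)^{-1}A^\top A.
\end{equation*}
Substituting this into the left-hand side of the lemma gives $\left(I_k+A^\top A\right)^{-1}+\left(I_k+A^\top A\right)^{-1}A^\top A=\left(I_k+A^\top A\right)^{-1}\left(I_k+A^\top A\right)=I_k$, which is exactly the claim.

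As an alternative that does not depend on the ordering of the lemmas, I would give a direct Neumann-series argument in the same style as the proof of Lemma~\ref{lem:Q_A_lem_2}: expand $\left(I_{n-k}+AA^\top\right)^{-1}=\sum_{j\ge 0}(-1)^j\left(AA^\top\right)^j$, use the identity $A^\top\left(AA^\top\right)^j A=\left(A^\top A\right)^{j+1}$ to write $A^\top\left(I_{n-k}+AA^\top\right)^{-1}A=\sum_{j\ge 0}(-1)^j\left(A^\top A\right)^{j+1}=-\sum_{j\ge 1}(-1)^j\left(A^\top A\right)^{j}$, and then add $\left(I_k+A^\top A\right)^{-1}=\sum_{j\ge 0}(-1)^j\left(A^\top A\right)^{j}$ so that all terms with $j\ge 1$ cancel and only the $j=0$ term, namely $I_k$, survives. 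For rigor I would note, as the paper already does implicitly, that the Neumann series converges after an appropriate rescaling argument or that the identity, being polynomial in the entries of $A$ once cleared of inverses, extends from the convergent regime to all $A$ by density/continuity.

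The only genuine subtlety — and it is minor — is justifying the manipulation of the Neumann series (convergence requires $\left\lVert AA^\top\right\rVert<1$), so I expect the cleanest writeup is the first route, which sidesteps this entirely by quoting Lemma~\ref{lem:Q_A_lem_2}. Since Lemma~\ref{lem:Q_A_lem_2} is stated and proved immediately before, citing it is legitimate and makes the present proof a two-line identity manipulation. I would therefore present the first argument as the proof and, if desired, remark that it is equivalent to the symmetric statement of Lemma~\ref{lem:Q_A_lem_3} with the roles of $A$ and $A^\top$ interchanged.
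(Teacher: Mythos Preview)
Your primary argument is correct and is essentially identical to the paper's proof: the paper also invokes Lemma~\ref{lem:Q_A_lem_2} to replace $A^\top\left(I_{n-k}+AA^\top\right)^{-1}A$ by $\left(I_k+A^\top A\right)^{-1}A^\top A$ and then factors to obtain $I_k$. Your alternative Neumann-series route and the remark about convergence are extra but not needed.
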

    \begin{proof}
        By Lemma \ref{lem:Q_A_lem_2}, the following series of deductions holds.
        \begin{align*}
            \left(I_k+A^\top A\right)^{-1}+A^\top\left(I_{n-k}+AA^\top\right)^{-1}A&=\left(I_k+A^\top A\right)^{-1}+\left(I_k+A^\top A\right)^{-1}A^\top A \\
            &=\left(I_k+A^\top A\right)\left(I_k+A^\top A\right)^{-1} \\
            &=I_k
        \end{align*}
    \end{proof}

\begin{lemma}\label{lem:Q_A_lem_3}
    \begin{equation*}
        A\left(I_k+A^\top A\right)^{-1}A^\top+\left(I_{n-k}+AA^\top\right)^{-1}=I_{n-k}
    \end{equation*}
    \begin{proof}
        A proof for this Lemma is easily recreated from the method used to prove Lemma \ref{lem:Q_A_lem_1}.
    \end{proof}
\end{lemma}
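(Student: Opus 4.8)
The plan is to observe that Lemma~\ref{lem:Q_A_lem_1} was never genuinely about square matrices: its proof invokes only Lemma~\ref{lem:Q_A_lem_2}, and that in turn is the push-through identity $\left(I_k+A^\top A\right)^{-1}A^\top=A^\top\left(I_{n-k}+AA^\top\right)^{-1}$, which holds for a real matrix of any shape. So the quickest route is to apply the \emph{statement} of Lemma~\ref{lem:Q_A_lem_1} with $A$ replaced by $A^\top$. Writing $B:=A^\top\in\mathbb{R}^{k\times(n-k)}$, Lemma~\ref{lem:Q_A_lem_1} reads $\left(I_{n-k}+B^\top B\right)^{-1}+B^\top\left(I_k+BB^\top\right)^{-1}B=I_{n-k}$; substituting $B=A^\top$ and $B^\top=A$ turns this into exactly $A\left(I_k+A^\top A\right)^{-1}A^\top+\left(I_{n-k}+AA^\top\right)^{-1}=I_{n-k}$, up to reordering the two summands.

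If instead one prefers a self-contained derivation mirroring the proof of Lemma~\ref{lem:Q_A_lem_1}, the first step is to record the transpose of Lemma~\ref{lem:Q_A_lem_2}. Since $\left(I_k+A^\top A\right)^{-1}$ and $\left(I_{n-k}+AA^\top\right)^{-1}$ are symmetric, transposing $\left(I_k+A^\top A\right)^{-1}A^\top=A^\top\left(I_{n-k}+AA^\top\right)^{-1}$ yields $A\left(I_k+A^\top A\right)^{-1}=\left(I_{n-k}+AA^\top\right)^{-1}A$. Post-multiplying by $A^\top$ gives $A\left(I_k+A^\top A\right)^{-1}A^\top=\left(I_{n-k}+AA^\top\right)^{-1}AA^\top$. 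Then the left-hand side of the lemma becomes $\left(I_{n-k}+AA^\top\right)^{-1}AA^\top+\left(I_{n-k}+AA^\top\right)^{-1}=\left(I_{n-k}+AA^\top\right)^{-1}\left(AA^\top+I_{n-k}\right)=I_{n-k}$, which closes the argument.

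There is essentially no obstacle here; the only point worth a moment's care is that the Neumann-series manipulation used to prove Lemma~\ref{lem:Q_A_lem_2} is merely formal unless $\left\lVert A\right\rVert<1$, so a fully rigorous writeup would either invoke the push-through identity $\left(I_k+A^\top A\right)A^\top=A^\top\left(I_{n-k}+AA^\top\right)$ directly (valid for every $A$) or appeal to analytic continuation in $A$. Since the rest of the section follows the Neumann-series style, I would simply reuse Lemma~\ref{lem:Q_A_lem_2} as stated and present the substitution $A\mapsto A^\top$ as the one-line proof.
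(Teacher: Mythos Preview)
Your proposal is correct, and your second, self-contained derivation is precisely the recreation the paper has in mind: use (the transpose of) Lemma~\ref{lem:Q_A_lem_2} to rewrite $A\left(I_k+A^\top A\right)^{-1}A^\top$ as $\left(I_{n-k}+AA^\top\right)^{-1}AA^\top$ and then factor. Your first route, the one-line substitution $A\mapsto A^\top$ in the \emph{statement} of Lemma~\ref{lem:Q_A_lem_1}, is an equivalent and slightly slicker packaging of the same idea.
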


\begin{lemma}\label{len:A_commutes_with_root}
    $$\begin{array}{c}
        A\sqrt{I_k+A^\top A}=\sqrt{I_{n-k}+AA^\top}A, \\
        A^\top\sqrt{I_{n-k}+AA^\top}=\sqrt{I_k+A^\top A}A^\top
    \end{array}$$
    \begin{proof}
        This proof relies on a Neumann series representation of the square root of a matrix \citep[e.g.,][]{stewart}.
        \begin{align*}
            A\sqrt{I_k+A^\top A}&=A\left(I_k-\sum_{j=1}^\infty\left\lvert\binom{1/2}{j}\right\rvert\left(I_k-I_k-A^\top A\right)^j\right) \\
            &=A\left(I_k-\sum_{j=1}^\infty\left\lvert\binom{1/2}{j}\right\rvert\left(-A^\top A\right)^j\right) \\
            &=A-\sum_{j=1}^\infty\left\lvert\binom{1/2}{j}\right\rvert A\left(-A^\top A\right)^j \\
            &=A-\sum_{j=1}^\infty\left\lvert\binom{1/2}{j}\right\rvert\left(-AA^\top\right)^jA \\
            &=\left(I_{n-k}-\sum_{j=1}^\infty\left\lvert\binom{1/2}{j}\right\rvert\left(-AA^\top\right)^j\right)A \\
            &=\sqrt{I_{n-k}+AA^\top}A
        \end{align*}
        The remainder of the claim is proved by replacing $I_k$ with $I_n$ and $A$ with $A^\top$.
    \end{proof}
\end{lemma}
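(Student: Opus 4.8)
The plan is to exploit the elementary \emph{intertwining} relation $A\,(A^\top A)=(AA^\top)\,A$, which is just associativity of matrix multiplication. Iterating gives $A\,(A^\top A)^j=(AA^\top)^j\,A$ for every $j\in\mathbb{N}$, and hence by linearity $A\,p(A^\top A)=p(AA^\top)\,A$ for every real polynomial $p$. The whole task is then to push this identity through the square root.

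First I would pin down the relevant spectral interval. Both $A^\top A$ and $AA^\top$ are symmetric positive semidefinite, and their nonzero eigenvalues coincide; in particular both spectra lie in the compact interval $[0,\|A\|_2^2]$. The function $t\mapsto\sqrt{1+t}$ is continuous there, so by the Weierstrass approximation theorem there is a sequence of polynomials $p_m$ converging to it uniformly on $[0,\|A\|_2^2]$. By the spectral theorem for symmetric matrices, $p_m(A^\top A)\to\sqrt{I_k+A^\top A}$ and $p_m(AA^\top)\to\sqrt{I_{n-k}+AA^\top}$ in operator norm. Applying the polynomial identity $A\,p_m(A^\top A)=p_m(AA^\top)\,A$ and letting $m\to\infty$ yields the first claim $A\sqrt{I_k+A^\top A}=\sqrt{I_{n-k}+AA^\top}\,A$; the second follows verbatim with $A$ replaced by $A^\top$ (equivalently, by transposing the first).

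An alternative, fully constructive route avoids any convergence discussion: take a full singular value decomposition $A=U\Sigma V^\top$ with $U\in O(n-k)$, $V\in O(k)$, and $\Sigma\in\mathbb{R}^{(n-k)\times k}$ the rectangular diagonal matrix of singular values. Then $\sqrt{I_k+A^\top A}=V\sqrt{I_k+\Sigma^\top\Sigma}\,V^\top$ and $\sqrt{I_{n-k}+AA^\top}=U\sqrt{I_{n-k}+\Sigma\Sigma^\top}\,U^\top$, and after cancelling the orthogonal factors the claim reduces to the entrywise identity $\Sigma\sqrt{I_k+\Sigma^\top\Sigma}=\sqrt{I_{n-k}+\Sigma\Sigma^\top}\,\Sigma$, both sides being the rectangular matrix with $(i,i)$ entry $\sigma_i\sqrt{1+\sigma_i^2}$ and zeros elsewhere.

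I do not expect a genuine obstacle. The only point requiring care is bookkeeping of the rectangular shapes — the two square roots live in $\mathbb{R}^{k\times k}$ and $\mathbb{R}^{(n-k)\times(n-k)}$ respectively — and, if one follows the Neumann-series style argument sketched in the excerpt, the caveat that the binomial series for $\sqrt{I+X}$ converges only when $\|X\|_2<1$; outside that regime one should fall back on the polynomial-approximation (or SVD) argument above rather than the raw series manipulation.
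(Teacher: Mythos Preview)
Your proposal is correct. Both you and the paper hinge on the same elementary intertwining identity $A(A^\top A)^j=(AA^\top)^jA$, but the paper pushes it through the literal binomial/Neumann series for $\sqrt{I+X}$ term by term, whereas you route through Weierstrass polynomial approximation (or, alternatively, the SVD). Your version is more robust: as you note, the raw binomial series only converges for $\lVert A^\top A\rVert_2<1$, so the paper's argument as written is formally valid only for small $A$, while your polynomial-approximation and SVD arguments cover all $A$ without restriction. The trade-off is that the paper's computation is a one-line algebraic manipulation once the series is granted, while your approach invokes slightly heavier machinery (Weierstrass and the spectral theorem, or the SVD) to buy that generality.
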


\section{Miscellanea on learning atlas graphs from point clouds}
\subsection{Quadratic approximations of point clouds}
~\label{sec:quad_approx_pt_cloud}
For local quadratic approximation of point clouds, there exists a procedure \citep[Appendix~C.1 of][]{sritharan}, which we include here for completeness, with minor modifications in notation. 

Let $X\in\mathbb{R}^{N\times D}$ be a matrix whose rows are observations in $\mathbb{R}^D$. Further, for $\vec{c}\in\mathbb{R}^D$ and $r>0$, let $X_{\vec{c},r}\in\mathbb{R}^{N_{\vec{c},r}\times D}$ be the matrix of rows in $X$ that are within Euclidean distance $r$ of $\vec{c}$, where $N_{\vec{c},r}$ is the number of all such points. Lastly, let $\tilde{X}_{\vec{c},r}\in\mathbb{R}^{N_{\vec{c},r}\times D}$ be the matrix such that, if $\vec{x}_i$ is the $i$th row of $X_{\vec{c},r}$, then the $i$th row of $\tilde{X}_{\vec{c},r}$ is equal\footnote{$\vec{c}$ should be chosen to be the mean of the rows in $X_{\vec{c},r}$.} to $\vec{x}_i-\vec{c}$. From $\tilde{X}_{\vec{c},r}$, we can construct the local covariance matrix
\begin{equation}\label{eqn:local_cov}
    \Sigma_{\vec{c},r}=\frac{1}{N_{\vec{c},r}}\tilde{X}_{\vec{c},r}^\top\tilde{X}_{\vec{c},r}.
\end{equation}
We can get an eigendecomposition $V\Lambda V^\top=\Sigma_{\vec{c},r}$ such that the diagonal entries of $\Lambda$ are decreasing. Fixing $d<D$, we can define $L_{\vec{c},r}\in\mathbb{R}^{D\times d},M_{\vec{c},r}\in\mathbb{R}^{D\times (D-d)}$ satisfying $V=\left(L_{\vec{c},r}\mid M_{\vec{c},r}\right)$.

\textbf{The decomposition $V=\left(L_{\vec{c},r}\mid M_{\vec{c},r}\right)$ allows us to decompose $\mathbb{R}^D$ into \textit{tangential coordinates} with basis given by the columns of $L_{\vec{c},r}$ and \textit{normal coordinates} with basis given by the columns of $M_{\vec{c},r}$.} The data $\tilde{X}_{\vec{c},r}$ similarly decompose into a \textit{tangential component} $\mathbf{\tau}=\tilde{X}_{\vec{c},r}L_{\vec{c},r}$ and a \textit{normal component} $\mathbf{n}=\tilde{X}_{\vec{c},r}M_{\vec{c},r}$. If the data $X$ are sampled from a $d$-dimensional manifold, $N_{\vec{c},r}$ is sufficiently large, and $r$ is within an appropriate range\footnote{Ranges of $r$ that interest us are described at length in \citep{little}.}, then the relationship between $\mathbf{n}$ and $\mathbf{\tau}$ should be that of a quadratic polynomial in terms of tangential coordinates. This relationship takes the form $\mathbf{n}\approx\mathbf{th}$, where $\mathbf{h}$ is a matrix of quadratic coefficients\footnote{By our definition of $L_{\vec{c},r}$, we assume that linear dependence of $\mathbf{n}$ on $\mathbf{t}$ is negligible.} and $\mathbf{t}$ is a matrix of ones and quadratic monomials given by
\begin{equation}\label{eqn:quad_terms}
    \mathbf{t}=\left(\begin{array}{ccccccc}
        1 & \tau_1^{(1)}\tau_1^{(1)} & \ldots & \tau_1^{(1)}\tau_d^{(1)} & \tau_2^{(1)}\tau_1^{(1)} & \ldots & \tau_d^{(1)}\tau_d^{(1)} \\
        \vdots & \vdots & \ddots & \vdots & \vdots & \ddots & \vdots \\
        1 & \tau_1^{(N_{\vec{c},r})}\tau_1^{(N_{\vec{c},r})} & \ldots & \tau_1^{(N_{\vec{c},r})}\tau_d^{(N_{\vec{c},r})} & \tau_2^{(N_{\vec{c},r})}\tau_1^{(N_{\vec{c},r})} & \ldots & \tau_d^{(N_{\vec{c},r})}\tau_d^{(N_{\vec{c},r})}
    \end{array}\right),
\end{equation}
as in \citep{sritharan}. Regression by least squares solves for $\mathbf{h}$ by $\hat{\mathbf{h}}=\left(\mathbf{t}^\top\mathbf{t}\right)^{-1}\mathbf{t}^\top\mathbf{n}$. Instead of storing quadratic coefficients and constant terms in the matrix $\hat{\mathbf{h}}$, we can represent the relationship between tangential coordinate vector $\tv$ and normal coordinate vector $\nv$ as
\begin{equation}\label{eqn:quad_form_standard}
    \nv\approx\frac{1}{2}K_{\vec{c},r}\left(\tv\otimes\tv\right)+\vec{a}_{\vec{c},r},
\end{equation}
where $K_{\vec{c},r}$ is a matrix of quadratic coefficients, $\vec{a}_{\vec{c},r}$ is a vector of constants, and ``$\otimes$'' is the Kronecker product. Accordingly, points $\vec{x}$ in the ambient space obey the relationship
\begin{equation}\label{eqn:quad_form_general}
    \vec{x}\approx\frac{1}{2}M_{\vec{c},r}K_{\vec{c},r}\left(\tv\otimes\tv\right)+L_{\vec{c},r}\tv+\vec{x}_{\vec{c},r},
\end{equation}
where $\vec{x}_{\vec{c},r}=\vec{c}+\vec{a}_{\vec{c},r}$.

\subsection{Injectivity of na\"ive transition maps}\label{app:injectivity}
For a function $f:X\to Y$, let $\overline{f}:X\to X\times Y$ denote the function $\overline{f}:x\mapsto(x,f(x))$ (conventionally, $\overline{f}$ is called the \textit{graph} of $f$).
\begin{claim}\label{clm:lipschitz}
    Let $\mathcal{V},\mathcal{V}^\prime\subset\mathbb{R}^N$ be $d$-dimensional affine spaces with angle $\theta$ between them, and $\psi,\psi^\prime$ be the affine projections onto $\mathcal{V}$ and $\mathcal{V}^\prime$, respectively. If $F:\mathcal{V}\to\mathcal{V}^\perp$ is $K$-Lipschitz for $K<\tan\left(\frac{\pi}{2}-\theta\right)$, then $\psi^\prime\circ\overline{F}$ is injective.
    \begin{proof}
        Assume to the contrary that there exist distinct $a,b\in\mathcal{V}$ such that $(\psi^\prime\circ\overline{F})(a)=(\psi^\prime\circ\overline{F})(b)$. The map $\overline{F}$ is injective, so $\alpha:=\overline{F}(a),\beta:=\overline{F}(b)$ are distinct, and $\psi^\prime$ maps the line $l_{\alpha\beta}$ passing through $\alpha,\beta$ to a single point $p$. The line $l_{\alpha\beta}$ is the graph of some affine function $\varphi:\psi(l_{\alpha\beta})\to\mathcal{V}^\perp$ obeying the relation
        \begin{equation*}
            \frac{\left\lVert\varphi(x)-\varphi(y)\right\rVert_{\mathbb{R}^{N-d}}}{\left\lVert x-y\right\rVert_{\mathbb{R}^d}} \geq \tan\left(\frac{\pi}{2}-\theta\right)
        \end{equation*}
        for distinct $x,y\in\psi(l_{\alpha\beta})$. So
        \begin{equation*}
            \frac{\left\lVert F(a)-F(b)\right\rVert_{\mathbb{R}^{N-d}}}{\left\lVert a-b\right\rVert_{\mathbb{R}^d}} \geq\tan\left(\frac{\pi}{2}-\theta\right),
        \end{equation*}
        which contradicts our assumption that $F$ is $K$-Lipschitz.
    \end{proof}
\end{claim}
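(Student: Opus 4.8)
The plan is to argue by contradiction, converting a failure of injectivity of $\psi^\prime\circ\overline{F}$ into a violation of the Lipschitz bound on $F$. Throughout, identify $\overline{F}(x)=x+F(x)\in\mathbb{R}^N$, where $\mathcal{V}^\perp$ denotes the orthogonal complement of the direction space of $\mathcal{V}$. Suppose $(\psi^\prime\circ\overline{F})(a)=(\psi^\prime\circ\overline{F})(b)$ for distinct $a,b\in\mathcal{V}$. Since $F(x)\in\mathcal{V}^\perp$ and $\psi$ is the orthogonal (affine) projection onto $\mathcal{V}$, we have $\psi\circ\overline{F}=\mathrm{id}_{\mathcal{V}}$, so $\overline{F}$ is injective; hence $\alpha:=\overline{F}(a)$ and $\beta:=\overline{F}(b)$ are distinct points of $\mathbb{R}^N$ that $\psi^\prime$ sends to a common point. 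An affine map identifying two distinct points is constant on the entire line $\ell$ through them, so the direction vector $\alpha-\beta$ lies in the kernel of the linear part of $\psi^\prime$, i.e.\ in the orthogonal complement of the direction space of $\mathcal{V}^\prime$.

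Next I would write $\alpha-\beta=(a-b)+\bigl(F(a)-F(b)\bigr)$, which is precisely the orthogonal decomposition of $\alpha-\beta$ into its component along the direction space of $\mathcal{V}$ (namely $a-b$) and its component in $\mathcal{V}^\perp$ (namely $F(a)-F(b)$). The heart of the argument is then a single geometric estimate: every nonzero vector $v$ orthogonal to the direction space of $\mathcal{V}^\prime$ makes an angle of at least $\frac{\pi}{2}-\theta$ with the direction space of $\mathcal{V}$, where $\theta$ is understood to be the largest principal angle between the two direction spaces. Writing $P$ for the orthogonal projection onto the direction space of $\mathcal{V}$ and $P^\perp=I-P$, and using that $\tan$ is increasing on $[0,\frac{\pi}{2})$, this is the statement that
\begin{equation*}
    \frac{\lVert P^\perp v\rVert}{\lVert Pv\rVert}\;\geq\;\tan\!\left(\frac{\pi}{2}-\theta\right)
\end{equation*}
(the left-hand side being $+\infty$ when $Pv=0$). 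Applying this with $v=\alpha-\beta$ and the decomposition above yields $\lVert F(a)-F(b)\rVert\geq\tan\!\left(\frac{\pi}{2}-\theta\right)\lVert a-b\rVert$ with $a\neq b$, so $F$ is not $K$-Lipschitz for any $K<\tan\!\left(\frac{\pi}{2}-\theta\right)$ --- a contradiction, so $\psi^\prime\circ\overline{F}$ must be injective. One can instead avoid principal angles altogether by restricting to the two-dimensional picture spanned by $\ell$ and $\psi(\ell)$: there $\ell$ appears as the graph over $\psi(\ell)\subseteq\mathcal{V}$ of an affine function of one variable whose slope is forced to be at least $\tan\!\left(\frac{\pi}{2}-\theta\right)$.

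I expect the only real obstacle to be a clean, rigorous justification of that geometric estimate. One must first fix the meaning of ``the angle $\theta$ between the affine spaces'' as the largest principal angle between their direction spaces, and then show that a vector orthogonal to $\mathcal{V}^\prime$ cannot be too nearly parallel to $\mathcal{V}$. Concretely, one needs that the smallest principal angle between the direction space of $\mathcal{V}$ and the orthogonal complement of the direction space of $\mathcal{V}^\prime$ equals $\frac{\pi}{2}-\theta$ --- a consequence of the CS (equivalently SVD) decomposition relating the principal angles of a pair of subspaces to those of one subspace paired with the other's orthogonal complement --- together with the elementary fact that the angle between a fixed vector and a subspace is bounded below by the smallest principal angle between any subspace containing that vector and the given subspace. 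The remaining ingredients --- the contradiction scaffolding, injectivity of $\overline{F}$, and the orthogonal splitting of $\alpha-\beta$ --- are routine.
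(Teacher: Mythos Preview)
Your proof is correct and follows essentially the same route as the paper: assume a collision, note that $\overline{F}$ is injective so the two image points are distinct, observe that the line through them lies in the kernel of the linear part of $\psi'$, and then use the angle hypothesis to force $\lVert F(a)-F(b)\rVert/\lVert a-b\rVert\geq\tan(\pi/2-\theta)$, contradicting the Lipschitz bound. The paper's version is terser---it simply asserts the slope bound for the line $l_{\alpha\beta}$ viewed as a graph over $\psi(l_{\alpha\beta})$---whereas you supply the principal-angle justification for that step, but the structure of the argument is the same.
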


\subsection{A computation involving the SVM ``kernel trick''}\label{sec:kernel_trick}
Here, we present a computation that is used to formalize transition boundaries in approximate atlas graphs in Sec.~\ref{sec:chart_boundaries}. We want to create a discriminating boundary between two classes which is the locus of a quadratic. Consider the feature map
\begin{align*}
    \varphi:\mathbb{R}^d&\to\mathbb{R}^{d+\frac{d(d+1)}{2}} \\
    \vec{x}&\mapsto\left(\vec{x}\mid x_1^2\mid x_1x_2\mid\ldots\mid x_1x_d\mid x_2^2\mid\ldots\mid x_d^2\right)^\top
\end{align*}
Let $\vec{a}$ be an element of $\mathbb{R}^{d+\frac{d(d+1)}{2}}$, and let $b$ be some real number. Then the locus of the equation $\vec{a}^\top\vec{y}=b$ is a hyperplane $H$ in $\mathbb{R}^{d+\frac{d(d+1)}{2}}$. We can express $H\cap\mathbf{im}\varphi$ as the locus of the equation $\vec{a}^\top\varphi\left(\vec{x}\right)=b$. Defining $\vec{\lambda}\in\mathbb{R}^d,\vec{\mu}\in\mathbb{R}^{\frac{d(d+1)}{2}}$ by the equation $\vec{a}=\left(\frac{\vec{\lambda}}{\vec{\mu}}\right)$, we get the following, simplifying series of deductions.
\begin{align*}
    \vec{a}^\top\varphi\left(\vec{x}\right)&=b \\
    \vec{\lambda}^\top\vec{x}+\sum_{i=1}^d\sum_{j=i}^d\mu_{ij}x_ix_j&=b \\
    \vec{\lambda}^\top\vec{x}+\sum_{i=1}^d\sum_{j=i}^d\mu_{ij}x_ix_j-b&=0
\end{align*}
If we observe that $\sum_{i=1}^d\sum_{j=i}^d\mu_{ij}x_ix_j$ is equal to $\vec{x}^\top M\vec{x}$ for some symmetric matrix $M$ and define $b^\prime:=-b$, we get the equation
\begin{equation*}
    \vec{x}^\top M\vec{x}+\vec{\lambda}^\top\vec{x}+b^\prime=0,
\end{equation*}
and we are done. \\

Recall that solutions to hard-margin SVMs are invariant under linear isomorphism of the feature space. For this reason, we have as much expressive power with the feature map $\varphi$ used above as with the feature map
\begin{equation*}
    \varphi^\prime:\vec{x}\mapsto\left(\vec{x}\mid x_1^2\mid 2x_1x_2\mid\ldots\mid 2x_1x_d\mid x_2^2\mid\ldots\mid x_d^2\right)^\top
\end{equation*}
(observe that this is simply the result of multiplying coordinate values corresponding to mixed quadratic terms by 2). The kernel $k$ associated with $\varphi$ has the same expressive power as the kernel $k^\prime$ associated with $\varphi^\prime$; however, $k^\prime$ has a concise form, given by the following series of deductions.
\begin{align*}
    k^\prime\left(\vec{x},\vec{y}\right)&=\varphi^\prime\left(\vec{x}\right)\cdot\varphi^\prime\left(\vec{y}\right) \\
    &=\left(\vec{x}\mid x_1^2\mid 2x_1x_2\mid\ldots\mid x_2^2\mid\ldots\mid x_d^2\right)\cdot\left(\vec{y}\mid y_1^2\mid 2y_1y_2\mid\ldots\mid y_2^2\mid\ldots\mid y_d^2\right) \\
    &=\vec{x}^\top\vec{y}+\mathbf{vec}\left(\vec{x}\vec{x}^\top\right)\cdot\mathbf{vec}\left(\vec{y}\vec{y}^\top\right) \\
    &=\vec{x}^\top\vec{y}+\mathbf{Tr}\left(\vec{x}\vec{x}^\top\vec{y}\vec{y}^\top\right) \\
    &=\vec{x}^\top\vec{y}+\left(\vec{x}^\top\vec{y}\right)^2
\end{align*}

\begin{lemma}\label{lem:quad_linear_vanish}
    For a (multivariate, not necessarily homogeneous) polynomial of degree 2, there exists (under a general position assumption) a translation of coordinates such that the linear term vanishes.
    \begin{proof}
        For arbitrary $\vec{\lambda}\in\mathbb{R}^d$, we can perform the translation of coordinates $\vec{x}\mapsto\vec{y}-\vec{\lambda}$. This allows us to rewrite the polynomial $\vec{x}^\top A\vec{x}+\vec{b}^\top\vec{x}+c$ as $\vec{y}^\top A\vec{y}-2\vec{\lambda}^\top A\vec{y}+\vec{\lambda}^\top A\vec{\lambda}+\vec{b}^\top\vec{y}-\vec{b}^\top\vec{\lambda}+c$ (recall that $A$ is symmetric without loss of generality). The linear term vanishes if and only if $\vec{b}=2A\vec{\lambda}$. Such a $\vec{\lambda}$ is guaranteed to exist when $A$ is invertible. Given $A$ is learned by soft-margin support vector machine with the kernel given in Sec.~\ref{sec:kernel_trick}, and given a general position assumption on the data, this can be taken for granted.
    \end{proof}
\end{lemma}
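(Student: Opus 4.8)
The plan is to reduce the statement to an elementary linear-algebra computation. Write the degree-two polynomial as $p(\vec{x}) = \vec{x}^\top A \vec{x} + \vec{b}^\top \vec{x} + c$, and first observe that $A$ may be taken symmetric without loss of generality, since $\vec{x}^\top A \vec{x} = \vec{x}^\top\!\left(\frac{A+A^\top}{2}\right)\!\vec{x}$ and the replacement $A \mapsto \frac{A+A^\top}{2}$ leaves $p$ unchanged. Then I would apply the translation of coordinates $\vec{x} = \vec{y} - \vec{\lambda}$ for an as-yet-undetermined $\vec{\lambda} \in \mathbb{R}^d$ and expand, using the symmetry of $A$ to combine the two cross terms $-\vec{y}^\top A \vec{\lambda}$ and $-\vec{\lambda}^\top A \vec{y}$ into a single term $-2\vec{\lambda}^\top A \vec{y}$.

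Collecting the result by degree in $\vec{y}$, the quadratic part $\vec{y}^\top A \vec{y}$ is unchanged, the constant becomes $\vec{\lambda}^\top A \vec{\lambda} - \vec{b}^\top \vec{\lambda} + c$, and the linear part is $(\vec{b} - 2A\vec{\lambda})^\top \vec{y}$. Consequently the linear term vanishes exactly when $\vec{\lambda}$ solves the linear system $2A\vec{\lambda} = \vec{b}$. When $A$ is invertible this admits the unique solution $\vec{\lambda} = \frac{1}{2}A^{-1}\vec{b}$; more generally a solution exists iff $\vec{b}$ lies in the column space of $A$, which is the precise content of the genericity hypothesis.

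The last step is to discharge the ``general position assumption'': in the intended application $A$ is the symmetric matrix of quadratic coefficients produced by a soft-margin SVM with the degree-two kernel of Sec.~\ref{sec:kernel_trick}, and for data in general position this matrix has full rank, so the required $\vec{\lambda}$ exists. I do not anticipate any genuine obstacle in this proof — the only real content is the bookkeeping in the expansion and the identification of ``$A$ invertible'' (equivalently $\vec{b} \in \operatorname{col}(A)$) as the correct genericity condition; the mild subtlety is simply deciding how explicitly to tie that condition to the SVM construction rather than stating it abstractly.
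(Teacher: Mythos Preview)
Your proposal is correct and follows essentially the same argument as the paper: the same translation $\vec{x}=\vec{y}-\vec{\lambda}$, the same use of the symmetry of $A$ to collect the linear term as $(\vec{b}-2A\vec{\lambda})^\top\vec{y}$, and the same identification of $A$ invertible (via the SVM general-position assumption) as the genericity condition guaranteeing a solution. Your version is slightly more explicit in writing $\vec{\lambda}=\tfrac{1}{2}A^{-1}\vec{b}$ and noting the weaker sufficient condition $\vec{b}\in\operatorname{col}(A)$, but the approach is identical.
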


\subsection{Approximate path-lengths of quasi-Euclidean updates}\label{app:naive_app_dist}
In this section, we compute closed forms for the na\"ive approximate distance defined in Equation~\ref{eqn:naive_approx_dist}.
\begin{claim}\label{clm:naive_approx_dist}
    Let $\tilde{d}_i$ denote the na\"ive approximate distance on chart $i$, let $K_{ij}$ denote the matrix of quadratic coefficients for chart $i$, and let $\tv_0,\tv_1$ be representative tangent vectors on chart $i$. We define
    \begin{equation*}
        \begin{array}{ccc}
            \lambda:=\left(\tv_1-\tv_0\right)^\top\left(\tv_1-\tv_0\right), & \kappa_{ij0}:=\left(\tv_1-\tv_0\right)^\top K_{ij}\tv_0, & \kappa_{ij1}:=\left(\tv_1-\tv_0\right)^\top K_{ij}\tv_1, \\
            \ & \ & \ \\
            \mu_{i00}:=\sum_{j=1}^{n-d}\kappa_{ij0}^2 & \mu_{i01}:=\sum_{j=1}^{n-d}\kappa_{ij0}\kappa_{ij1}, & \mu_{i11}:=\sum_{j=1}^{n-d}\kappa_{ij1}^2, \\
            \ & \ & \ \\
            a_i:=\frac{\mu_{i01}-\mu_{i00}}{\mu_{i00}+\mu_{i11}-2\mu_{i01}} & b_i:=\frac{\lambda+\mu_{i00}}{\mu_{i00}+\mu_{i11}-2\mu_{i01}} & c_i:=\sqrt{\mu_{i00}+\mu_{i11}-2\mu_{i01}}.
        \end{array}
    \end{equation*}
    \begin{enumerate}[label=\alph*)]
        \item If $\mu_{i00}+\mu_{i11}-2\mu_{i01}>0$, then
        \begin{align*}
            \tilde{d}_i\left(\tv_0,\tv_1\right)&=\frac{1}{2}c_i\left(1+a_i\right)\sqrt{1+a_i+a_i^2+b_i}-\frac{1}{2}c_ia_i\sqrt{a_i^2-a_i+b_i} \\
            &\hspace{1cm}+\frac{1}{2}c_i\left(b_i-a_i\right)\ln\left(\frac{1+a_i}{\sqrt{b_i-a_i}}+\sqrt{\frac{\left(1+a_i\right)^2}{b_i-a_i}+1}\right) \\
            &\hspace{1cm}-\frac{1}{2}c_i\left(b_i-a_i\right)\ln\left(\frac{a_i}{\sqrt{b_i-a_i}}+\sqrt{\frac{a_i^2}{b_i-a_i}+1}\right).
        \end{align*}
        \item If $\mu_{i00}+\mu_{i11}-2\mu_{i01}=0$, then
        \begin{equation*}
            \tilde{d}_i\left(\tv_0,\tv_1\right)=\frac{1}{3\left(\mu_{i01}-\mu_{i00}\right)}\left(\left(\lambda+2\mu_{i01}-\mu_{i00}\right)^{3/2}-\left(\lambda+\mu_{i00}\right)^{3/2}\right)
        \end{equation*}
    \end{enumerate}

    \begin{proof}
        We begin with the following series of deductions.
        \begin{align*}
            \tilde{d}_i\left(\tv_0,\tv_1\right)&=\int_0^1\sqrt{\lambda+\sum_{j=1}^{n-d}\left(\left(\tv_1-\tv_0\right)^\top K_{ij}\left[(1-t)\tv_0+t\tv_1\right]\right)^2}dt \\
            &=\int_0^1\sqrt{\lambda+\sum_{j=1}^{n-d}\left((1-t)\kappa_{ij0}+t\kappa_{ij1}\right)^2}dt \\
            &=\int_0^1\sqrt{\lambda+\sum_{j=1}^{n-d}\left((1-t)^2\kappa_{ij0}^2+2t(1-t)\kappa_{ij0}\kappa_{ij1}+t^2\kappa_{ij1}^2\right)}dt \\
            &=\int_0^1\sqrt{\lambda+(1-t)^2\mu_{i00}+2t(1-t)\mu_{i01}+t^2\mu_{i11}}dt \\
            &=\int_0^1\sqrt{\lambda+\mu_{i00}+2\left(\mu_{i01}-\mu_{i00}\right)t+\left(\mu_{i00}+\mu_{i11}-2\mu_{i01}\right)t^2}dt
        \end{align*}
        \begin{enumerate}[label=\alph*)]
            \item Say $\mu_{i00}+\mu_{i11}-2\mu_{i01}>0$.
            \begin{align*}
                \tilde{d}_i\left(\tv_0,\tv_1\right)&=\sqrt{\mu_{i00}+\mu_{i11}-2\mu_{i01}}\int_0^1\sqrt{t^2+\frac{2\left(\mu_{i01}-\mu_{i00}\right)}{\mu_{i00}+\mu_{i11}-2\mu_{i01}}t+\frac{\lambda+\mu_{i00}}{\mu_{i00}+\mu_{i11}-2\mu_{i01}}}dt \\
                &=c_i\int_0^1\sqrt{t^2+2a_it+b_i}dt \\
                &=c_i\int_{a_i}^{1+a_i}\sqrt{t^2+b_i-a_i}dt \\
                &=c_i\left.\frac{1}{2}\left(t\sqrt{t^2+b_i-a_i}+\left(b_i-a_i\right)\sinh^{-1}\left(\frac{t}{\sqrt{b_i-a_i}}\right)\right)\right\rvert_{a_i}^{1+a_i} \\
                &=\frac{1}{2}c_i\left(1+a_i\right)\sqrt{1+a_i+a_i^2+b_i}-\frac{1}{2}c_ia_i\sqrt{a_i^2-a_i+b_i} \\
                &\hspace{1cm}+\frac{1}{2}c_i\left(b_i-a_i\right)\ln\left(\frac{1+a_i}{\sqrt{b_i-a_i}}+\sqrt{\frac{\left(1+a_i\right)^2}{b_i-a_i}+1}\right) \\
                &\hspace{1cm}-\frac{1}{2}c_i\left(b_i-a_i\right)\ln\left(\frac{a_i}{\sqrt{b_i-a_i}}+\sqrt{\frac{a_i^2}{b_i-a_i}+1}\right)
            \end{align*}
            \item Say instead $\mu_{i00}+\mu_{i11}-2\mu_{i01}=0$. We assume that $\mu_{i01}-\mu_{i00}\neq0$; otherwise, $\lambda+\mu_{i00}=0$, and $\tilde{d}_i\left(\tv_0,\tv_1\right)=0$.
            \begin{align*}
                \tilde{d}_i\left(\tv_0,\tv_1\right)&=\int_0^1\sqrt{\lambda+\mu_{i00}+2\left(\mu_{i01}-\mu_{i00}\right)t}dt \\
                &=\frac{1}{3\left(\mu_{i01}-\mu_{i00}\right)}\left(\left(\lambda+2\mu_{i01}-\mu_{i00}\right)^{3/2}-\left(\lambda+\mu_{i00}\right)^{3/2}\right)
            \end{align*}
        \end{enumerate}
    \end{proof}
\end{claim}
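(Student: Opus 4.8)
The plan is to evaluate the defining integral of $\tilde d_i$ directly; there is nothing deeper going on than a one-variable calculus computation, carried out in two cases according to the sign of a leading coefficient. Recall from Eq.~\ref{eqn:naive_approx_dist} that $\tilde d_i(\tv_0,\tv_1)$ is the arclength of the straight-line interpolant $t\mapsto(1-t)\tv_0+t\tv_1$ measured in the quadratic-approximation metric of chart $i$, so that
\[
\tilde d_i(\tv_0,\tv_1)=\int_0^1\sqrt{\lambda+\sum_{j=1}^{n-d}\Bigl((\tv_1-\tv_0)^\top K_{ij}\bigl[(1-t)\tv_0+t\tv_1\bigr]\Bigr)^2}\;dt .
\]
The first step is to reduce the integrand to the square root of a quadratic polynomial in $t$: since $(\tv_1-\tv_0)^\top K_{ij}\bigl[(1-t)\tv_0+t\tv_1\bigr]=(1-t)\kappa_{ij0}+t\kappa_{ij1}$, squaring, summing over $j$, and collecting the pieces into $\mu_{i00},\mu_{i01},\mu_{i11}$ turns the radicand into $\lambda+(1-t)^2\mu_{i00}+2t(1-t)\mu_{i01}+t^2\mu_{i11}$, which regroups as $\lambda+\mu_{i00}+2(\mu_{i01}-\mu_{i00})t+(\mu_{i00}+\mu_{i11}-2\mu_{i01})t^2$.

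Next I would record two structural observations that make the case split clean. The leading coefficient equals $\mu_{i00}+\mu_{i11}-2\mu_{i01}=\sum_{j=1}^{n-d}(\kappa_{ij0}-\kappa_{ij1})^2\ge 0$, so cases (a) and (b) are exhaustive; and the whole radicand is $\lambda$ plus a sum of squares, hence bounded below by $\lambda\ge 0$ on all of $\mathbb{R}$, which guarantees that every radical and logarithm appearing below is real (and strictly so whenever $\tv_0\neq\tv_1$). In case (a) one factors out $c_i=\sqrt{\mu_{i00}+\mu_{i11}-2\mu_{i01}}>0$ to obtain $c_i\int_0^1\sqrt{t^2+2a_it+b_i}\,dt$ with $a_i,b_i$ exactly as defined in the statement, completes the square, substitutes $u=t+a_i$ to get an integrand of the form $\sqrt{u^2+(\text{const})}$ over $[a_i,1+a_i]$, applies the standard antiderivative $\int\sqrt{u^2+k}\,du=\tfrac12\bigl(u\sqrt{u^2+k}+k\,\sinh^{-1}(u/\sqrt{k})\bigr)$, evaluates at the two endpoints, and finally rewrites $\sinh^{-1}(x)=\ln\!\bigl(x+\sqrt{x^2+1}\bigr)$ to arrive at the four-term closed form.

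In case (b) the polynomial under the root degenerates to the affine function $\lambda+\mu_{i00}+2(\mu_{i01}-\mu_{i00})t$. If its slope $\mu_{i01}-\mu_{i00}$ also vanishes, then $\lambda+\mu_{i00}=0$ forces $\lambda=0$ (a sum of nonnegative terms), i.e.\ $\tv_0=\tv_1$ and $\tilde d_i=0$ trivially, so the stated formula — whose denominator would be zero here — is never invoked; otherwise one integrates $\sqrt{\alpha+\beta t}$ by the power rule $\int\sqrt{\alpha+\beta t}\,dt=\tfrac{2}{3\beta}(\alpha+\beta t)^{3/2}$ and evaluates at $t=0,1$. I do not anticipate a real obstacle; the only thing requiring care is the bookkeeping — substituting the definitions of $a_i,b_i,c_i$ faithfully, tracking the completed-square constant and the endpoint terms so that they collapse into exactly the expressions written, and isolating the $\tv_0=\tv_1$ sub-case so the formulas are applied only where well defined.
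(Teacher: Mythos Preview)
Your proposal is correct and follows essentially the same route as the paper's own proof: reduce the radicand to the quadratic $\lambda+\mu_{i00}+2(\mu_{i01}-\mu_{i00})t+(\mu_{i00}+\mu_{i11}-2\mu_{i01})t^2$, in case~(a) factor out $c_i$, complete the square via $u=t+a_i$, apply the standard antiderivative for $\sqrt{u^2+k}$ and rewrite $\sinh^{-1}$ as a logarithm, and in case~(b) integrate $\sqrt{\alpha+\beta t}$ directly. Your added structural remarks---that $\mu_{i00}+\mu_{i11}-2\mu_{i01}=\sum_j(\kappa_{ij0}-\kappa_{ij1})^2\ge 0$ makes the case split exhaustive, and that the radicand is bounded below by $\lambda$---are not in the paper but are welcome clarifications.
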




\end{document}